\documentclass{article}

\usepackage[preprint]{neurips_2026}
\usepackage[normalem]{ulem}

\usepackage{amsmath,amssymb,amsfonts,amsthm,mathtools}
\usepackage{multirow}
\usepackage{aliascnt}

\usepackage{algorithm,algpseudocode}
\algblock{ParFor}{EndParFor}
\algnewcommand\algorithmicparfor{\textbf{parfor}}
\algnewcommand\algorithmicpardo{\textbf{do}}
\algnewcommand\algorithmicendparfor{\textbf{end parfor}}
\algrenewtext{ParFor}[1]{\algorithmicparfor\ #1\ \algorithmicpardo}
\algrenewtext{EndParFor}{\algorithmicendparfor}

\usepackage{booktabs,graphicx,subcaption}
\usepackage{array}
\usepackage{xcolor}
\usepackage{fontawesome5}
\newcommand{\hl}[1]{#1}
\newcommand{\hlpar}[1]{#1}

\usepackage{hyperref}
\hypersetup{colorlinks=true,linkcolor=blue,urlcolor=blue,citecolor=blue}
\usepackage[capitalize,noabbrev]{cleveref}
\crefname{assumption}{Assumption}{Assumptions}
\crefname{theorem}{Theorem}{Theorems}
\crefname{lemma}{Lemma}{Lemmas}
\crefname{proposition}{Proposition}{Propositions}

\usepackage{enumitem}
\usepackage{nicefrac}
\usepackage{microtype}

\newtheorem{theorem}{Theorem}

\newaliascnt{lemma}{theorem}
\newtheorem{lemma}[lemma]{Lemma}
\aliascntresetthe{lemma}

\newaliascnt{proposition}{theorem}
\newtheorem{proposition}[proposition]{Proposition}
\aliascntresetthe{proposition}

\newaliascnt{corollary}{theorem}
\newtheorem{corollary}[corollary]{Corollary}
\aliascntresetthe{corollary}

\theoremstyle{definition}
\newaliascnt{definition}{theorem}

\aliascntresetthe{definition}

\newtheorem{assumption}{Assumption}

\theoremstyle{remark}
\newtheorem{remark}{Remark}

\newcommand{\R}{\mathbb{R}}

\newcommand{\N}{\mathbb{N}}
\newcommand{\bfX}{\mathbf{X}}

\newcommand{\bfY}{\mathbf{Y}}
\newcommand{\bfW}{\mathbf{W}}
\newcommand{\bfA}{\mathbf{A}}
\newcommand{\bfB}{\mathbf{B}}
\newcommand{\bfM}{\mathbf{M}}
\newcommand{\bfN}{\mathbf{N}}
\newcommand{\bfI}{\mathbf{I}}
\newcommand{\bfa}{\mathbf{a}}
\newcommand{\bfb}{\mathbf{b}}
\newcommand{\bfu}{\mathbf{u}}
\newcommand{\bfv}{\mathbf{v}}

\newcommand{\bfDelta}{\boldsymbol{\Delta}}
\newcommand{\bftheta}{\boldsymbol{\theta}}
\newcommand{\calE}{\mathcal{E}}
\newcommand{\calF}{\mathcal{F}}
\newcommand{\calT}{\mathcal{T}}
\newcommand{\calL}{\mathcal{L}}
\newcommand{\calO}{\mathcal{O}}
\newcommand{\norm}[1]{\left\|#1\right\|}

\newcommand{\svmax}{\sigma_{\max}}
\newcommand{\svmin}{\sigma_{\min}}
\newcommand{\topone}{\texttt{Rank1}}
\newcommand{\What}{\hat{W}}
\newcommand{\calB}{\mathcal{B}}
\DeclareMathOperator*{\argmin}{arg\,min}
\newcommand{\adapad}{\textsc{AdaPaD}}

\title{\adapad{}: Adaptive Parallel Deflation for\\PEFT with Self-Correcting Rank Discovery}

\author{
  Barbara Su \quad Fangshuo Liao \quad Anastasios Kyrillidis \\
  Department of Computer Science \\
  Rice University \\
  \texttt{\{bs82,\,Fangshuo.Liao,\,anastasios\}@rice.edu}
}

\begin{document}
\maketitle
% First pass done.
\begin{abstract}
Fine-tuning large language models with LoRA~\citep{HuLoRA2022} requires choosing a rank $r$ before training starts.
Existing approaches either extract rank-1 components sequentially~\citep{LiaoErrorProp2023,VandchaliOneRank2024}, freezing each component's error permanently into every subsequent residual, or optimize the full low-rank factorization jointly~\citep{dEBORA2025,ARDLoRA2025} with guarantees that describe only the joint update, not individual rank-1 directions.
We present \adapad{} (Adaptive Parallel Deflation), which trains all rank-1 components simultaneously: each worker refines its component against a deflation target built from the latest estimates of all predecessors, and as those estimates improve, the targets improve too.
We call this property \emph{self-correction}: deflation errors converge to zero over rounds rather than persisting as fixed residuals.
On top of this backbone, \adapad{} adds \emph{advance learning} (private pre-training before activation) and \emph{per-module dynamic rank discovery} (importance-based growth until a shared budget is exhausted), making the rank distribution an output rather than an input.
We prove that every component's error decays exponentially after a warm-up period, with a generalization bound that splits into a vanishing algorithmic term and an irreducible statistical floor.
Empirically, \adapad{} is competitive with adaptive-rank LoRA baselines on GLUE with DeBERTaV3-base at matched parameter budgets, and competitive with fixed-rank LoRA on Qwen3-0.6B SQuAD/SQuAD~v2 while deploying an adapter that is on average $30.7\%$ smaller.~\href{https://anonymous.4open.science/r/ParallelLinearRegression-C604/README.md}{\faGithub}
\end{abstract}

% Done, except for contribution. Modify it after finalizing experimental section

\section{Introduction}\label{sec:intro}

Parameter-efficient fine-tuning (PEFT) is the standard way to adapt large pretrained models to downstream tasks. Among the many approaches~\citep{HoulsbyAdapter2019,LiLiangPrefix2021,ZakenBitFit2022}, Low-Rank Adaptation (LoRA)~\citep{HuLoRA2022} stands out for its simplicity: for a pretrained weight $\bfW \in \R^{m\times d}$, LoRA freezes $\bfW$ and trains a low-rank update $\Delta\bfW = \bfB\bfA^\top$ with $\bfB \in \R^{m\times r}$, $\bfA \in \R^{d\times r}$ and rank $r \ll \min(m,d)$ a hyperparameter.
LoRA is effective in practice, but leaves a fundamental design choice unresolved: how should $r$ be chosen?

In practice $r$ is set by hand and applied uniformly to every layer~\citep{HuLoRA2022,HayouLoRAPlus2024,LiuDoRA2024}, yet the appropriate rank varies across modules, tasks, and training stages.
Adaptive-rank methods address this in different ways: AdaLoRA~\citep{ZhangAdaLoRA2023} prunes from high rank using SVD importance, IncreLoRA~\citep{ZhangIncreLoRA2023} grows from low rank by activating reserve components, DyLoRA~\citep{ValipourDyLoRA2023} trains across a range of ranks, and SoRA~\citep{DingSoRA2023} learns sparse gates.
Recent theoretical work analyzes the joint $\bfB\bfA^\top$ update: dEBORA~\citep{dEBORA2025} uses bilevel Frank-Wolfe with rank-identifiability guarantees, and ARD-LoRA~\citep{ARDLoRA2025} adds learnable per-head scaling with a stationary-point theorem.
Both treat the full low-rank matrix as one object; we instead decompose the problem into independent rank-1 subproblems trained in parallel and analyze each separately, yielding a guarantee joint optimization cannot express: \emph{self-correction}, where the deflation mismatch in each rank-1 worker provably vanishes over rounds, so individual rank-1 directions can be certified correct rather than only their joint sum.

\textbf{Key insight: parallel deflation with self-correction.}
In sequential deflation, errors in early components are permanently frozen and compound multiplicatively through the spectral gap~\citep{VandchaliOneRank2024}.
We remove this dependency by training all rank-1 components \emph{simultaneously}: each is assigned to a dedicated worker that, at every communication round, rebuilds its deflation target from the \emph{latest} predecessor estimates, so early mistakes are progressively corrected rather than locked in.
We call this property \emph{self-correction} and prove that the deflation mismatch $\|\bfY_{k,\ell} - \bfY_k^\star\|_F$ vanishes as $\ell$ grows (\cref{prop:self_correction}), in contrast to sequential deflation~\citep{VandchaliOneRank2024,LiaoErrorProp2023} where the mismatch is fixed and generically nonzero.

The bilinear setting differs from symmetric eigendecomposition~\citep{LiaoParallelPCA2025}: the asymmetric structure $\bfb\bfa^\top\bfX$ calls for Wedin perturbation theory~\citep{Wedin1972} and a data-dependent recovery scale. Two further mechanisms make parallel deflation practical for LoRA: \emph{advance learning} (private warm-up of dormant components against improving targets) and \emph{per-module dynamic rank discovery} (importance-driven growth under a global budget).

\textbf{Contributions.}
\begin{enumerate}[leftmargin=*,label=\textbf{C\arabic*.}, nosep]
\item \textbf{\adapad{} algorithm} (\cref{alg:parallel_rank1,alg:adapad}). Parallel rank-1 deflation with \emph{advance learning} (private pre-training before activation) and \emph{per-module dynamic rank discovery} (importance-based growth under a shared budget); the rank distribution is an output, not an input.
\item \textbf{Theoretical guarantees.}
Exponential convergence with vanishing deflation mismatch (\cref{thm:parallel_convergence,prop:self_correction}) and a weight-recovery bound that splits into a vanishing algorithmic term plus an irreducible statistical floor (\cref{thm:gen_noiseless_parallel,thm:gen_noisy_parallel}); to our knowledge, the first self-correcting guarantee for parallel rank-1 deflation in bilinear regression, structurally distinct from the joint-update guarantees of~\citep{dEBORA2025,ARDLoRA2025}.
\item \textbf{Empirical evaluation.}
At approximately matched budgets ($0.32$--$0.47$\,M) on GLUE with DeBERTaV3-base, \adapad{} is competitive with IncreLoRA, AdaLoRA, SoRA, and dEBORA across all eight tasks (\cref{sec:exp_glue}); on Qwen3-0.6B SQuAD/SQuAD~v2 it matches fixed-rank LoRA on F1/EM with adapters $30.7\%$ smaller on average (\cref{sec:exp_squad}); component-parallel sharding delivers up to $2.66{\times}$ speedup on $4{\times}$H200 (\cref{sec:exp_qwen_scaling}).
\end{enumerate}

\textbf{Related work.}

\emph{Deflation-based methods.}
Hotelling's deflation~\citep{Hotelling1933} is the classical approach for extracting principal components; \citet{Mackey2008} extended it to sparse PCA.
Error propagation in inexact sequential deflation was analyzed by \citet{LiaoErrorProp2023} and \citet{VandchaliOneRank2024}.
The model-parallel perspective was introduced by EigenGame~\citep{GempEigenGame2021,GempEigenGameUnloaded2022}, which casts top-$K$ PCA as a multi-player game.
\citet{LiaoParallelPCA2025} gave the first provable parallel deflation algorithm for PCA; we develop parallel deflation for the bilinear regression setting, where the asymmetric structure $\bfb\bfa^\top\bfX$ requires Wedin-type perturbation analysis and the self-correction guarantee is tied to a data-dependent recovery scale.
ReLoRA~\citep{LialinReLoRA2023} trains a sequence of low-rank updates that sum to a high-rank result, which is sequential deflation under a different name; our approach differs by running all components in parallel with provable self-correction.

\emph{Low-rank optimization.}
Problem~\eqref{eq:main_problem} has been attacked via nuclear norm minimization~\citep{CandesRecht2009,Recht2011}, projected gradient descent~\citep{JainGD2010,KyrillidisIHT2014}, and factored methods~\citep{BurerNonlinear2003,TuLowRank2016,KyrillidisProvable2018, park2017non, bhojanapalli2016dropping, park2018finding, kim2023fast, geyer2020low, hsieh2018non}.
These optimize the full low-rank matrix jointly but do not decompose into independent rank-1 subproblems.
\citet{LoRAConvergence2025} show that fixed-rank LoRA training converges to a low-rank global minimum or diverges detectably, but this addresses the fixed-rank setting without rank adaptation.

\emph{Adaptive-rank methods with convergence theory.}
dEBORA~\citep{dEBORA2025} formulates rank selection as bilevel optimization: an upper-level Frank-Wolfe on an $\ell_1$-constrained simplex selects singular-value scaling factors while a lower-level solver trains the basis matrices.
dEBORA proves $\calO(1/\sqrt{T})$ convergence of the upper level and, notably, finite-time rank identification (the algorithm provably discovers which singular values are zero).
As a joint optimization method, dEBORA does not decompose into rank-1 subproblems and therefore does not face the deflation error propagation problem; its convergence guarantee applies to the upper-level simplex variable, not to the bilinear $\bfB\bfA^\top$ factorization itself.
ARD-LoRA~\citep{ARDLoRA2025} introduces learnable per-head scaling factors with $\calO(1/T)$ convergence to a stationary point via a standard descent lemma; the guarantee is generic smooth optimization and does not exploit the bilinear structure. Neither decomposes into rank-1 subproblems nor exhibits self-correction (\cref{prop:self_correction}).

\section{Background and Problem Setup}\label{sec:prelim}

\textbf{Low-Rank Regression and LoRA.}\label{sec:setup}
We study the linear model $\bfY = \bfW^\star \bfX + \calE$,
where $\bfW^\star \in \R^{m \times d}$ has rank $r^\star$, $\bfX \in \R^{d \times n}$ collects input features, $\bfY \in \R^{m \times n}$ is the response, and $\calE$ is noise ($\calE = \mathbf{0}$ in the noiseless case; $\calE_{ij} \sim \mathcal{N}(0, \varepsilon^2)$ in the noisy case).
The goal is to solve
\begin{equation}\label{eq:main_problem}
\min_{\bfA \in \R^{r \times d},  \bfB \in \R^{m \times r}} \frac{1}{2}\norm{\bfY - \bfB\bfA\bfX}_F^2.
\end{equation}
This directly models LoRA: for a pretrained layer with weight $\bfW_0$, taking $\bfX$ as the input activations and $\bfY = \Delta\bfW \cdot \bfX$ reduces LoRA training to~\eqref{eq:main_problem} with $\bfW^\star = \Delta\bfW = \bfB\bfA^\top$.

\textbf{Sequential vs.\ Parallel Deflation.}
\emph{Sequential deflation} extracts components one at a time: set $\bfY_1=\bfY$, compute $(\bfa_k, \bfb_k) = \topone(\bfY_k, \bfX, T_k)$ with budget $T_k$, then $\bfY_{k+1} = \bfY_k - \bfb_k \bfa_k^\top \bfX$. Two defects: wall-clock time $\sum_k T_k$, and frozen errors that compound multiplicatively~\citep{VandchaliOneRank2024}.
\emph{Parallel deflation} assigns each component to its own worker and indexes rounds by $\ell = 1, \ldots, L$. At round $\ell$, worker $k$ holds $(\bfa_{k,\ell}, \bfb_{k,\ell})$ and uses the latest predecessor estimates to recompute $\bfY_{k,\ell} := \bfY - \sum_{k'<k}\bfb_{k',\ell-1}\bfa_{k',\ell-1}^\top\bfX$. The starred quantities $(\bfa_k^\star, \bfb_k^\star)$ denote the \emph{ideal} components of the clean target (\cref{sec:error_decomp}). \cref{tab:deflation_targets} contrasts the three regimes: the parallel target $\bfY_{k,\ell}$ improves over rounds rather than being fixed.

\begin{table}[h]
\centering
\small
\renewcommand{\arraystretch}{1.2}
\caption{Deflation targets for component $k$ under three deflation regimes. Exact sequential~\citep{Hotelling1933,Mackey2008} uses ideal predecessor outputs (unavailable in practice, since they require ground-truth singular vectors); inexact sequential~\citep{LiaoErrorProp2023,VandchaliOneRank2024} freezes approximate predecessors; parallel deflation refreshes the target every round with the latest predecessor estimates, so the target itself converges as $\ell\to\infty$.}\label{tab:deflation_targets}
\begin{tabular}{lll}
\toprule
\textbf{Method} & \textbf{Deflation target for component $k$} & \textbf{Property} \\
\midrule
Exact Sequential & $\bfY_k^\star = \bfY - \sum_{k'<k} \bfb_{k'}^\star \bfa_{k'}^{\star\top} \bfX$ & Fixed, ideal \\
Inexact Sequential & $\bfY_k = \bfY - \sum_{k'<k} \bfb_{k'} \bfa_{k'}^\top \bfX$ & Fixed, approximate \\
\textbf{Parallel (Ours)} & $\bfY_{k,\ell} = \bfY - \sum_{k'<k} \bfb_{k',\ell-1} \bfa_{k',\ell-1}^\top \bfX$ & \textbf{Improves over rounds} \\
\bottomrule
\end{tabular}
\end{table}

\textbf{Ideal Rank-1 Fit and Error Decomposition.}\label{sec:error_decomp}
Fix $k \in [r]$ and round $\ell$. The \emph{ideal rank-1 fit} of $\bfY_{k,\ell}$ is its best rank-1 Frobenius approximation,
\begin{equation}\label{eq:ideal_fit}
(\bar{\bfa}_{k,\ell}, \bar{\bfb}_{k,\ell}) \in \argmin_{\bfa \in \R^d,  \bfb \in \R^m} \norm{\bfY_{k,\ell} - \bfb\bfa^\top\bfX}_F,
\end{equation}
satisfying $\bar{\bfb}_{k,\ell}\bar{\bfa}_{k,\ell}^\top\bfX = \sigma_{1_k,\ell}\bfu_{1_k,\ell}\bfv_{1_k,\ell}^\top$ when the top triplet is unique. Define the \emph{clean} deflation target $\bfY_k^\star = \bfY - \sum_{k'<k}\bfb_{k'}^\star\bfa_{k'}^{\star\top}\bfX$ (exact sequential, top row of \cref{tab:deflation_targets}); its best rank-1 fit defines the \emph{ideal $k$-th component} $(\bfa_k^\star, \bfb_k^\star)$ with $\bfb_k^\star\bfa_k^{\star\top}\bfX = \sigma_k^\star\bfu_k^\star\bfv_k^{\star\top}$ via Eckart--Young--Mirsky~\citep{EckartYoung1936,Mirsky1960}.
Following the numerical-vs.-mismatch split of~\citep{LiaoErrorProp2023,VandchaliOneRank2024,LiaoParallelPCA2025}, we decompose worker $k$'s error at round $\ell$ into the \emph{numerical error}, \emph{deflation mismatch}, and \emph{total error}:
\begin{align}
D_{k,\ell} & \triangleq  \norm{\bfb_{k,\ell}\bfa_{k,\ell}^\top\bfX - \bar{\bfb}_{k,\ell}\bar{\bfa}_{k,\ell}^\top\bfX}_F, \quad \quad \text{(Numerical error)} \label{eq:def_D}\\
B_{k,\ell} & \triangleq  \norm{\bar{\bfb}_{k,\ell}\bar{\bfa}_{k,\ell}^\top\bfX - \bfb_k^\star\bfa_k^{\star\top}\bfX}_F,  \quad ~~\quad \text{(Deflation mismatch)} \label{eq:def_B}\\
G_{k,\ell} & \triangleq  \norm{\bfb_{k,\ell}\bfa_{k,\ell}^\top\bfX - \bfb_k^\star\bfa_k^{\star\top}\bfX}_F. \quad \quad ~~~\text{(Total error)}  \label{eq:def_G}
\end{align}
The triangle inequality gives $G_{k,\ell} \leq D_{k,\ell} + B_{k,\ell}$: $D_{k,\ell}$ vanishes as the subroutine iterates, $B_{k,\ell}$ shrinks as predecessors improve, and our proof shows both vanish as $\ell\to\infty$.

\subsection{Assumptions}\label{sec:assumptions}

\begin{assumption}[Rank-1 Subroutine Contraction]\label{asump:contraction}
For each component $k$, there exists a \emph{contraction factor} $\calF_k \in (0,1)$ such that the rank-1 subroutine, warm-started from $(\bfa_{k,\ell-1}, \bfb_{k,\ell-1})$, satisfies for all $\ell \geq k$:
$\norm{\bfb_{k,\ell}\bfa_{k,\ell}^\top\bfX - \bar{\bfb}_{k,\ell}\bar{\bfa}_{k,\ell}^\top\bfX}_F
\leq \calF_k \norm{\bfb_{k,\ell-1}\bfa_{k,\ell-1}^\top\bfX - \bar{\bfb}_{k,\ell}\bar{\bfa}_{k,\ell}^\top\bfX}_F$,
where $(\bar{\bfa}_{k,\ell}, \bar{\bfb}_{k,\ell})$ is the ideal rank-1 fit defined in~\eqref{eq:ideal_fit}.
\end{assumption}

\begin{assumption}[Spectral Gap]\label{asump:gap}
$\bfY = \bfW^\star\bfX$ has distinct singular values $\sigma_1^\star > \sigma_2^\star > \cdots > \sigma_r^\star > 0$ with gaps $\calT_k^\star := \min\{\min_{j>k}|\sigma_k^\star - \sigma_j^\star|, \sigma_k^\star\} > 0$.
We normalize $\sigma_1^\star = 1$.
\end{assumption}

\begin{assumption}[Bounded Rank-1 Recovery via Projection]\label{asump:bounded_recovery}
Parameter $Q$ in \Cref{alg:parallel_rank1} satisfies $Q \geq 2\sigma_1^\star$.
\end{assumption}

\hlpar{The projection in \cref{line:project} together with the assumption above implies the per-component magnitude bound $G_{k,\ell} \leq \mathcal{R}_k := Q + \sigma_k^\star$ used throughout the proofs (\cref{lem:bounded_recovery} in \cref{app:proofs}).}

All three assumptions are standard: \cref{asump:contraction} is the rank-1 subroutine regularity condition~\citep{LiaoParallelPCA2025}, holding for ALS ($\calF_k$ controlled by the local spectral ratio) and factored GD at small step size~\citep{JainNetrapalliSanghavi2013,ZhengLafferty2015,KyrillidisProvable2018,park2017non,bhojanapalli2016dropping,park2018finding,kim2023fast,geyer2020low,hsieh2018non} (pseudocode in \cref{app:subroutines}); \cref{asump:gap} is the spectral-gap condition for deflation~\citep{LiaoErrorProp2023,VandchaliOneRank2024,LiaoParallelPCA2025} ($\sigma_1^\star=1$ is a rescaling); \cref{asump:bounded_recovery} is automatic given the projection step in \cref{alg:parallel_rank1}, and the resulting magnitude bound (\hl{\cref{lem:bounded_recovery}}) is in the same spirit as bounded-iterate constructions in nonconvex factor recovery~\citep{TuBoczarSimchowitzSoltanolkotabiRecht2016,bhojanapalli2016dropping,park2018finding,GeJinZheng2017}.
In practice, \cref{asump:contraction} only needs to hold for $\ell \geq s_k$ (the warm-up round in \cref{thm:parallel_convergence} via \cref{eq:sk_condition}), by which point predecessor errors preserve the top-triplet structure of $\bfY_{k,\ell}$. Background perturbation results (Weyl, Wedin) are in \cref{app:background}.

\section{\adapad{}: The Algorithm}\label{sec:algorithm}

\begin{algorithm}[t]
\caption{Parallel Rank-1 Deflation for Low-Rank Regression}\label{alg:parallel_rank1}
\begin{algorithmic}[1]
\Require $\bfX \in \R^{d \times n}$, $\bfY \in \R^{m \times n}$, rank $r$, local iterations $T$, rounds $L \geq r$, projection radius $Q \geq \sigma_1^\star$
\Ensure Components $\{(\bfa_{k,L}, \bfb_{k,L})\}_{k=1}^r$
\For{$k = 1, \ldots, r$}
  \State Randomly initialize $(\bfa_{k,0}, \bfb_{k,0})$
\EndFor
\For{$\ell = 1, \ldots, L$}
  \ParFor{$k = 1, \ldots, r$} \Comment{All $r$ workers in parallel}
    \If{$k \leq \ell$} \Comment{Staggered activation}
      \State Receive $\{(\bfa_{k',\ell-1}, \bfb_{k',\ell-1})\}_{k'<k}$
      \State $\bfY_{k,\ell} \gets \bfY - \sum_{k'<k} \bfb_{k',\ell-1} \bfa_{k',\ell-1}^\top \bfX$ \label{line:deflate} \Comment{Deflate with latest estimates}
      \State $(\bfa_{k,\ell}, \bfb_{k,\ell}) \gets \topone(\bfY_{k,\ell}, \bfX, T, \bfa_{k,\ell-1}, \bfb_{k,\ell-1})$ \label{line:solve} \Comment{Warm-started}
      \State $\bfb_{k,\ell}\bfa_{k,\ell}^\top\bfX \gets \Pi_Q\!\bigl(\bfb_{k,\ell}\bfa_{k,\ell}^\top\bfX\bigr)$ \label{line:project} \Comment{Project onto Frobenius ball of radius $Q$}
      \State Broadcast $(\bfa_{k,\ell}, \bfb_{k,\ell})$
    \Else
      \State $(\bfa_{k,\ell}, \bfb_{k,\ell}) \gets (\bfa_{k,\ell-1}, \bfb_{k,\ell-1})$
    \EndIf
  \EndParFor
\EndFor
\end{algorithmic}
\end{algorithm}

\textbf{Base Algorithm: Parallel Rank-1 Deflation.}
\Cref{alg:parallel_rank1} parallelizes inexact sequential deflation~\citep{LiaoErrorProp2023,VandchaliOneRank2024,LiaoParallelPCA2025} for the bilinear model; we assume one worker per component for exposition (when $P<r$, components are sharded round-robin via a single \texttt{all\_gather} per round). Three mechanisms drive convergence: \emph{staggered activation} (worker $k$ begins at $\ell=k$), \emph{warm-starting} from $(\bfa_{k,\ell-1}, \bfb_{k,\ell-1})$, and \emph{deflation with current estimates}. The projection on \cref{line:project} caps each rank-1 estimate's Frobenius norm at $Q\geq\sigma_1^\star$, making the bounded-recovery bound \hl{\cref{lem:bounded_recovery}} automatic without affecting fixed points (the ideal estimate has norm $\sigma_k^\star\leq Q$). \hl{The projection in \cref{line:project} prevents the recovered component $\bfb_{k,\ell}\bfa_{k,\ell}^\top\bfX$ from blowing up while the deflated target $\bfY_{k,\ell}$ has not yet tightened around $\bfY_k^\star$ enough for the rank-1 subroutine to begin contracting. In practice we find that taking $Q=\infty$ (i.e., skipping the projection) suffices.} Two extensions, \emph{advance learning} and \emph{dynamic rank discovery}, follow.

\textbf{Advance Learning.}
In \cref{alg:parallel_rank1}, worker $k$ is idle for rounds $1, \ldots, k{-}1$. Inspired by IncreLoRA's reserve components~\citep{ZhangIncreLoRA2023}, we use this idle time: before activation, worker $k$ privately runs the rank-1 subroutine against its current deflation target without broadcasting, accumulating a good initialization. Unlike IncreLoRA's reserves, the warming component trains against the \emph{correct} target $\bfY_{k,\ell}$, which itself improves over rounds via self-correction.

% \textit{Learned component scaling.}
% Following IncreLoRA's per-component scalar scaling~\citep{ZhangIncreLoRA2023}, each rank-1 component carries a learnable scalar $\lambda_k$ that modulates its
% contribution: the adapter output for module $i$ with current active rank $r_i$ is
% $\Delta\bfW_i \bfx = \frac{\alpha}{r_i} \sum_{k=1}^{r_i} \lambda_{i,k}  \bfb_{i,k} \bfa_{i,k}^\top \bfx$,
% where the $\alpha/r_i$ scaling adapts automatically as modules grow.
% Committed components initialize $\lambda = 0$ (trainable), while reserve components
% are held at $\lambda = 10^{-5}$ (frozen) until activation.
% This soft gating lets a newly activated component's contribution grow smoothly from a near-zero magnitude to a magnitude on the order of the existing committed components, rather than being added abruptly at full magnitude, which would otherwise perturb the reconstruction.

Theoretically, advance learning tightens the convergence constant: the worst-case leading factor $3\mathcal{R}_k$ in \cref{thm:parallel_convergence} is replaced by $\mathcal{R}_k(\calF_k^{\,s_k-1}+2)$ when pre-activation contraction yields $D_{k,s_k-1}\ll\mathcal{R}_k$.

\textbf{Dynamic Rank Discovery.}
Attention and FFN sub-layers need different capacity, and the right allocation varies with depth and task~\citep{ZhangAdaLoRA2023,ZhangIncreLoRA2023,DingSoRA2023}, motivating per-module rank assignment.
\adapad{} (\hl{\cref{alg:adapad}}) maintains a per-module active rank $r_i$ (initially $1$, capped at $r_{\max}$), a shared budget $B$, and one private \emph{reserve} per module via advance learning. Every $\Delta$ mini-batches (one \emph{global round}, after a one-round warm-up), the top-$h$ eligible modules ($r_i < r_{\max}$) promote their reserve to a committed component and start advance learning on a new reserve; growth stops when $\sum_i r_i \geq B$.

\textit{Importance scoring.}
We adopt the importance-uncertainty product from IncreLoRA~\citep{ZhangIncreLoRA2023}: a module deserves capacity if its parameters are both large in magnitude and still receiving large gradient updates. With $\bftheta_i$ the committed $(\bfa,\bfb)$ entries of module $i$, the raw importance at step $t$ is
\begin{equation}\label{eq:raw_importance}
S_i^{(t)} = \frac{1}{\sqrt{\|\bftheta_i\|_2}} \sum_{p \in \bftheta_i} |p| \cdot |\nabla_p \calL|,
\end{equation}
normalized for comparability across module sizes. Two EMAs ($\beta_1{=}\beta_2{=}0.85$) track signal and volatility, and the allocation score is their product $\bar{S}_i^{(t)} U_i^{(t)}$:
\begin{equation}\label{eq:ema_ipt}
\bar{S}_i^{(t)} = \beta_1 \bar{S}_i^{(t-1)} + (1{-}\beta_1)  S_i^{(t)},\qquad U_i^{(t)} = \beta_2 U_i^{(t-1)} + (1{-}\beta_2)  |S_i^{(t)} - \bar{S}_i^{(t)}|.
\end{equation}

\begin{algorithm}[t]
\caption{\adapad{}: Parallel Deflation with Per-Module Rank Discovery}\label{alg:adapad}
\begin{algorithmic}[1]
\Require Model with $M$ adapter modules, max rank $r_{\max}$, budget $B$,
         growth interval $\Delta$, top-$h$, projection radius $Q\geq\sigma_1^\star$ (inherited by per-component updates as in \cref{alg:parallel_rank1})
\Ensure Per-module components $\{(\bfa_{i,k}, \bfb_{i,k})\}_{k=1}^{r_i}$ for $i = 1, \ldots, M$
\State $r_i \gets 1$ for all $i$; $R \gets M$ \Comment{Total active rank}
\For{round $= 1, \ldots, \lceil T_{\text{total}} / \Delta \rceil$}
  \State Cache $\Delta$ mini-batches $\calB = \{B_1, \ldots, B_\Delta\}$
  \State Snapshot committed components for all modules \Comment{Self-correction}
  \For{each batch $B_j \in \calB$}
    \ParFor{component $k$ owned by this worker}
      \State Forward with snapshot prefix $[0, k)$ + current component $k$
      \State Backward; update $(\bfa_{i,k}, \bfb_{i,k})$
    \EndParFor
  \EndFor
  \State Update importance EMAs $\bar{S}_i, U_i$ via~\eqref{eq:ema_ipt}
  \If{$R < B$}
    \State Select top-$h$ modules by $\bar{S}_i \cdot U_i$ (among those with $r_i < r_{\max}$)
    \For{each selected module $i$}
      \State Activate reserve $\to$ committed; $r_i \gets r_i + 1$; $R \gets R + 1$
      \State Spawn new reserve (advance learning)
    \EndFor
  \EndIf
\EndFor
\end{algorithmic}
\end{algorithm}

This per-module allocation is analogous to IncreLoRA's incremental growth but combined with parallel deflation and self-correction: committed components are snapshotted every $\Delta$ mini-batches, and new components train against improving deflation targets.

\section{Theoretical Guarantees}\label{sec:theory}

\textbf{Scope.}
Our analysis is formulated for the bilinear regression model $\bfY = \bfW^\star \bfX + \calE$ (\cref{eq:main_problem}), the exact structure of a single LoRA adapter layer; assumptions hold for ALS and factored GD~\citep{JainNetrapalliSanghavi2013,ZhengLafferty2015}. We treat each layer's subproblem in isolation and do not model the end-to-end nonlinear loss, as in dEBORA~\citep{dEBORA2025} and ARD-LoRA~\citep{ARDLoRA2025}; the gap to nonlinear practice is bridged empirically in \cref{sec:exp_glue}, with NTK~\citep{Jacot2018NTK,Arora2019NTK} or local-linearization~\citep{LeeWideNetworks2019,Malladi2023LoRANTK} extensions deferred (\cref{sec:conclusion}). Proofs are in \cref{app:proofs}; recall $G_{k,\ell} \leq D_{k,\ell} + B_{k,\ell}$ from \cref{sec:error_decomp}.

\textbf{Main Convergence Result.}
After a component-dependent warm-up, every component's total error decays nearly-linearly (exponential decay modulated by a mild linear prefactor).

\begin{theorem}[Convergence of Parallel Rank-1 Deflation]\label{thm:parallel_convergence}
Under \cref{asump:contraction,asump:gap,asump:bounded_recovery}, where $\calF_k \in (0,1)$ is the per-component contraction factor of \cref{asump:contraction} and $\mathcal{R}_k > 0$ is the magnitude bound of \cref{lem:bounded_recovery}, define effective rates $m_1 = \calF_1$ and
\begin{equation}\label{eq:mk_recurrence}
m_k = \max\Bigl\{\calF_k,  \frac{1}{k} + \frac{k-1}{k} m_{k-1}\Bigr\} \quad \text{for } k \geq 2,
\end{equation}
and starting rounds $s_1 = 1$ with $s_{k+1}$ chosen per-predecessor (i.e., as a maximum over $k' \leq k$) so that
\begin{equation}\label{eq:sk_inline}
s_{k+1} \;=\; \max_{k'\leq k}\, s_{k'} \;+\; \widetilde\calO\!\left(\dfrac{\log\bigl(k\,C_{k+1}\mathcal{R}_{k'}/\calT_{k+1}^\star\bigr)}{1-m_k}\right) \;+\; \dfrac{(k+1)\,m_k}{1-m_k}
\end{equation}
(full Lambert-$W$ form and proof in \cref{app:proof_main}, \cref{cor:sk_simplified}).
Then for all $k \in [r]$ and $\ell \geq s_k - 1$:
\begin{equation}\label{eq:main_bound}
G_{k,\ell} \leq 3\mathcal{R}_k(\ell - s_k + 2)  m_k^{\ell - s_k + 1}.
\end{equation}
\end{theorem}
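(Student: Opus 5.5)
We argue by strong induction on $k$, using the decomposition $G_{k,\ell}\le D_{k,\ell}+B_{k,\ell}$ from \cref{sec:error_decomp}. We bound the two pieces separately. $D_{k,\ell}$ obeys a perturbed contraction driven by \cref{asump:contraction}. $B_{k,\ell}$ is controlled by the predecessors' total errors through a Wedin-type perturbation bound. The bounded-recovery lemma (\cref{lem:bounded_recovery}) supplies the uniform bound $G_{k,\ell}\le\mathcal{R}_k$, which covers the warm-up rounds before $s_k$ and fixes the constant $3\mathcal{R}_k$ at $\ell=s_k-1$, where the claimed bound $3\mathcal{R}_k$ holds trivially.

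\textbf{Base case $k=1$.} Here $\bfY_{1,\ell}=\bfY=\bfY_1^\star$, so $B_{1,\ell}=0$ and the ideal fit $\bar{\bfb}_{1,\ell}\bar{\bfa}_{1,\ell}^\top\bfX$ does not depend on $\ell$. \cref{asump:contraction} then gives $D_{1,\ell}\le\calF_1 D_{1,\ell-1}$, hence $G_{1,\ell}\le\mathcal{R}_1 m_1^{\ell}$. This is dominated by $3\mathcal{R}_1(\ell+1)m_1^{\ell}$ since $s_1=1$.

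\textbf{Inductive step.} Assume \eqref{eq:main_bound} holds for all $k'\le k$.

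\emph{Mismatch bound.} Since $\bfY_{k+1,\ell}-\bfY_{k+1}^\star=-\sum_{k'\le k}(\bfb_{k',\ell-1}\bfa_{k',\ell-1}^\top\bfX-\bfb_{k'}^\star\bfa_{k'}^{\star\top}\bfX)$, its norm is at most $E_\ell:=\sum_{k'\le k}G_{k',\ell-1}$. By Weyl and Wedin (\cref{app:background}), as long as $E_\ell\le \calT_{k+1}^\star/2$ the top triplet of $\bfY_{k+1,\ell}$ is unique and
\[
B_{k+1,\ell}\le C_{k+1}E_\ell/\calT_{k+1}^\star ,
\]
where $C_{k+1}$ absorbs $\sigma_{k+1}^\star$ and the rank-one reconstruction constants. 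The round $s_{k+1}$ is chosen exactly so that, using the inductive bounds with the slowest rate among predecessors, this condition holds for every $\ell\ge s_{k+1}-1$. This is where the $\max_{k'\le k}s_{k'}$ and the $\log(kC_{k+1}\mathcal{R}_{k'}/\calT^\star_{k+1})/(1-m_k)$ terms come from. The extra term $(k+1)m_k/(1-m_k)$ absorbs the polynomial prefactor, via a Lambert-$W$ inversion of $(\ell+1)m^\ell\le\epsilon$. Because the rates are monotone, $m_{k'}\le m_k$ for $k'\le k$, so we get $B_{k+1,\ell}\le c\,(\ell-s_{k+1}+2)m_k^{\ell-s_{k+1}+1}$ with a constant $c\le\mathcal{R}_{k+1}$ after the choice of $s_{k+1}$.

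\emph{Numerical error.} From \cref{asump:contraction} and the triangle inequality,
\[
D_{k+1,\ell}\le\calF_{k+1}\bigl(D_{k+1,\ell-1}+\|\bar{\bfb}_{k+1,\ell}\bar{\bfa}_{k+1,\ell}^\top\bfX-\bar{\bfb}_{k+1,\ell-1}\bar{\bfa}_{k+1,\ell-1}^\top\bfX\|_F\bigr)\le\calF_{k+1}\bigl(D_{k+1,\ell-1}+B_{k+1,\ell}+B_{k+1,\ell-1}\bigr).
\]
Unrolling from $s_{k+1}-1$ with $D_{k+1,s_{k+1}-1}\le\mathcal{R}_{k+1}$ gives a convolution $\sum_j\calF^{\ell-j}\,j\,m_k^{j}$. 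I would bound this by $(\ell-s+2)\,m_{k+1}^{\ell-s+1}$ using $m_{k+1}\ge\max\{\calF_{k+1},\,m_k\}$. The specific averaged form $\frac1{k+1}+\frac{k}{k+1}m_k$ in \eqref{eq:mk_recurrence} is the slack that lets the sum of $k$ predecessor terms, each carrying its own linear prefactor, collapse into a single linear prefactor rather than a quadratic one. Concretely, $k\,t\,m_k^{t}\lesssim(t+1)m_{k+1}^{t}$ once $t$ is past the threshold built into $s_{k+1}$. Adding the bounds for $D$ and $B$ then yields the constant $3\mathcal{R}_{k+1}$: one $\mathcal{R}$ from the initial $D$, and two from the two $B$ terms.

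\textbf{Main obstacle.} I expect the hardest part to be this last bookkeeping step. It requires keeping the prefactor linear in $\ell$ and the constant exactly $3\mathcal{R}_{k+1}$, rather than letting them grow with $k$. It also requires verifying that the Wedin constant together with the sum over $k$ predecessors is absorbed by the choice of $s_{k+1}$ in \eqref{eq:sk_inline}. My first attempt would be to prove an auxiliary inequality of the form $\sum_{j}\calF^{t-j}(j+1)m^{j}\le(t+1)\max\{\calF,m\}^{t}$, and then choose $s_{k+1}$ so that $k C_{k+1}\mathcal{R}_{k'}(\ell+1)m_k^{\ell}/\calT^\star_{k+1}\le\mathcal{R}_{k+1}m_{k+1}^{\ell}$.
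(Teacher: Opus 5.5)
Your skeleton matches the paper: strong induction, $G\le D+B$, the three-term $D$-recurrence, Wedin for $B$, and the constant $3\mathcal{R}$ from $D+B+B$. There is, however, a genuine gap at exactly the step you flag as the main obstacle.

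You bound $B_{k+1,\ell}$ by a linear-times-geometric envelope $c\,(\ell-s_{k+1}+2)m_k^{\ell-s_{k+1}+1}$. You then feed it straight into the unrolled recursion $D_{k+1,\ell}\le\calF_{k+1}(D_{k+1,\ell-1}+B_{k+1,\ell}+B_{k+1,\ell-1})$. That convolution is genuinely quadratic, and the auxiliary inequality you propose is false. For $\calF=m$,
\[
\sum_{j=0}^{t}\calF^{t-j}(j+1)m^{j}=m^{t}\,\tfrac{(t+1)(t+2)}{2}>(t+1)m^{t}\qquad(t\ge 1),
\]
and for $\calF<m$ the ratio to $(t+1)m^t$ tends to $1/(1-\calF/m)$, which is unbounded as $\calF\uparrow m$. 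Knowing only $m_{k+1}\ge\max\{\calF_{k+1},m_k\}$ cannot fix this; iterated over $k$, it would push the prefactor to degree $k$.

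Your account of the averaged form is also off: it is not what collapses the $k$ predecessor terms. Each predecessor envelope re-anchors at $\hat s_{k+1}$ to a single linear factor at rate $m_k$ (via $m_{k'}\le m_k$ and $a+b-1\le ab$). The $k$-fold sum is absorbed into the boundary value $\hat B_{k+1,\hat s_{k+1}}$, and the ceiling $\hat B_{k+1,\cdot}\le\mathcal{R}_{k+1}$ is what keeps the constant at $3\mathcal{R}_{k+1}$. Likewise, the additive $(k+1)m_k/(1-m_k)$ in $s_{k+1}$ does not come from a Lambert-$W$ inversion.

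The missing idea is the paper's re-anchoring step (\cref{lem:Bhat_geometric}): before convolving, trade the linear prefactor of the mismatch envelope for a strictly worse rate. We have $m_k\frac{t+1}{t}\le\gamma_k:=\frac{1}{k+1}+\frac{k}{k+1}m_k$ exactly when $t\ge\frac{(k+1)m_k}{1-m_k}$. So once $s_{k+1}-2-\hat s_{k+1}$ passes this threshold, and the envelope is below its ceiling, one gets the pure geometric bound $\hat B_{k+1,\ell}\le\gamma_k^{\ell-s}\hat B_{k+1,s}$ for $s\in\{s_{k+1}-2,s_{k+1}-1\}$. That threshold is the true source of both the averaged form in \eqref{eq:mk_recurrence} and the $(k+1)m_k/(1-m_k)$ term. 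The convolution is then trivial term by term,
\[
\sum_{\ell'=s_{k+1}-1}^{\ell}\calF_{k+1}^{\ell-\ell'}\gamma_k^{\ell'-s_{k+1}+1}\le(\ell-s_{k+1}+2)\,m_{k+1}^{\ell-s_{k+1}+1},\qquad m_{k+1}=\max\{\calF_{k+1},\gamma_k\},
\]
so exactly one linear factor is regenerated per level. This gives $G_{k+1,\ell}\le(\ell-s_{k+1}+2)m_{k+1}^{\ell-s_{k+1}+1}\bigl(D_{k+1,s_{k+1}-1}+\hat B_{k+1,s_{k+1}-1}+\hat B_{k+1,s_{k+1}-2}\bigr)$.

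Your fallback target (make $B_{k+1,\cdot}$ pure geometric at rate $m_{k+1}$ with constant $\mathcal{R}_{k+1}$) is this idea in disguise, and it does work if you anchor it properly. Verify it at the offset, then propagate with the ratio condition $m_{k'}\frac{u+2}{u+1}\le m_{k+1}$. That condition holds for $u+1\ge(k+1)m_k/(1-m_k)$ because $m_{k+1}-m_{k'}\ge\gamma_k-m_k=\frac{1-m_k}{k+1}$. The false convolution lemma is then unnecessary.

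A smaller omission: you apply \cref{asump:contraction} directly to the post-projection iterate. When the projection is active you need its nonexpansiveness, which requires the ideal fit to lie in the $Q$-ball, i.e.\ the target mismatch must already be small.
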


\begin{remark}[Interpretation of~\eqref{eq:main_bound}]\label{rem:main_bound_meaning}
Each $m_k \in (0,1)$ (\cref{app:mk_below_one}), so the geometric factor dominates the linear prefactor and the bound vanishes exponentially. Proof by strong induction on $k$ (\cref{app:proof_main}); empirical validation in \cref{fig:appendix-convergence,fig:appendix-bounds,fig:appendix-gap}.
\end{remark}

\textbf{Self-Correction.}
The defining advantage of parallel over sequential deflation: errors are transient, not permanent.

\begin{proposition}[Self-Correction]\label{prop:self_correction}
Under the conditions of \cref{thm:parallel_convergence}, for all $\ell \geq s_k$:
\begin{equation}
\norm{\bfY_{k,\ell} - \bfY_k^\star}_F \leq 3\sum_{k'=1}^{k-1}\mathcal{R}_{k'}(\ell - s_{k'} + 1)  m_{k'}^{\ell - s_{k'}} \xrightarrow{\ell \to \infty} 0.
\end{equation}
In sequential deflation, $\norm{\bfY_k - \bfY_k^\star}_F$ is fixed after step $k$ and generically nonzero.
\end{proposition}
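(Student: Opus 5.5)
The plan is to write the deflation mismatch in the target as a telescoping sum of predecessor total errors, and then apply \cref{thm:parallel_convergence} to each predecessor. By the definitions in \cref{tab:deflation_targets},
\begin{equation*}
\bfY_{k,\ell} - \bfY_k^\star = \sum_{k'<k}\bigl(\bfb_{k'}^\star\bfa_{k'}^{\star\top}\bfX - \bfb_{k',\ell-1}\bfa_{k',\ell-1}^\top\bfX\bigr).
\end{equation*}
The triangle inequality then gives $\norm{\bfY_{k,\ell}-\bfY_k^\star}_F \le \sum_{k'<k} G_{k',\ell-1}$, with $G$ as in \eqref{eq:def_G}. The estimates $(\bfa_{k',\ell-1},\bfb_{k',\ell-1})$ here are the post-projection ones broadcast by \cref{alg:parallel_rank1}. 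This is the same quantity that $G_{k',\ell-1}$ measures, so no additional projection error enters.

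Next I apply \eqref{eq:main_bound} to each predecessor $k'$ at round $\ell-1$. That bound requires $\ell-1 \ge s_{k'}-1$, i.e.\ $\ell \ge s_{k'}$. By \eqref{eq:sk_inline} the starting rounds are monotone, with $s_k \ge \max_{k'<k} s_{k'}$ plus a positive increment. Hence $\ell \ge s_k$ implies $\ell \ge s_{k'}$ for every $k'<k$. Substituting $\ell-1$ into \eqref{eq:main_bound} gives
\begin{equation*}
G_{k',\ell-1} \le 3\mathcal{R}_{k'}(\ell - s_{k'} + 1)\, m_{k'}^{\ell - s_{k'}},
\end{equation*}
and summing over $k'$ gives the claimed inequality.

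For the limit, each $m_{k'} \in (0,1)$ (\cref{app:mk_below_one}), so $(\ell-s_{k'}+1)m_{k'}^{\ell-s_{k'}} \to 0$ because geometric decay beats the linear prefactor. The sum is finite since it has $k-1$ terms. For the sequential contrast, $\bfY_k$ depends only on the frozen estimates produced at earlier steps. Its distance to $\bfY_k^\star$ equals $\norm{\sum_{k'<k}(\bfb_{k'}\bfa_{k'}^\top - \bfb_{k'}^\star\bfa_{k'}^{\star\top})\bfX}_F$, which is constant after step $k$. It is nonzero unless every finite-budget subroutine output is exact, and that is a non-generic event.

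The main obstacle is only the index bookkeeping. First, the deflation at round $\ell$ uses round-$(\ell-1)$ predecessors, which produces the shift from the theorem's exponent $\ell-s_{k'}+1$ to $\ell-s_{k'}$. Second, we must confirm that $s_k \ge s_{k'}$ for all $k'<k$, so that the theorem's validity range $\ell-1 \ge s_{k'}-1$ covers every predecessor. Both follow directly from \eqref{eq:sk_inline}.
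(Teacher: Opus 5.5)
Your proposal is correct and follows the paper's proof: bound the target mismatch by $\sum_{k'<k} G_{k',\ell-1}$ (this is exactly \cref{lem:Y_mismatch}), then substitute the bound of \cref{thm:parallel_convergence} at round $\ell-1$ and use $m_{k'}<1$. The paper leaves implicit two points you make explicit: the monotonicity $s_k \ge s_{k'}$ that puts every predecessor in the theorem's validity range, and the resulting exponent shift.
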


The proof bounds $\norm{\bfY_{k,\ell} - \bfY_k^\star}_F \leq \sum_{k'<k} G_{k',\ell-1}$ via \cref{lem:Y_mismatch} and applies \cref{thm:parallel_convergence} to force each $G_{k',\ell-1}\to 0$ (\cref{app:proofs}). \Cref{tab:comparison} summarizes the differences; see \cref{fig:appendix-self-correction}.

\begin{table}[t]
\centering
\caption{Sequential vs.\ parallel rank-1 deflation. ``Front-loaded compute'' means a disproportionate share of iterations $T_k$ must be spent on the first few components, since their errors propagate to all later ones; under parallel deflation a uniform per-component budget suffices because predecessor errors are corrected over rounds.}\label{tab:comparison}
\small
\begin{tabular}{lcc}
\toprule
\textbf{Property} & \textbf{Sequential}~\citep{VandchaliOneRank2024} & \textbf{Parallel (ours)} \\
\midrule
Wall-clock time & $\sum_{k=1}^r T_k$ & $LT$ (up to $r\times$ faster) \\
Deflation mismatch & Fixed, generally nonzero & Vanishes as $\ell \to \infty$ \\
Error propagation & Accumulates and persists & Self-corrects over rounds \\
Compute allocation & Front-loaded compute essential & Uniform per-component budget suffices \\
Estimation error & Fixed algorithmic term & Vanishes as $L \to \infty$ \\
\bottomrule
\end{tabular}
\end{table}

\textbf{Estimation Bounds.}
The self-correction advantage propagates to weight estimation.

\begin{theorem}[Noiseless estimation]\label{thm:gen_noiseless_parallel}
If $\bfY = \bfW^\star\bfX$ with $\mathrm{rank}(\bfW^\star) = r^\star$ and $\svmin(\bfX) > 0$, then after $L$ rounds:
\begin{equation}
\norm{\bfW^\star - \textstyle\sum_{k=1}^r \bfb_{k,L}\bfa_{k,L}^\top}_F
\leq \frac{1}{\svmin(\bfX)}\Bigl(\sum_{k=r+1}^{r^\star} \sigma_k^\star + \sum_{k=1}^r G_{k,L}\Bigr),
\end{equation}
where $\sigma_k^\star$ denotes the $k$-th singular value of $\bfY = \bfW^\star\bfX$.
\end{theorem}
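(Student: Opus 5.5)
The plan is to move the error from weight space into output space, where the deflation structure lives, and pay a single factor $1/\svmin(\bfX)$ at the end. Write $\What_L := \sum_{k=1}^r \bfb_{k,L}\bfa_{k,L}^\top$. Since $\svmin(\bfX)>0$ (read as $\bfX$ having full row rank $d$, so that $\norm{\bfM\bfX}_F \geq \svmin(\bfX)\norm{\bfM}_F$ for every $\bfM \in \R^{m\times d}$), applying this with $\bfM = \bfW^\star - \What_L$ gives
\begin{equation*}
\norm{\bfW^\star - \What_L}_F \leq \frac{1}{\svmin(\bfX)}\norm{\bfY - \What_L\bfX}_F .
\end{equation*}
It therefore suffices to bound the output residual $\norm{\bfY - \What_L\bfX}_F$ by $\sum_{k>r}\sigma_k^\star + \sum_{k\le r} G_{k,L}$.

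The next step is to identify the ideal components with the SVD of $\bfY$. In the noiseless case $\bfY = \bfW^\star\bfX$ has rank $r^\star$ with SVD $\sum_{k=1}^{r^\star}\sigma_k^\star \bfu_k^\star\bfv_k^{\star\top}$, and the singular values are distinct by \cref{asump:gap}. I will check by induction on $k$ that the exact sequential targets are $\bfY_k^\star = \sum_{j\ge k}\sigma_j^\star\bfu_j^\star\bfv_j^{\star\top}$, and that the best rank-1 fit of each is $\bfb_k^\star\bfa_k^{\star\top}\bfX = \sigma_k^\star\bfu_k^\star\bfv_k^{\star\top}$. This is just Eckart--Young--Mirsky applied at each step. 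One must confirm that this rank-1 matrix is realizable in the form $\bfb\bfa^\top\bfX$: it is, because the row space of $\bfY$ lies in the row space of $\bfX$, so $\bfv_k^{\star\top} = \bfa^\top\bfX$ for some $\bfa$. Consequently
\begin{equation*}
\bfY - \sum_{k=1}^r \bfb_k^\star\bfa_k^{\star\top}\bfX = \sum_{k=r+1}^{r^\star}\sigma_k^\star\bfu_k^\star\bfv_k^{\star\top},
\end{equation*}
which is empty if $r\ge r^\star$. Here the paper's convention that $\sigma_k^\star=0$ for $k>r^\star$ is needed, so that the extra ideal components are zero.

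Finally, decompose and apply the triangle inequality:
\begin{equation*}
\bfY - \What_L\bfX = \sum_{k=r+1}^{r^\star}\sigma_k^\star\bfu_k^\star\bfv_k^{\star\top} + \sum_{k=1}^r\bigl(\bfb_k^\star\bfa_k^{\star\top}\bfX - \bfb_{k,L}\bfa_{k,L}^\top\bfX\bigr).
\end{equation*}
The first sum has Frobenius norm at most $\sum_{k>r}\sigma_k^\star$; its exact value $(\sum_{k>r}\sigma_k^{\star2})^{1/2}$ is smaller, but the looser bound matches the statement. Each term of the second sum is $G_{k,L}$ by definition \eqref{eq:def_G}. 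Combining this with the first display gives the claim. If the projection in \cref{line:project} has rescaled the products, $\bfb_{k,L}\bfa_{k,L}^\top\bfX$ should be read as the stored projected product, and $G_{k,L}$ is defined consistently with that reading.

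The main obstacle is not the algebra but the interpretation of $\svmin(\bfX)>0$. If $d>n$, then $\bfX$ cannot have full row rank, and the lower bound $\norm{\bfM\bfX}_F\ge\svmin(\bfX)\norm{\bfM}_F$ fails for $\bfM$ whose rows lie in the null space of $\bfX^\top$. I would handle this by stating the standing assumption $\mathrm{rank}(\bfX)=d$, so that $\svmin$ is the $d$-th singular value. The alternative is to restrict the conclusion to the component of the error lying in the row space of $\bfX$.
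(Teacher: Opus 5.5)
Your proposal is correct and follows the paper's proof essentially step for step. You pass to output space via $\norm{\bfM}_F \le \svmin(\bfX)^{-1}\norm{\bfM\bfX}_F$, route $\bfY - \sum_k \bfb_{k,L}\bfa_{k,L}^\top\bfX$ through the ideal components $\bfb_k^\star\bfa_k^{\star\top}\bfX$, and bound the Eckart--Young tail plus the $G_{k,L}$ terms by the triangle inequality. Your extra care is also correct on two points the paper glosses over: the paper says the tail residual \emph{equals} $\sum_{k>r}\sigma_k^\star$, whereas its Frobenius norm is $(\sum_{k>r}\sigma_k^{\star 2})^{1/2}$, which that sum only upper-bounds, and the paper's first step tacitly requires $\bfX$ to have full row rank, exactly as you point out.
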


The second term vanishes as $L \to \infty$ by \cref{thm:parallel_convergence}; in the sequential baseline~\citep{VandchaliOneRank2024} the corresponding term is permanently positive.

\begin{theorem}[Noisy estimation]\label{thm:gen_noisy_parallel}
Under $\calE_{ij} \sim \mathcal{N}(0, \varepsilon^2)$ with $\varepsilon \leq \calO(\calT_{\min}^\star / (\sqrt{n} + \sqrt{\log(1/\gamma)}))$, with probability $\geq 1 - \gamma$:
\begin{equation}
\begin{aligned}
\norm{\bfW^\star - \textstyle\sum_k \bfb_{k,L}\bfa_{k,L}^\top}_F
&\leq \underbrace{\textcolor{teal}{\kappa(\bfX)\Bigl(\textstyle\sum_{k>r}\sigma_k(\bfW^\star) + \svmax(\bfX)^{-1}\sum_k G_{k,L}\Bigr)}}_{\text{algorithmic (vanishes)}} \\
&\quad + \underbrace{\textcolor{orange}{\calO\Bigl(\frac{\varepsilon\sqrt{n\log(1/\gamma)}}{\svmin(\bfX)}\Bigl(r + \sqrt{\frac{r}{\calT_{\min}^\star}}\Bigr)\Bigr)}}_{\text{statistical (irreducible)}}.
\end{aligned}
\end{equation}
\end{theorem}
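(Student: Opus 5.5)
The bound comes from pulling back to data space, inserting the best rank-$r$ clean target, and perturbing that target by the noise. Write $\hat\bfW_L := \sum_k \bfb_{k,L}\bfa_{k,L}^\top$. Since $\svmin(\bfX)>0$, for any $\bfM\in\R^{m\times d}$ we have $\norm{\bfM}_F \le \norm{\bfM\bfX}_F/\svmin(\bfX)$, so it suffices to bound $\norm{(\bfW^\star-\hat\bfW_L)\bfX}_F$. Let $\bfY^\star=\bfW^\star\bfX$ be the clean response, $\bfY=\bfY^\star+\calE$ the observed one, and $\bfY^\star_{(r)}$ the best rank-$r$ approximation of $\bfY^\star$. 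In data space I split
\begin{equation*}
\norm{(\bfW^\star-\hat\bfW_L)\bfX}_F \le \norm{\bfY^\star-\bfY^\star_{(r)}}_F + \norm{\bfY^\star_{(r)}-\textstyle\sum_k \bfb_k^\star\bfa_k^{\star\top}\bfX}_F + \textstyle\sum_k G_{k,L}.
\end{equation*}
Here the starred components $(\bfa_k^\star,\bfb_k^\star)$ are the ideal components defined from the \emph{observed} clean-deflation targets. Those targets are built from $\bfY$, so the starred quantities are the top singular triplets of $\bfY$, not of $\bfY^\star$. The last sum is handled directly by the triangle inequality and the definition of $G_{k,L}$ in \eqref{eq:def_G}.

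\emph{Algorithmic term.} The first piece is the tail of the clean spectrum. Writing $\bfW^\star=\sum_j\sigma_j(\bfW^\star)\bfu_j\bfv_j^\top$, the rank-$r$ truncation of $\bfW^\star$ times $\bfX$ has rank at most $r$. By Eckart--Young this gives $\norm{\bfY^\star-\bfY^\star_{(r)}}_F\le \svmax(\bfX)\sum_{k>r}\sigma_k(\bfW^\star)$. After dividing by $\svmin(\bfX)$ this becomes $\kappa(\bfX)\sum_{k>r}\sigma_k(\bfW^\star)$. The term $\sum_k G_{k,L}/\svmin(\bfX)$ is rewritten as $\kappa(\bfX)\svmax(\bfX)^{-1}\sum_k G_{k,L}$, which matches the teal term exactly. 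This term vanishes as $L\to\infty$ by \cref{thm:parallel_convergence}, applied to the observed $\bfY$. The noise condition on $\varepsilon$ is what keeps \cref{asump:gap} valid for $\bfY$, with gaps at least $\calT_k^\star/2$; Weyl's inequality gives this once $\norm{\calE}_2\le\calT^\star_{\min}/2$.

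\emph{Statistical term.} I control the middle piece, $\norm{\bfY^\star_{(r)}-\sum_k\sigma_k(\bfY)\bfu_k(\bfY)\bfv_k(\bfY)^\top}_F$, in three steps.
\begin{enumerate}[leftmargin=*,nosep]
\item A Gaussian matrix concentration bound gives, with probability $\ge 1-\gamma$, $\norm{\calE}_2\lesssim \varepsilon(\sqrt n+\sqrt m+\sqrt{\log(1/\gamma)})$. Under the hypothesis on $\varepsilon$ this makes $\norm{\calE}_2\le \calT^\star_{\min}/2$. I use a cruder bound of the form $\varepsilon\sqrt{n\log(1/\gamma)}$ to match the stated rate.
\item For each $k\le r$, I bound the difference between the clean and perturbed rank-1 triplets, $\sigma_k\bfu_k\bfv_k^\top$. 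The singular-value change contributes at most $\norm{\calE}_2$ by Weyl. The singular-vector change contributes at most $\sigma_k\cdot\norm{\calE}_2/\calT_k^\star$ by Wedin's $\sin\Theta$ theorem, using the gap from step 1.
\item Summing over $k$ gives order $r\norm{\calE}_2+\norm{\calE}_2\sum_k\sigma_k/\calT_k^\star$. To obtain the stated $\sqrt{r/\calT^\star_{\min}}$ factor I would not bound each Wedin term separately. Instead I combine them through Cauchy--Schwarz, $\sum_k \sigma_k\sin\theta_k\le \sqrt{r}\,(\sum_k\sigma_k^2\sin^2\theta_k)^{1/2}$. I then use the normalization $\sigma_1^\star=1$, together with a Davis--Kahan-type bound in which $\sin^2\theta_k$ scales like $\norm{\calE}_2/\calT_k^\star$ in the regime $\norm{\calE}_2\le\calT_k^\star$. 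This yields $\sqrt{r/\calT^\star_{\min}}$ times $\norm{\calE}_2$ up to constants.
\end{enumerate}
Dividing by $\svmin(\bfX)$ then gives the orange term.

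The main obstacle is that last step: getting $\sqrt{r/\calT^\star_{\min}}$ rather than the naive $r/\calT^\star_{\min}$. The naive per-triplet Wedin sum is straightforward, so I would try the Cauchy--Schwarz aggregation first. Failing that, I would accept a looser constant absorbed into the $\calO(\cdot)$. A secondary check is that the ideal components for the noisy target are well defined, meaning the top triplet at each deflation stage is unique. This follows from the preserved gaps in step 1.
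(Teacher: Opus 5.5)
Your skeleton is the paper's: its proof also splits into a truncation term (Eckart--Young), a noise term (Weyl/Wedin plus Gaussian concentration) and the algorithmic term $\sum_k G_{k,L}$, then divides by $\svmin(\bfX)$. Your handling of the teal term is fine. The gap is step~3, and it is more than an arithmetic issue.

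As written, the Cauchy--Schwarz aggregation does not give the claimed factor. It gives $\sqrt r\,(\sum_k\sigma_k^2\norm{\calE}_2/\calT_k^\star)^{1/2}\le r\sqrt{\norm{\calE}_2/\calT^\star_{\min}}$. This is sublinear in the noise, and for $\norm{\calE}_2\le\calT^\star_{\min}$ it is larger than the naive $r\norm{\calE}_2/\calT^\star_{\min}$. With the true Wedin rate it just returns $r\norm{\calE}_2/\calT^\star_{\min}$. This is unavoidable: Cauchy--Schwarz can never beat summing termwise bounds, since $\sum_k c_k\le\sqrt r(\sum_k c_k^2)^{1/2}$.

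More fundamentally, the inequality you are aiming for, $\norm{\bfY^\star_{(r)}-\bfY_{(r)}}_F\lesssim\norm{\calE}_2(r+\sqrt{r/\calT^\star_{\min}})$, is false when $r<r^\star$. Take $\bfX=\bfI$, $\bfY^\star=\mathrm{diag}(1,1-\tau)$, $r=1$, and a symmetric perturbation of norm $\eta\le\tau/2$ coupling the two coordinates. The top pair rotates by an angle of order $\eta/\tau$, so your middle term is of order $\eta/\tau$, against a claimed $\calO(\eta(1+\tau^{-1/2}))$. Because your first piece already equals the tail exactly, nothing absorbs this excess. Absorbing it into the $\calO(\cdot)$ is not legitimate either, since the ratio is unbounded as $\tau\to0$. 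Under Gaussian noise the same happens with constant probability: the coupling entries are of size $\varepsilon$, and the ratio is then $\gtrsim(n\tau\log(1/\gamma))^{-1/2}$. For $r=r^\star$ your split is fine: the middle term has rank at most $2r$, so it is at most $2\sqrt{2r}\norm{\calE}_2$ by Weyl, with no gap dependence.

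The missing idea is to keep the tail and the rotation in one squared norm rather than splitting at $\bfY^\star_{(r)}$. Let $\hat P$ and $P_r$ be the top-$r$ left singular projectors of $\bfY$ and $\bfY^\star$. Then $\bfY^\star-\bfY_{(r)}=(\bfI-\hat P)\bfY^\star-\hat P\calE$ is an orthogonal sum, so
\[
\norm{\bfY^\star-\bfY_{(r)}}_F^2=\norm{\bfY^\star-\bfY^\star_{(r)}}_F^2+\bigl\langle\bfY^\star\bfY^{\star\top},P_r-\hat P\bigr\rangle+\norm{\hat P\calE}_F^2 .
\]
Since $\hat P$ is optimal for $\bfY\bfY^\top$, the middle term is at most $\langle\bfY^\star\bfY^{\star\top}-\bfY\bfY^\top,P_r-\hat P\rangle\le(2\norm{\calE}_2+\norm{\calE}_2^2)\norm{P_r-\hat P}_*$, using $\sigma_1^\star=1$. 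Subspace Wedin with gap $\sigma_r^\star-\sigma_{r+1}^\star\ge\calT^\star_{\min}$ gives $\norm{P_r-\hat P}_*\le 2r\norm{\sin\Theta}_2\lesssim r\norm{\calE}_2/\calT^\star_{\min}$. So a rotation between nearly degenerate directions costs only $\calO(r\norm{\calE}_2^2/\calT^\star_{\min})$ in squared norm, and $\norm{\hat P\calE}_F^2\le r\norm{\calE}_2^2$. Now $\sqrt{t^2+x}\le t+\sqrt x$ gives the tail plus $\calO\bigl(\norm{\calE}_2(\sqrt r+\sqrt{r/\calT^\star_{\min}})\bigr)$. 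This is where the $\sqrt{r/\calT^\star_{\min}}$ comes from, and it implies the stated shape. Then add $\sum_k G_{k,L}$ and divide by $\svmin(\bfX)$ as you do. The paper itself does not carry this out; it defers its noise term to Lemma~C.1 of \citet{VandchaliOneRank2024}.

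Two smaller fixes. First, in step~1, $\varepsilon\sqrt{n\log(1/\gamma)}$ dominates $\varepsilon(\sqrt n+\sqrt m+\sqrt{\log(1/\gamma)})$ only if $m\lesssim n\log(1/\gamma)$. Likewise, the hypothesis on $\varepsilon$ yields $\norm{\calE}_2\le\calT^\star_{\min}/2$ only if $m\lesssim n$, so you must state this assumption. Second, $\bfb\bfa^\top\bfX$ cannot leave the row space of $\bfX$, so the ideal fits are top triplets of $\bfY P_{\bfX}$ rather than of $\bfY$, where $P_{\bfX}$ is the projector onto that row space. Run everything with $\calE P_{\bfX}$ in place of $\calE$; since $\norm{\calE P_{\bfX}}_2\le\norm{\calE}_2$, no bound changes.
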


The algorithmic term vanishes via $\sum_k G_{k,L}\to 0$ (\cref{thm:parallel_convergence}); the statistical floor matches the Gaussian-perturbation bound for deflation-based recovery~\citep[Thm.~3]{VandchaliOneRank2024}, so parallelization adds no new noise penalty (the sequential analogue's algorithmic term stays positive; here it decays in $L$; floor matches \cref{fig:appendix-noise}). Advance learning tightens the leading constant $3\mathcal{R}_k$ in~\eqref{eq:main_bound} via $s_k-1$ rounds of private refinement.

\section{Experiments}\label{sec:experiments}

We evaluate \adapad{} along three axes.
\Cref{sec:exp_glue} compares it against four adaptive-rank LoRA baselines on GLUE classification with DeBERTaV3-base.
\Cref{sec:exp_squad} measures its parameter efficiency against fixed-rank LoRA on Qwen3-0.6B SQuAD/SQuAD~v2 reading comprehension.
\Cref{sec:exp_qwen_scaling} measures multi-GPU wall-clock scaling of \emph{component-parallel sharding} (assigning disjoint subsets of \adapad{}'s rank-1 components to different workers, each holding a private copy of the backbone) on the same Qwen3-0.6B + SQuAD workload.
All experiments run on $4{\times}$H200 NVLink; settings in \cref{app:experiments}.

\subsection{GLUE Classification}\label{sec:exp_glue}

\textbf{Setup.}
We fine-tune DeBERTaV3-base on the eight GLUE tasks~\citep{Wang2019GLUE} and compare \adapad{} against fixed-rank LoRA and three adaptive-rank baselines:
(i) \textbf{LoRA} ($r{=}2$);
(ii) \textbf{\adapad{}} ($r_{\max}{=}4$, dynamic) at total rank budget $B{=}144$, matching IncreLoRA's parameter budget;
(iii) \textbf{IncreLoRA}~\citep{ZhangIncreLoRA2023}, reproduced with the authors' code and published hyperparameters;
(iv) \textbf{AdaLoRA}~\citep{ZhangAdaLoRA2023}, \textbf{SoRA}~\citep{DingSoRA2023}, and \textbf{dEBORA}~\citep{dEBORA2025} from their published numbers.
Parameter budgets are approximately matched (0.32--0.47\,M trainable parameters across methods); rank-1 adapters attach to all 72 encoder linear layers (six per layer $\times$ 12 layers: $W_q$, $W_k$, $W_v$, $W_o$, $W_{f1}$, $W_{f2}$), matching IncreLoRA. In practice we use $Q=\infty$ (no projection) as the simplest design choice and find this is sufficient for empirical convergence in all our experiments.

\textbf{Main results.}
\adapad{} achieves the highest GLUE average ($89.34$) in \cref{tab:glue_main}, ahead of AdaLoRA ($89.03$) and our IncreLoRA reproduction ($88.99$).
On the five small- and medium-scale tasks (CoLA, RTE, MRPC, STS-B, SST-2; ${\le}\,67$k training examples) \adapad{} wins four outright and loses RTE by $1.08$ points, three examples on the $276$-example dev set.
On the three large-scale tasks (QNLI, QQP, MNLI; ${\ge}\,105$k training examples), \adapad{} stays within $0.6$ points of the best baseline (each below $0.6\%$ relative loss): $-0.13$ on QNLI, $-0.51$ on QQP, $-0.29$ on MNLI.

\begin{table}[t]
\centering
\caption{GLUE development-set results with DeBERTaV3-base at matched parameter budgets.
\adapad{} discovers rank dynamically ($r_{\max}{=}4$).
Published results are cited from the respective papers; $^\ast$denotes our reproduction with the authors' code.
\textbf{Bold}: best in column. \emph{Avg.}\ is the unweighted mean across the eight tasks.}\label{tab:glue_main}
\footnotesize
\setlength{\tabcolsep}{4pt}
\begin{tabular}{lcccccccccc}
\toprule
Method & \#Params & CoLA & RTE & MRPC & STS-B & SST-2 & QNLI & QQP & MNLI & Avg. \\
 & & (MCC) & (Acc) & (Acc) & (Pear.) & (Acc) & (Acc) & (Acc) & (Acc) & \\
\midrule
LoRA ($r{=}2$)     & 0.34M & 63.56          & 79.06          & 78.86          & 90.47          & 94.95          & 94.03          & 91.61          & 90.30          & 85.36 \\
\textbf{\adapad{}} & 0.34M & \textbf{72.26} & 86.28          & \textbf{91.67} & \textbf{91.86} & \textbf{96.33} & 94.36          & 91.55          & 90.37          & \textbf{89.34} \\
\midrule
IncreLoRA$^\ast$   & 0.34M & 71.85          & 85.92          & 90.20          & 91.68          & 96.22          & 93.50          & 91.94          & 90.59          & 88.99 \\
AdaLoRA            & 0.32M & 70.04          & \textbf{87.36} & 90.44          & 91.63          & 95.80          & \textbf{94.49} & 91.78          & \textbf{90.66} & 89.03 \\
SoRA ($r_{\max}{=}4$) & 0.47M & 71.05 & 86.04 & 90.20 & 91.76 & 95.57 & 93.92 & \textbf{92.06} & 90.38 & 88.87 \\
dEBORA             & 0.40M & 68.72          & 83.75          & 90.16          & 90.84          & 95.29          & 93.42          & 91.88          & 90.01          & 88.01 \\
\bottomrule
\end{tabular}
\end{table}

\textbf{Discovered rank distribution.}
\Cref{fig:rank_discovery} shows per-module rank on CoLA at convergence: of 72 modules, 16/44/8/4 sit at rank $1/2/3/4$. FFN intermediate $W_{f1}$ averages $2.67$ and attention output $W_o$ averages $2.17$, while $W_{f2}$ stays lowest ($1.58$); layer 11 averages $3.67$, consistent with last-layer task-specificity~\citep{ZhangAdaLoRA2023}. Forcing fixed rank costs $8.70$ MCC (\cref{tab:ablation}).

\begin{figure}[ht]
\centering
\includegraphics[width=0.85\textwidth]{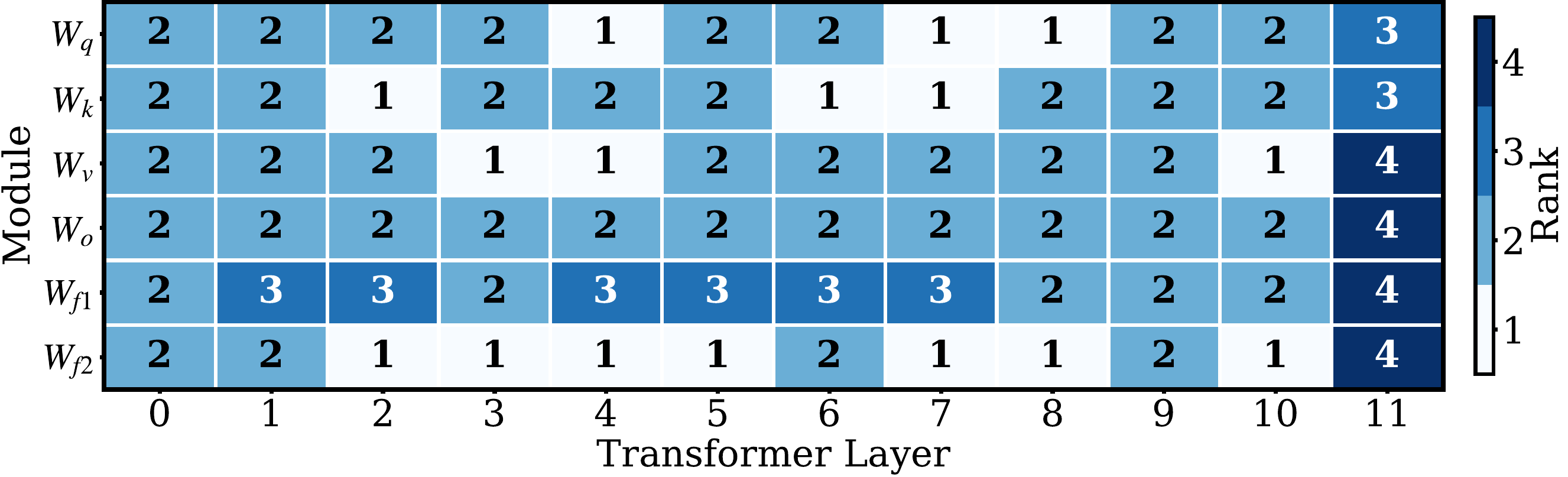}
\caption{Per-module rank discovered by \adapad{} on CoLA (avg $=2.0$); 72 modules ($6$ types $\times$ $12$ layers). Higher rank concentrates in the last layer and FFN intermediate projections.}\label{fig:rank_discovery}
\end{figure}

\textbf{Ablation study.}
\Cref{tab:ablation} isolates \adapad{}'s three ingredients on CoLA and MRPC: dynamic rank discovery is the largest contributor ($-8.70$ MCC, $-12.81$ Acc when replaced by fixed $r{=}2$), then importance scoring ($-2.69$, $-1.72$ vs.\ uniform), then advance learning ($-1.21$, $-1.23$).

\begin{table}[t]
\centering
\caption{Ablation on CoLA and MRPC.
Each row removes one component from the full \adapad{} algorithm.
$\Delta$: change from the full model.}\label{tab:ablation}
\small
\begin{tabular}{lcccccc}
\toprule
 & \multicolumn{3}{c}{CoLA (MCC)} & \multicolumn{3}{c}{MRPC (Acc)} \\
\cmidrule(lr){2-4} \cmidrule(lr){5-7}
Variant & Score & $\Delta$ & & Score & $\Delta$ & \\
\midrule
\adapad{} & \textbf{72.26} & --- & & \textbf{91.67} & --- & \\
\quad w/o advance learning & 71.05 & $-$1.21 & & 90.44 & $-$1.23 & \\
\quad w/o dynamic rank (fixed $r{=}2$) & 63.56 & $-$8.70 & & 78.86 & $-$12.81 & \\
\quad w/o importance scoring (uniform) & 69.57 & $-$2.69 & & 89.95 & $-$1.72 & \\
\bottomrule
\end{tabular}
\end{table}

\subsection{Parameter Efficiency on Reading Comprehension}\label{sec:exp_squad}

We test \adapad{} on Qwen3-0.6B fine-tuned for QA on SQuAD~v1.1 and SQuAD~v2~\citep{Rajpurkar2016SQuAD,Rajpurkar2018SQuADv2} (generative decoder, adapters on seven projections $W_q,W_k,W_v,W_o,W_{\text{gate}},W_{\text{up}},W_{\text{down}}$; 196 modules). LoRA uses uniform rank $r\in\{4,6\}$; \adapad{} uses average-rank budget $\bar r\in\{4,6\}$ with $r_{\max}{=}8$ (full settings in \cref{app:squad_experiments}). For each cell we report the smallest \adapad{} adapter that matches or beats same-budget LoRA on F1, with parameter reduction $\Delta\text{Par}$.

\begin{table}[t]
\centering
\caption{Matched-budget head-to-head on Qwen3-0.6B SQuAD/SQuAD~v2 (seed=43; bf16; identical recipe across all cells; full settings in \cref{app:squad_experiments}).
\textbf{Bold}: \adapad{} matches or beats LoRA at the cell.}\label{tab:causal_lm_main}
\small
\setlength{\tabcolsep}{4pt}
\begin{tabular}{llcccccc>{\bfseries}c}
\toprule
& & \multicolumn{3}{c}{LoRA (fixed-rank $r$)} & \multicolumn{3}{c}{\adapad{} ($\bar r$)} & \\
\cmidrule(lr){3-5}\cmidrule(lr){6-8}
Task & $b$ & Params & F1 & EM & Params & F1 & EM & $\Delta$Par \\
\midrule
\multirow{2}{*}{SQuAD v1.1}
  & 4 & 2.52M & 88.48 & 80.65 & 2.12M & \textbf{88.83} & \textbf{80.77} & $-15.9\%$ \\
  & 6 & 3.79M & 88.93 & 81.42 & 2.11M & \textbf{89.02} & 81.09           & $-44.3\%$ \\
\midrule
\multirow{2}{*}{SQuAD v2}
  & 4 & 2.52M & 73.45 & 70.33 & 1.93M & \textbf{73.63} & \textbf{70.70} & $-23.4\%$ \\
  & 6 & 3.79M & 73.91 & 70.66 & 2.30M & \textbf{73.99} & \textbf{70.91} & $-39.3\%$ \\
\bottomrule
\end{tabular}
\end{table}

Across the four cells (\cref{tab:causal_lm_main}), \adapad{} yields adapters $15.9$--$44.3\%$ smaller than fixed-rank LoRA (mean $30.7\%$) while remaining competitive on F1/EM; a complementary view to GLUE, which fixes the budget and reads off quality.

\subsection{Multi-GPU Scaling on Qwen3-0.6B}\label{sec:exp_qwen_scaling}
\adapad{} admits component-parallel sharding: each worker holds a disjoint subset of the rank-1 components and trains them independently between syncs, predicting near-linear speedup. We measure this on the SQuAD~v1.1 workload of \cref{sec:exp_squad}, disabling per-module growth and pinning every module at $r_{\max}$ to fix the total at $196 \times r_{\max}$ across $P\in\{1,2,3,4\}$ at each $r_{\max}\in\{4,8\}$. We report mean${}\pm{}$std over five global rounds (hyperparameters in \cref{app:squad_experiments}).

\begin{table}[t]
\centering
\caption{Component-parallel scaling on Qwen3-0.6B + SQuAD~v1.1.
Per-batch wall-clock cost (mean${}\pm{}$std over five global rounds) and speedup vs.\ $P{=}1$ at the same rank.
\textbf{Bold}: best speedup in each rank column.}\label{tab:qwen_scaling}
\small
\setlength{\tabcolsep}{4pt}
\begin{tabular}{lcccc}
\toprule
& \multicolumn{2}{c}{$r_{\max}{=}4$} & \multicolumn{2}{c}{$r_{\max}{=}8$} \\
\cmidrule(lr){2-3} \cmidrule(lr){4-5}
\#GPUs & per-batch (s) & speedup & per-batch (s) & speedup \\
\midrule
1 & $0.986${\scriptsize${}\pm 0.011$} & $1.00{\times}$                   & $2.039${\scriptsize${}\pm 0.015$} & $1.00{\times}$ \\
2 & $0.532${\scriptsize${}\pm 0.006$} & $1.85${\scriptsize${}\pm 0.03$}${\times}$         & $1.062${\scriptsize${}\pm 0.008$} & $1.92${\scriptsize${}\pm 0.02$}${\times}$ \\
3 & $0.487${\scriptsize${}\pm 0.006$} & $2.03${\scriptsize${}\pm 0.03$}${\times}$         & $0.806${\scriptsize${}\pm 0.009$} & $2.53${\scriptsize${}\pm 0.04$}${\times}$ \\
4 & $0.403${\scriptsize${}\pm 0.047$} & $\mathbf{2.45}${\scriptsize${}\pm 0.29$}$\mathbf{{\times}}$ & $0.768${\scriptsize${}\pm 0.074$} & $\mathbf{2.66}${\scriptsize${}\pm 0.26$}$\mathbf{{\times}}$ \\
\bottomrule
\end{tabular}
\end{table}

The 1-3 GPU cells are highly stable (rel.\ std $<1.5\%$): three workers deliver $2.03{\times}$ at $r_{\max}{=}4$ and $2.53{\times}$ at $r_{\max}{=}8$. At $P{=}4$ we see $2.45{\times}$ and $2.66{\times}$ with higher variance (rel.\ std ${\approx}10\%$), as the 4-way NVLink all-reduce makes NCCL latency dominate when per-worker compute shrinks; overlap-friendly schedules~\citep{LiPyTorchDDP2020,SergeevHorovod2018} and bandwidth-aware placement~\citep{RajbhandariZeRO2020,ShoeybiMegatron2019} are future work.

\section{Discussion and Conclusion}\label{sec:conclusion}

\adapad{} addresses two LoRA design questions: parallel training keeps early rank-1 components from corrupting later ones, and per-module dynamic rank discovery makes the rank distribution an output of importance scores rather than a fixed input.
We prove nearly-linear convergence with a self-correcting deflation mismatch and an estimation bound that splits into a vanishing algorithmic term plus an irreducible statistical floor.
Empirically, \adapad{} achieves the highest GLUE average against four adaptive-rank baselines (\cref{sec:exp_glue}), produces Qwen3-0.6B SQuAD/SQuAD~v2 adapters $30.7\%$ smaller than fixed-rank LoRA at competitive F1/EM (\cref{sec:exp_squad}), and component-parallel sharding delivers up to $2.66{\times}$ speedup on $4{\times}$H200 (\cref{sec:exp_qwen_scaling}).

\textbf{Limitations and future work.}
The theory covers the bilinear model~\eqref{eq:main_problem}; extension to nonlinear LoRA losses is empirical, bounds degrade as $\calT_k^\star \to 0$, and synchronous $\calO(r^2(d+m))$ communication can offset the speedup at very large $r$. Validation at $7$B+ scale, NTK~\citep{Jacot2018NTK,Arora2019NTK} or local-linearization~\citep{LeeWideNetworks2019,Malladi2023LoRANTK} extensions, asynchronous communication, and richer importance scores are future work.

\bibliographystyle{plainnat}
\bibliography{ref}

\newpage
\appendix

\section{Rank-1 Subroutines}\label{app:subroutines}

We provide pseudocode for two rank-1 subroutines that satisfy \cref{asump:contraction}.

\begin{algorithm}[ht]
\caption{$\topone$ via Alternating Least Squares}\label{alg:rank1_als}
\begin{algorithmic}[1]
\Require $\bfY_k \in \R^{m \times n}$, $\bfX \in \R^{d \times n}$, iterations $T$, warm-start $(\bfa^{(0)}, \bfb^{(0)})$
\State $\bfM \gets \bfX\bfX^\top$; $\bfN \gets \bfY_k\bfX^\top$
\For{$t = 0, \ldots, T{-}1$}
  \State $\bfb^{(t+1)} \gets \bfN\bfa^{(t)} / (\bfa^{(t)\top}\bfM\bfa^{(t)})$
  \State $\bfa^{(t+1)} \gets (\|\bfb^{(t+1)}\|^2\bfM)^{-1}\bfN^\top\bfb^{(t+1)}$
\EndFor
\State \Return $(\bfa^{(T)}, \bfb^{(T)})$
\end{algorithmic}
\end{algorithm}

\begin{algorithm}[ht]
\caption{$\topone$ via Factored Gradient Descent}\label{alg:rank1_gd}
\begin{algorithmic}[1]
\Require $\bfY_k$, $\bfX$, $T$, warm-start $(\bfa^{(0)}, \bfb^{(0)})$, learning rates $\eta_\bfa, \eta_\bfb$
\State $\bfM \gets \bfX\bfX^\top$; $\bfN \gets \bfY_k\bfX^\top$
\For{$t = 0, \ldots, T{-}1$}
  \State $\nabla_\bfa \gets \bfM\bfa^{(t)}\|\bfb^{(t)}\|^2 - \bfN^\top\bfb^{(t)}$
  \State $\nabla_\bfb \gets \bfb^{(t)}(\bfa^{(t)\top}\bfM\bfa^{(t)}) - \bfN\bfa^{(t)}$
  \State $\bfa^{(t+1)} \gets \bfa^{(t)} - \eta_\bfa\nabla_\bfa$; $\bfb^{(t+1)} \gets \bfb^{(t)} - \eta_\bfb\nabla_\bfb$
\EndFor
\State \Return $(\bfa^{(T)}, \bfb^{(T)})$
\end{algorithmic}
\end{algorithm}

\section{Background Perturbation Results}\label{app:background}

\begin{theorem}[Weyl's Inequality {\citep{Weyl1912}}]\label{thm:weyl}
For $\bfM, \bfDelta \in \R^{m \times n}$, define $\tilde{\bfM} = \bfM + \bfDelta$.
Then $|\tilde{\sigma}_i - \sigma_i| \leq \|\bfDelta\|_2$ for all $i$.
\end{theorem}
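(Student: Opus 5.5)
The plan is to prove the inequality through the variational characterization of singular values that underlies Eckart--Young--Mirsky. For $\bfM \in \R^{m\times n}$ and $1 \le i \le \min(m,n)$, this characterization reads
\[
\sigma_i(\bfM) \;=\; \min\bigl\{\|\bfM - \mathbf{R}\|_2 \;:\; \mathrm{rank}(\mathbf{R}) \le i-1\bigr\}.
\]
With it in hand, the inequality is a one-line triangle-inequality argument applied in both directions.

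\emph{Step 1 (the characterization).} Take the truncated SVD $\mathbf{R}^\star = \sum_{j<i}\sigma_j \bfu_j\bfv_j^\top$. Then $\|\bfM-\mathbf{R}^\star\|_2 = \sigma_i$, so the minimum is at most $\sigma_i$.

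For the reverse inequality, let $\mathbf{R}$ have rank at most $i-1$. Its kernel has dimension at least $n-i+1$, so it meets the $i$-dimensional span of $\bfv_1,\dots,\bfv_i$ nontrivially. Dimension counting therefore gives a unit vector $\mathbf{x}$ in that span with $\mathbf{R}\mathbf{x} = 0$. For this $\mathbf{x}$,
\[
\|(\bfM-\mathbf{R})\mathbf{x}\| \;=\; \|\bfM\mathbf{x}\| \;\ge\; \sigma_i,
\]
since $\bfM$ is expanded by at least $\sigma_i$ on $\mathrm{span}(\bfv_1,\ldots,\bfv_i)$ by orthonormality of the singular vectors. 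Hence $\|\bfM - \mathbf{R}\|_2 \ge \sigma_i$ for every admissible $\mathbf{R}$. This dimension-counting step is the only nontrivial part of the proof; everything after it is routine.

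\emph{Step 2 (one direction).} Let $\mathbf{R}^\star$ be the minimizer for $\bfM$ from Step 1. Using $\tilde{\bfM} = \bfM + \bfDelta$,
\[
\tilde\sigma_i \;\le\; \|\tilde{\bfM}-\mathbf{R}^\star\|_2 \;\le\; \|\bfM-\mathbf{R}^\star\|_2 + \|\bfDelta\|_2 \;=\; \sigma_i + \|\bfDelta\|_2 .
\]

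\emph{Step 3 (symmetry).} Swap the roles of the two matrices, writing $\bfM = \tilde{\bfM} + (-\bfDelta)$ and noting $\|-\bfDelta\|_2 = \|\bfDelta\|_2$. Repeating Step 2 gives $\sigma_i \le \tilde\sigma_i + \|\bfDelta\|_2$. Combining the two directions yields $|\tilde\sigma_i - \sigma_i| \le \|\bfDelta\|_2$ for all $i$, with the convention that singular values beyond the rank are zero.
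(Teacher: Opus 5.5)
Your proof is correct and complete. The characterization $\sigma_i(\bfM)=\min\{\|\bfM-\mathbf{R}\|_2:\mathrm{rank}(\mathbf{R})\le i-1\}$ is established properly: the truncated SVD gives the upper bound, and the lower bound follows from $\dim\ker\mathbf{R}+\dim\mathrm{span}(\bfv_1,\dots,\bfv_i)\ge n+1$. The two triangle-inequality steps then give both directions.

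The paper gives no proof of its own. It lists this statement in the background appendix as a classical result \citep{Weyl1912} and uses it only as a black box, in Steps 3 and 4 of \cref{lem:B_bound}. So there is no route of the paper's to match, and your argument fills that gap.

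The common textbook derivation applies Courant--Fischer to the Hermitian dilation $\begin{pmatrix}\mathbf{0}&\bfM\\ \bfM^\top&\mathbf{0}\end{pmatrix}$, whose eigenvalues are $\pm\sigma_i$, and reduces to Weyl's eigenvalue inequality. Your route instead stays entirely with singular values and needs only a dimension count. It also gives, with no extra work, the stronger form $\sigma_{i+j-1}(\bfM+\bfDelta)\le\sigma_i(\bfM)+\sigma_j(\bfDelta)$: use $\mathbf{R}_{\bfM}+\mathbf{R}_{\bfDelta}$, which has rank at most $i+j-2$, as the competitor.

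The one point of care is the index range. The characterization holds for $1\le i\le\min(m,n)$, so that $\bfv_1,\dots,\bfv_i$ exist and $\bfu_1,\dots,\bfu_i$ are orthonormal. Your closing zero convention correctly handles all other indices.
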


\begin{theorem}[Wedin's Theorem {\citep{Wedin1972}}]\label{thm:wedin}
Let $\bfM$ and $\tilde{\bfM} = \bfM + \bfDelta$ have top singular triplets $(\sigma_1, \bfu_1, \bfv_1)$ and $(\tilde{\sigma}_1, \tilde{\bfu}_1, \tilde{\bfv}_1)$.
If $\delta := \min\{\min_{j \neq 1}|\sigma_1 - \tilde{\sigma}_j|, \sigma_1\} > 0$, then
$\sin^2\angle(\bfu_1, \tilde{\bfu}_1) + \sin^2\angle(\bfv_1, \tilde{\bfv}_1) \leq (\|\bfDelta^\top \bfu_1\|_2^2 + \|\bfDelta \bfv_1\|_2^2) / \delta^2$.
\end{theorem}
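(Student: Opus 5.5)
The plan is the standard Davis--Kahan/Wedin residual argument, run through the symmetric dilation so that both singular subspaces are handled at once. Write the full SVD of $\tilde{\bfM}$. Let $\tilde{\bfu}_1,\tilde{\bfv}_1$ be its top singular vectors, and let $\tilde{\bfU}_\perp\in\R^{m\times(m-1)}$ and $\tilde{\bfV}_\perp\in\R^{n\times(n-1)}$ be orthonormal bases of their orthogonal complements, chosen from the remaining singular vectors. Then
\[
\tilde{\bfU}_\perp^\top\tilde{\bfM}\tilde{\bfV}_\perp=\tilde{\Sigma}_\perp ,
\]
where $\tilde{\Sigma}_\perp$ is the rectangular diagonal matrix of $\tilde\sigma_2,\tilde\sigma_3,\dots$, padded with zeros. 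Also $\tilde{\bfU}_\perp^\top\tilde{\bfM}\tilde{\bfv}_1=0$ and $\tilde{\bfV}_\perp^\top\tilde{\bfM}^\top\tilde{\bfu}_1=0$. Set
\[
\mathbf{x}:=\tilde{\bfU}_\perp^\top\bfu_1,\qquad \mathbf{y}:=\tilde{\bfV}_\perp^\top\bfv_1 .
\]
With this setup $\|\mathbf{x}\|_2=\sin\angle(\bfu_1,\tilde{\bfu}_1)$ and $\|\mathbf{y}\|_2=\sin\angle(\bfv_1,\tilde{\bfv}_1)$, so the goal becomes $\|\mathbf{x}\|_2^2+\|\mathbf{y}\|_2^2\le(\|\bfDelta^\top\bfu_1\|_2^2+\|\bfDelta\bfv_1\|_2^2)/\delta^2$.

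Next I write down the residual equations. Since $\bfM\bfv_1=\sigma_1\bfu_1$ and $\bfM^\top\bfu_1=\sigma_1\bfv_1$, we have
\[
\tilde{\bfM}\bfv_1-\sigma_1\bfu_1=\bfDelta\bfv_1,\qquad \tilde{\bfM}^\top\bfu_1-\sigma_1\bfv_1=\bfDelta^\top\bfu_1 .
\]
Multiply the first on the left by $\tilde{\bfU}_\perp^\top$ and the second by $\tilde{\bfV}_\perp^\top$. Inserting $\bfI=\tilde{\bfv}_1\tilde{\bfv}_1^\top+\tilde{\bfV}_\perp\tilde{\bfV}_\perp^\top$, and the analogous decomposition on the left side, kills the top-direction terms and gives
\[
\tilde{\Sigma}_\perp\mathbf{y}-\sigma_1\mathbf{x}=\tilde{\bfU}_\perp^\top\bfDelta\bfv_1,\qquad \tilde{\Sigma}_\perp^\top\mathbf{x}-\sigma_1\mathbf{y}=\tilde{\bfV}_\perp^\top\bfDelta^\top\bfu_1 .
\]
In block form, with
\[
\mathbf{H}=\begin{bmatrix}\mathbf{0}&\tilde{\Sigma}_\perp\\ \tilde{\Sigma}_\perp^\top&\mathbf{0}\end{bmatrix},
\]
this reads $(\mathbf{H}-\sigma_1\bfI)(\mathbf{x};\mathbf{y})=\mathbf{r}$, where $\mathbf{r}$ stacks the two projected residuals. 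Because orthogonal projections are contractive, $\|\mathbf{r}\|_2^2\le\|\bfDelta\bfv_1\|_2^2+\|\bfDelta^\top\bfu_1\|_2^2$.

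The remaining step, and the one needing care, is to lower-bound the smallest singular value of $\mathbf{H}-\sigma_1\bfI$ by $\delta$. $\mathbf{H}$ is symmetric, and its eigenvalues are $\pm\tilde\sigma_j$ for $j\ge2$, plus zeros. The zeros come from the unmatched padding rows and columns when $m\neq n$ or $\tilde{\bfM}$ is rank deficient. Hence the eigenvalues of $\mathbf{H}-\sigma_1\bfI$ have absolute values in
\[
\{|\tilde\sigma_j-\sigma_1|\}_{j\ge2}\cup\{\sigma_1+\tilde\sigma_j\}_{j\ge 2}\cup\{\sigma_1\}.
\]
The zero eigenvalues are exactly why $\sigma_1$ appears inside the definition of $\delta$. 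The terms $\sigma_1+\tilde\sigma_j\ge\sigma_1$ are harmless. So every eigenvalue of $\mathbf{H}-\sigma_1\bfI$ is at least $\delta$ in absolute value, and $\delta>0$ makes the matrix invertible with $\|(\mathbf{H}-\sigma_1\bfI)^{-1}\|_2\le1/\delta$. Therefore
\[
\|\mathbf{x}\|_2^2+\|\mathbf{y}\|_2^2\le\|\mathbf{r}\|_2^2/\delta^2,
\]
which is the claim. The main thing to verify carefully is the bookkeeping of the padded zero blocks, so that no eigenvalue of $\mathbf{H}$ is missed.
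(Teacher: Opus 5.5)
The paper does not prove this statement. It quotes it in \cref{app:background} as a classical result of \citet{Wedin1972}, so your proposal supplies a derivation the paper omits, and your derivation is correct.

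The projected residual equations are right; the identity $\tilde{\mathbf{U}}_\perp^\top\tilde{\bfM}\tilde{\bfv}_1=0$ is what removes the top direction. The bound on $\|\mathbf{r}\|_2$ holds because $\tilde{\mathbf{U}}_\perp^\top$ and $\tilde{\mathbf{V}}_\perp^\top$ have orthonormal rows. Symmetry of $\mathbf{H}-\sigma_1\bfI$ then turns the eigenvalue bound into the resolvent bound $1/\delta$.

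On the bookkeeping you flagged: the dilation of the $(m-1)\times(n-1)$ rectangular diagonal $\tilde\Sigma_\perp$ has eigenvalues $\pm\tilde\sigma_j$ for $2\le j\le\min(m,n)$, plus exactly $|m-n|$ additional zeros. Zeros caused by rank deficiency of $\tilde{\bfM}$ are already among the $\pm\tilde\sigma_j$. Describing them as extra is harmless, since you only need a containment of the spectrum.

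In a write-up, add one sentence for the case where $\tilde\sigma_1$ is repeated. The given pair $(\tilde{\bfu}_1,\tilde{\bfv}_1)$ still extends to a full SVD, because $\tilde{\bfM}$ maps $\tilde{\bfv}_1^\perp$ into $\tilde{\bfu}_1^\perp$. Also state explicitly that $j$ in $\delta$ runs over $2,\dots,\min(m,n)$.

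Compared with citing the theorem, your argument checks the exact form used here: residuals built from the unperturbed triplet, a two-sided gap between $\sigma_1$ and the perturbed $\tilde\sigma_j$, and the extra $\sigma_1$ term. It also shows where each piece of $\delta$ comes from. The $\sigma_1$ term is needed only for the padding zeros, so for $m=n$ it can be dropped, since $\sigma_1+\tilde\sigma_j\ge|\sigma_1-\tilde\sigma_j|$.

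The same argument extends to higher-dimensional singular subspaces in the Frobenius norm. Replace the scalar shift by a Sylvester equation and solve it entrywise in the eigenbasis of $\mathbf{H}$.
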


\section{Notation Index}\label{app:notation}

\begin{table}[ht]
\centering
\caption{Summary of notation.}\label{tab:notation_index}
\small
\begin{tabular}{llll}
\toprule
\textbf{Symbol} & \textbf{Type} & \textbf{Defined in} & \textbf{Meaning} \\
\midrule
$\bfa_{k,\ell}$ & $\R^d$ & \cref{alg:parallel_rank1} & Worker $k$'s right factor at round $\ell$ \\
$\bfa_k^\star$ & $\R^d$ & \cref{sec:setup} & Ideal right factor for component $k$ \\
$B_{k,\ell}$ & $\R_{\geq 0}$ & \cref{sec:error_decomp}, \eqref{eq:def_B} & Deflation mismatch $\norm{\bar\bfb_{k,\ell}\bar\bfa_{k,\ell}^\top\bfX - \bfb_k^\star\bfa_k^{\star\top}\bfX}_F$ \\
$\bfb_{k,\ell}$ & $\R^m$ & \cref{alg:parallel_rank1} & Worker $k$'s left factor at round $\ell$ \\
$C_k$ & $\R_{>0}$ & \cref{lem:B_bound} & $3\sigma_k^\star/\calT_k^\star + 1$ \\
$D_{k,\ell}$ & $\R_{\geq 0}$ & \cref{sec:error_decomp}, \eqref{eq:def_D} & Numerical error $\norm{\bfb_{k,\ell}\bfa_{k,\ell}^\top\bfX - \bar\bfb_{k,\ell}\bar\bfa_{k,\ell}^\top\bfX}_F$ \\
$G_{k,\ell}$ & $\R_{\geq 0}$ & \cref{sec:error_decomp}, \eqref{eq:def_G} & Total error $\norm{\bfb_{k,\ell}\bfa_{k,\ell}^\top\bfX - \bfb_k^\star\bfa_k^{\star\top}\bfX}_F$ \\
$\calF_k$ & $(0,1)$ & \cref{asump:contraction} & Subroutine contraction factor \\
$m_k$ & $(0,1)$ & \cref{thm:parallel_convergence} & Effective convergence rate \\
$\mathcal{R}_k$ & $\R_{>0}$ & \hl{\cref{lem:bounded_recovery}} & Magnitude bound on the rank-1 recovery \\
$s_k$ & $\N$ & \cref{thm:parallel_convergence} & Warm-up round \\
$\hat{s}_k$ & $\N$ & \cref{lem:surrogate} & Earliest round at which the Wedin gap activates \\
$\calT_k^\star$ & $\R_{>0}$ & \cref{asump:gap} & Singular-value gap \\
$\What$ & -- & \cref{thm:parallel_convergence} & $\max\{1, -W_{-1}(-\cdot)\}$ \\
\bottomrule
\end{tabular}
\end{table}

\section{Complete Proofs}\label{app:proofs}
Recall the following definitions:
\begin{align}
D_{k,\ell} & \triangleq  \norm{\bfb_{k,\ell}\bfa_{k,\ell}^\top\bfX - \bar{\bfb}_{k,\ell}\bar{\bfa}_{k,\ell}^\top\bfX}_F, \quad \quad \text{(Numerical error)} \\
B_{k,\ell} & \triangleq  \norm{\bar{\bfb}_{k,\ell}\bar{\bfa}_{k,\ell}^\top\bfX - \bfb_k^\star\bfa_k^{\star\top}\bfX}_F,  \quad ~~\quad \text{(Deflation mismatch)} \\
G_{k,\ell} & \triangleq  \norm{\bfb_{k,\ell}\bfa_{k,\ell}^\top\bfX - \bfb_k^\star\bfa_k^{\star\top}\bfX}_F. \quad \quad ~~~\text{(Total error)}
\end{align}
where the ideal fit $(\bar{\bfa}_{k,\ell}, \bar{\bfb}_{k,\ell})$ is defined in~\eqref{eq:ideal_fit}; and $(\bfa_k^\star, \bfb_k^\star)$ denotes the ideal $k$-th component of the clean target $\bfY_k^\star$.
\subsection{Supporting Lemmas}
The following lemmas control how these three quantities interact across rounds.

\hl{[Moved from background.tex per Jasper:]}
\begin{lemma}[Bounded Rank-1 Recovery]\label{lem:bounded_recovery}
Under \cref{asump:bounded_recovery}, the per-component recovery error satisfies
\begin{equation}\label{eq:bounded_recovery}
G_{k,\ell} \;\leq\; \mathcal{R}_k \;:=\; Q + \sigma_k^\star
\qquad \text{for every round } \ell \geq 0,
\end{equation}
and the same bound transfers to $D_{k,\ell}$ and $B_{k,\ell}$.
\end{lemma}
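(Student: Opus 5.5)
The bound on $G_{k,\ell}$ comes from the triangle inequality plus the projection step. By \cref{line:project} of \cref{alg:parallel_rank1}, every active round leaves the product satisfying $\norm{\bfb_{k,\ell}\bfa_{k,\ell}^\top\bfX}_F \le Q$. Inactive rounds ($\ell<k$) copy the previous iterate.

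At initialization ($\ell=0$) we need either to apply $\Pi_Q$ to the random initialization or to read the claim as holding from the first projected round onward. I plan to state explicitly that $(\bfa_{k,0},\bfb_{k,0})$ is initialized inside the $Q$-ball, which costs nothing. With that convention, induction on $\ell$ gives $\norm{\bfb_{k,\ell}\bfa_{k,\ell}^\top\bfX}_F\le Q$ for all $\ell\ge 0$.

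On the ideal side, $\bfb_k^\star\bfa_k^{\star\top}\bfX=\sigma_k^\star\bfu_k^\star\bfv_k^{\star\top}$ by Eckart--Young--Mirsky, and its Frobenius norm is $\sigma_k^\star$. The triangle inequality then gives $G_{k,\ell}\le Q+\sigma_k^\star=\mathcal{R}_k$.

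For the transfer to $D_{k,\ell}$ and $B_{k,\ell}$, the natural route is again the triangle inequality. This requires a bound on $\norm{\bar\bfb_{k,\ell}\bar\bfa_{k,\ell}^\top\bfX}_F=\sigma_1(\bfY_{k,\ell})$, the top singular value of the current deflation target, and this is the main obstacle. Weyl (\cref{thm:weyl}) combined with the identity $\bfY_{k,\ell}-\bfY_k^\star=-\sum_{k'<k}(\bfb_{k',\ell-1}\bfa_{k',\ell-1}^\top\bfX-\bfb_{k'}^\star\bfa_{k'}^{\star\top}\bfX)$ yields
\[
\sigma_1(\bfY_{k,\ell})\le \sigma_k^\star+\sum_{k'<k}G_{k',\ell-1}.
\]
In the worst case this is $\sigma_k^\star+\sum_{k'<k}\mathcal{R}_{k'}$, which exceeds $\mathcal{R}_k$.

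I would therefore handle the transfer in one of two ways. The first option is to note that the ideal fit can equally be taken as the projection $\Pi_Q$ of the top triplet, since the projection does not move the ideal point $\sigma_k^\star\le\sigma_1^\star\le Q/2$. Measured against projected quantities, $\norm{\bar\bfb_{k,\ell}\bar\bfa_{k,\ell}^\top\bfX}_F\le Q$. That gives $B_{k,\ell}\le Q+\sigma_k^\star=\mathcal{R}_k$ and $D_{k,\ell}\le 2Q$, which is within a factor of two of $\mathcal{R}_k$.

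The second option, which I prefer, is to state the transfer only in the regime actually used, namely $\ell\ge s_k$. There the inductive hypothesis from \cref{thm:parallel_convergence} gives $\sum_{k'<k}G_{k',\ell-1}\le Q/2$, and \cref{asump:bounded_recovery} ($Q\ge 2\sigma_1^\star$) then gives $\sigma_1(\bfY_{k,\ell})\le \sigma_k^\star+Q/2\le Q$. The same triangle-inequality argument then yields $B_{k,\ell}\le \sigma_1(\bfY_{k,\ell})+\sigma_k^\star\le\mathcal{R}_k$ and $D_{k,\ell}\le Q+\sigma_1(\bfY_{k,\ell})$. The latter is bounded by a constant multiple of $\mathcal{R}_k$, which can be absorbed into the factor $3\mathcal{R}_k$ of \eqref{eq:main_bound}.
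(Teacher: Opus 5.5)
Your proposal follows the paper's argument. For $G_{k,\ell}$ both use the projection plus the triangle inequality. For $D_{k,\ell}$ and $B_{k,\ell}$ both use the same triangle inequality, with $\sigma_1(\bfY_{k,\ell})\le Q$ secured only in the near-target regime via Weyl and $Q\ge 2\sigma_1^\star$ (the paper phrases that regime as $\|\bfY_{k,\ell}-\bfY_k^\star\|_2\le\sigma_1^\star$).

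Your two caveats are real and apply equally to the paper's proof. First, the unprojected random initialization, which is copied through the inactive rounds $\ell<k$, must be placed in the $Q$-ball for the "every $\ell\ge 0$" claim. Second, the triangle inequality only gives $D_{k,\ell}\le Q+\sigma_1(\bfY_{k,\ell})\le 2Q$, not $\mathcal{R}_k$.

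One correction: state your regime as $\ell\ge\hat{s}_k$ rather than $\ell\ge s_k$. The ceilings are also invoked at rounds $\hat{s}_k\le\ell<s_k$, for example $D_{k,s_k-1}$ inside $\hat{G}_{k,s_k-1}$. Your argument covers those rounds unchanged, because the Wedin-gap condition already holds there.
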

\begin{proof}
By the projection in \cref{line:project}, $\norm{\bfb_{k,\ell}\bfa_{k,\ell}^\top\bfX}_F \leq Q$, and the ideal $k$-th component satisfies $\norm{\bfb_k^\star\bfa_k^{\star\top}\bfX}_F = \sigma_k^\star$. The triangle inequality gives
\[
G_{k,\ell} \;\leq\; \norm{\bfb_{k,\ell}\bfa_{k,\ell}^\top\bfX}_F + \norm{\bfb_k^\star\bfa_k^{\star\top}\bfX}_F \;\leq\; Q + \sigma_k^\star \;=\; \mathcal{R}_k.
\]
The corresponding ceilings for $D_{k,\ell}$ and $B_{k,\ell}$ follow by the same triangle-inequality argument using the ideal-fit norm $\norm{\bar{\bfb}_{k,\ell}\bar{\bfa}_{k,\ell}^\top\bfX}_F = \sigma_{1_k,\ell} \leq \norm{\bfY_{k,\ell}}_2$, which is bounded by $Q$ whenever $\norm{\bfY_{k,\ell}-\bfY_k^\star}_2 \leq \sigma_1^\star$ (using $Q \geq 2\sigma_1^\star$); this is the regime in which the proofs invoke these ceilings.
\end{proof}

\begin{lemma}[Numerical Error Recurrence]\label{lem:D_recurrence}
Under \cref{asump:contraction} and \cref{asump:bounded_recovery}, for all $\ell \geq k$ if $\norm{\bfY_{k,\ell} - \bfY_{k}^\star}\leq \sigma_k^\star$, then we have that
$D_{k,\ell} \leq \calF_k(D_{k,\ell-1} + B_{k,\ell} + B_{k,\ell-1})$.
\end{lemma}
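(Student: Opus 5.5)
The plan is to start from the contraction property of \cref{asump:contraction} and then change reference points using the triangle inequality. The hypothesis $\norm{\bfY_{k,\ell}-\bfY_k^\star}\leq\sigma_k^\star$ guarantees two things. First, we are in the regime where \cref{asump:contraction} is meaningful, since the top triplet of $\bfY_{k,\ell}$ is well defined. Second, the ideal-fit norms stay bounded as in \cref{lem:bounded_recovery}.

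\cref{asump:contraction} (valid for $\ell\geq k$) gives
\[
D_{k,\ell}\leq \calF_k\norm{\bfb_{k,\ell-1}\bfa_{k,\ell-1}^\top\bfX-\bar{\bfb}_{k,\ell}\bar{\bfa}_{k,\ell}^\top\bfX}_F .
\]
The right-hand side compares the previous iterate against the \emph{current} ideal fit. I will insert the previous ideal fit $\bar{\bfb}_{k,\ell-1}\bar{\bfa}_{k,\ell-1}^\top\bfX$ and the clean component $\bfb_k^\star\bfa_k^{\star\top}\bfX$ as intermediate points:
\begin{align*}
\norm{\bfb_{k,\ell-1}\bfa_{k,\ell-1}^\top\bfX-\bar{\bfb}_{k,\ell}\bar{\bfa}_{k,\ell}^\top\bfX}_F
&\leq \norm{\bfb_{k,\ell-1}\bfa_{k,\ell-1}^\top\bfX-\bar{\bfb}_{k,\ell-1}\bar{\bfa}_{k,\ell-1}^\top\bfX}_F\\
&\quad+\norm{\bar{\bfb}_{k,\ell-1}\bar{\bfa}_{k,\ell-1}^\top\bfX-\bfb_k^\star\bfa_k^{\star\top}\bfX}_F\\
&\quad+\norm{\bfb_k^\star\bfa_k^{\star\top}\bfX-\bar{\bfb}_{k,\ell}\bar{\bfa}_{k,\ell}^\top\bfX}_F .
\end{align*}
These three terms are, by definition, $D_{k,\ell-1}$, $B_{k,\ell-1}$, and $B_{k,\ell}$. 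Multiplying by $\calF_k$ gives the claim.

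One detail concerns the projection on \cref{line:project}. The previous iterate appearing in the contraction is the projected output from round $\ell-1$, which is also the quantity measured by $D_{k,\ell-1}$. So the only question is whether projecting the round-$\ell$ output could increase $D_{k,\ell}$. Projection onto the Frobenius ball of radius $Q$ is nonexpansive. The target $\bar{\bfb}_{k,\ell}\bar{\bfa}_{k,\ell}^\top\bfX$ has norm $\sigma_{1_k,\ell}\leq\sigma_1^\star+\sigma_k^\star\leq 2\sigma_1^\star\leq Q$ by Weyl (\cref{thm:weyl}) and the hypothesis, so it lies in the ball. Distance to a point inside a convex set cannot increase under projection onto that set, so the contraction bound survives the projection.

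The main obstacle, such as it is, is the edge case $\ell=k$. There the warm start $(\bfa_{k,k-1},\bfb_{k,k-1})$ is the random initialization (or the advance-learning iterate), and $B_{k,k-1}$ refers to the ideal fit of $\bfY_{k,k-1}$. I will read that as defined by the same deflation formula, so the triangle-inequality argument above still applies verbatim. Beyond this, the proof is just the triangle inequality.
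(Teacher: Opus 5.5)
Your proposal is correct and follows the paper's proof. It uses the same chain from the previous iterate through $\bar{\bfb}_{k,\ell-1}\bar{\bfa}_{k,\ell-1}^\top\bfX$ and $\bfb_k^\star\bfa_k^{\star\top}\bfX$ to the current ideal fit. It also uses the same observation that the current ideal fit has norm at most $2\sigma_1^\star\le Q$, so the projection cannot increase $D_{k,\ell}$; you merge the paper's two cases (projection inactive versus active) into a single nonexpansiveness step.

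One small aside: the hypothesis $\norm{\bfY_{k,\ell}-\bfY_k^\star}\le\sigma_k^\star$ does not by itself make the top triplet of $\bfY_{k,\ell}$ unique, since that needs the half-gap condition of \cref{lem:B_bound}. Your argument never uses uniqueness, because the triangle inequality and the norm bound hold for any minimizer in~\eqref{eq:ideal_fit}.
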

\begin{proof}
Let the output of Line 9 be denoted by $\tilde{\bfa}_{k,\ell},\tilde{\bfb}_{k,\ell}$. If $\norm{\tilde{\bfb}_{k,\ell}\tilde{\bfa}_{k,\ell}^\top\bfX}\leq Q$, then $\tilde{\bfa}_{k,\ell} = \bfa_{k,\ell},\tilde{\bfb}_{k,\ell} = \bfb_{k,\ell}$. Therefore, by \cref{asump:contraction}:
\begin{align*}
D_{k,\ell}
&\leq \calF_k \norm{\bfb_{k,\ell-1}\bfa_{k,\ell-1}^\top\bfX - \bar{\bfb}_{k,\ell}\bar{\bfa}_{k,\ell}^\top\bfX}_F \\
&\leq \calF_k(\norm{\bfb_{k,\ell-1}\bfa_{k,\ell-1}^\top\bfX - \bfb_k^\star\bfa_k^{\star\top}\bfX}_F + \norm{\bfb_k^\star\bfa_k^{\star\top}\bfX - \bar{\bfb}_{k,\ell}\bar{\bfa}_{k,\ell}^\top\bfX}_F) \\
&= \calF_k(G_{k,\ell-1} + B_{k,\ell}) \leq \calF_k(D_{k,\ell-1} + B_{k,\ell-1} + B_{k,\ell}).
\end{align*}
On the other hand, if $\norm{\tilde{\bfb}_{k,\ell}\tilde{\bfa}_{k,\ell}^\top\bfX}\geq Q$, we then notice that
\[
    \norm{\bfY_{k,\ell}}_2 \leq \norm{\bfY_k^\star}_2 + \norm{\bfY_{k,\ell} - \bfY_k^\star}_2 \leq \sigma_k^\star + \sigma_1^\star \leq Q
\]
Therefore, $\norm{\bar{\bfb}_{k,\ell}\bar{\bfa}_{k,\ell}^\top\bfX}_F \leq Q$. By the property of projection onto convex sets, we have that 
\[
    \norm{\bfb_{k,\ell}\bfa_{k,\ell}^\top\bfX - \bar{\bfb}_{k,\ell}\bar{\bfa}_{k,\ell}^\top\bfX} \leq \norm{\tilde{\bfb}_{k,\ell}\tilde{\bfa}_{k,\ell}^\top\bfX - \bar{\bfb}_{k,\ell}\bar{\bfa}_{k,\ell}^\top\bfX}
\]
Therefore, by \cref{asump:contraction} we have that
\begin{align*}
D_{k,\ell} & \leq \norm{\tilde{\bfb}_{k,\ell}\tilde{\bfa}_{k,\ell}^\top\bfX - \bar{\bfb}_{k,\ell}\bar{\bfa}_{k,\ell}^\top\bfX}\\
&\leq \calF_k \norm{\bfb_{k,\ell-1}\bfa_{k,\ell-1}^\top\bfX - \bar{\bfb}_{k,\ell}\bar{\bfa}_{k,\ell}^\top\bfX}_F \\
&\leq \calF_k(\norm{\bfb_{k,\ell-1}\bfa_{k,\ell-1}^\top\bfX - \bfb_k^\star\bfa_k^{\star\top}\bfX}_F + \norm{\bfb_k^\star\bfa_k^{\star\top}\bfX - \bar{\bfb}_{k,\ell}\bar{\bfa}_{k,\ell}^\top\bfX}_F) \\
&= \calF_k(G_{k,\ell-1} + B_{k,\ell}) \leq \calF_k(D_{k,\ell-1} + B_{k,\ell-1} + B_{k,\ell}).
\end{align*}
\end{proof}

\begin{lemma}[Deflated Target Mismatch]\label{lem:Y_mismatch}
For \cref{alg:parallel_rank1}: $\norm{\bfY_{k,\ell} - \bfY_k^\star}_F \leq \sum_{k'=1}^{k-1} G_{k',\ell-1}$.
\end{lemma}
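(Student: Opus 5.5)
The plan is to write both targets explicitly and subtract, so that the common term $\bfY$ cancels and only predecessor discrepancies remain. By \cref{line:deflate} of \cref{alg:parallel_rank1}, $\bfY_{k,\ell} = \bfY - \sum_{k'<k}\bfb_{k',\ell-1}\bfa_{k',\ell-1}^\top\bfX$. By the definition in \cref{sec:error_decomp}, $\bfY_k^\star = \bfY - \sum_{k'<k}\bfb_{k'}^\star\bfa_{k'}^{\star\top}\bfX$. Subtracting gives
\begin{equation*}
\bfY_{k,\ell} - \bfY_k^\star = -\sum_{k'=1}^{k-1}\Bigl(\bfb_{k',\ell-1}\bfa_{k',\ell-1}^\top\bfX - \bfb_{k'}^\star\bfa_{k'}^{\star\top}\bfX\Bigr).
\end{equation*}

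Next I take Frobenius norms and apply the triangle inequality over the $k-1$ summands. Each summand's norm is exactly $G_{k',\ell-1}$ by \eqref{eq:def_G}, which yields the claim.

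Two bookkeeping points need care, though neither is a real obstacle.

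First, the quantities $(\bfa_{k',\ell-1},\bfb_{k',\ell-1})$ used in \cref{line:deflate} must be the post-projection values from \cref{line:project}, since those are what get broadcast. The identity above therefore holds with $G_{k',\ell-1}$ defined on the projected iterates, which is consistent with \cref{lem:bounded_recovery}.

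Second, if a predecessor $k'$ is not yet active at round $\ell-1$, the else-branch copies its previous iterate forward. In that case $G_{k',\ell-1}$ is still well defined as the error of the held iterate, so the inequality is unaffected.

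The case $k=1$ is trivial: both sides are zero, since $\bfY_{1,\ell}=\bfY=\bfY_1^\star$.
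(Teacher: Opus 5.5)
Your proposal is correct and matches the paper's proof: subtract the two explicit expressions so that $\bfY$ cancels, take Frobenius norms, and apply the triangle inequality, with each summand equal to $G_{k',\ell-1}$ by definition. Your bookkeeping remarks are harmless, though the inactive-predecessor case never arises: $\bfY_{k,\ell}$ is only formed when $k\le\ell$, so every $k'<k$ satisfies $k'\le\ell-1$ and was already active at round $\ell-1$.
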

\begin{proof}
$\bfY_{k,\ell} - \bfY_k^\star = \sum_{k'<k}(\bfb_{k'}^\star\bfa_{k'}^{\star\top} - \bfb_{k',\ell-1}\bfa_{k',\ell-1}^\top)\bfX$.
Taking Frobenius norms: $\norm{\bfY_{k,\ell} - \bfY_k^\star}_F \leq \sum_{k'<k} G_{k',\ell-1}$.
\end{proof}

\begin{lemma}[Deflation Mismatch Bound]\label{lem:B_bound}
Under \cref{asump:gap}, define $C_k := 3\sigma_k^\star/\calT_k^\star + 1$.
If $\|\bfY_{k,\ell} - \bfY_k^\star\|_2 < \frac{1}{2}\min_{j>k}|\sigma_k^\star - \sigma_j^\star|$, then $B_{k,\ell} \leq C_k\norm{\bfY_{k,\ell} - \bfY_k^\star}_F$.
\end{lemma}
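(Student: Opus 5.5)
Set $\bfDelta := \bfY_{k,\ell} - \bfY_k^\star$. The plan is to compare the top singular triplet $(\sigma_{1_k,\ell},\bfu_{1_k,\ell},\bfv_{1_k,\ell})$ of $\bfY_{k,\ell}$ with the top triplet $(\sigma_k^\star,\bfu_k^\star,\bfv_k^\star)$ of $\bfY_k^\star$. The two rank-1 matrices are $\bar{\bfb}_{k,\ell}\bar{\bfa}_{k,\ell}^\top\bfX = \sigma_{1_k,\ell}\bfu_{1_k,\ell}\bfv_{1_k,\ell}^\top$ and $\bfb_k^\star\bfa_k^{\star\top}\bfX = \sigma_k^\star\bfu_k^\star\bfv_k^{\star\top}$. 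We split their difference by the triangle inequality as
\[
\norm{\sigma_{1_k,\ell}\bfu_{1_k,\ell}\bfv_{1_k,\ell}^\top - \sigma_k^\star\bfu_k^\star\bfv_k^{\star\top}}_F \leq |\sigma_{1_k,\ell}-\sigma_k^\star| + \sigma_k^\star\norm{\bfu_{1_k,\ell}\bfv_{1_k,\ell}^\top - \bfu_k^\star\bfv_k^{\star\top}}_F .
\]
This uses that $\bfu_{1_k,\ell}\bfv_{1_k,\ell}^\top$ has unit Frobenius norm. The first term is at most $\norm{\bfDelta}_2 \leq \norm{\bfDelta}_F$ by Weyl (\cref{thm:weyl}). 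The real work is the second term.

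First we make sure the top singular value of $\bfY_k^\star$ is $\sigma_k^\star$ and is isolated. In the clean target the first $k-1$ components have been removed exactly, so the remaining singular values are $\sigma_k^\star > \sigma_{k+1}^\star > \cdots$. By Weyl, every $\tilde\sigma_j$ with $j\ge 2$ of $\bfY_{k,\ell}$ lies within $\norm{\bfDelta}_2$ of $\sigma_{k+j-1}^\star$. The hypothesis $\norm{\bfDelta}_2 < \tfrac12\min_{j>k}|\sigma_k^\star-\sigma_j^\star|$ then gives $\min_{j\ne1}|\sigma_k^\star-\tilde\sigma_j| \geq \tfrac12\min_{j>k}|\sigma_k^\star-\sigma_j^\star|$. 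The Wedin denominator is therefore $\delta \geq \tfrac12\calT_k^\star$ (the $\sigma_k^\star$ entry of the min is already covered by the definition of $\calT_k^\star$). The same Weyl argument shows the top triplet of $\bfY_{k,\ell}$ is unique, so the ideal fit is well defined.

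Next we apply Wedin (\cref{thm:wedin}) with $\bfM=\bfY_k^\star$ and $\tilde\bfM=\bfY_{k,\ell}$. The numerator satisfies $\|\bfDelta^\top\bfu_k^\star\|^2+\|\bfDelta\bfv_k^\star\|^2 \le 2\norm{\bfDelta}_2^2$, and more sharply it is at most $\norm{\bfDelta}_F^2$ after accounting for the cross term. This yields $\sin^2\theta_u+\sin^2\theta_v \le c\norm{\bfDelta}_F^2/(\calT_k^\star)^2$. We choose signs so that $\bfu_{1_k,\ell}^\top\bfu_k^\star\ge0$; flipping both singular vectors leaves the outer product unchanged, so this is harmless. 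Then we use the standard bound
\[
\norm{\bfu\bfv^\top-\bfu'\bfv'^\top}_F \le \norm{\bfu-\bfu'}+\norm{\bfv-\bfv'} \le \sqrt2\,(\sin\theta_u+\sin\theta_v)\le 2\sqrt{\sin^2\theta_u+\sin^2\theta_v}.
\]
This bounds the second term by roughly $2\sigma_k^\star\cdot\sqrt{c}\,\norm{\bfDelta}_F/\calT_k^\star$.

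The main obstacle is bookkeeping the constants so they land exactly at $3\sigma_k^\star/\calT_k^\star$. There are two issues. One is the sign alignment: $\bfu$ and $\bfv$ must be aligned simultaneously, which holds because $\sigma_{1_k,\ell}\bfu^\top\bfY_{k,\ell}\bfv>0$ pins the relative sign, though the case where $\bfv$ ends up anti-aligned needs care. The other is the factor of two lost in $\delta\ge\calT_k^\star/2$ against the $\sqrt2$ factors. If the crude chain produces something like $2\sqrt2\,\sigma_k^\star/\calT_k^\star \le 3\sigma_k^\star/\calT_k^\star$, the claim follows. If it is looser, we would instead bound the rank-1 difference directly through $\norm{\bfu\bfv^\top-\bfu'\bfv'^\top}_F^2 = 2-2(\bfu^\top\bfu')(\bfv^\top\bfv') \le 2(\sin^2\theta_u+\sin^2\theta_v)$. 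That gives the factor $\sqrt2\cdot\sqrt{2}/\,$(gap$/2$)$\cdot$ and requires tracking whether Wedin's numerator is $\norm{\bfDelta}_F^2$ rather than $2\norm{\bfDelta}_2^2$. Adding the Weyl term then gives $B_{k,\ell}\le (3\sigma_k^\star/\calT_k^\star+1)\norm{\bfDelta}_F = C_k\norm{\bfY_{k,\ell}-\bfY_k^\star}_F$.
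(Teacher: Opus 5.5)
Your route is the paper's route: the same scale/direction split, Weyl for $|\sigma_{1_k,\ell}-\sigma_k^\star|$, and Wedin for the singular vectors. Your Wedin denominator $\delta\ge\calT_k^\star/2$, obtained from the cross-gap $|\sigma_k^\star-\tilde\sigma_j|$, is handled correctly.

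\textbf{The gap: the constant $3$ is never obtained.} Your sharpening $\|\bfDelta^\top\bfu_k^\star\|_2^2+\|\bfDelta\bfv_k^\star\|_2^2\le\norm{\bfDelta}_F^2$ is false. For $\bfDelta=\rho\,\bfu_k^\star\bfv_k^{\star\top}$ the left side is $2\rho^2$ and the right side is $\rho^2$. Only the bound $2\norm{\bfDelta}_2^2$ is available, so Wedin gives $\sin^2\theta_u+\sin^2\theta_v\le 8\norm{\bfDelta}_2^2/(\calT_k^\star)^2$. Even your sharpest step, $\norm{\bfu\bfv^\top-\tilde\bfu\tilde\bfv^\top}_F^2=2-2(\bfu^\top\tilde\bfu)(\bfv^\top\tilde\bfv)\le 2(\sin^2\theta_u+\sin^2\theta_v)$, then bounds the direction term by $4\sigma_k^\star\norm{\bfDelta}_2/\calT_k^\star$. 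The triangle-inequality chain gives $4\sqrt2$.

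So as written, the argument proves $B_{k,\ell}\le(4\sigma_k^\star/\calT_k^\star+1)\norm{\bfDelta}_F$, not the stated $C_k$. Do not look to the paper for the missing factor. Its proof takes exactly this route and reaches $2\sqrt2<3$ only because its Wedin step records $2\norm{\bfDelta}_F^2/(\calT_k^\star)^2$, whereas $\delta\ge\calT_k^\star/2$ with numerator up to $2\norm{\bfDelta}_2^2$ gives $8\norm{\bfDelta}_2^2/(\calT_k^\star)^2$. You can either state the lemma with $4$ in place of $3$, or supply a genuinely sharper perturbation estimate than the stated Wedin theorem. Using $4$ is essentially harmless downstream, since $C_k$ enters only through the warm-up rounds $s_k$, and only logarithmically.

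\textbf{The anti-alignment case.} You leave this open, and it is a real but fixable gap: $2-2(\bfu^\top\tilde\bfu)(\bfv^\top\tilde\bfv)$ is small only when the product is positive. Write $\bfu=\bfu_k^\star$, $\bfv=\bfv_k^\star$, and let $\tilde\bfu,\tilde\bfv$ be the top singular vectors of $\bfY_{k,\ell}$. Decompose $\tilde\bfu=c_1\bfu+s_1\mathbf{x}$ and $\tilde\bfv=c_2\bfv+s_2\mathbf{y}$ with unit $\mathbf{x}\perp\bfu$, $\mathbf{y}\perp\bfv$.
\begin{itemize}
\item Because $\bfu^\top\bfY_k^\star=\sigma_k^\star\bfv^\top$ and $\bfY_k^\star\bfv=\sigma_k^\star\bfu$, the cross terms vanish: $\tilde\bfu^\top\bfY_k^\star\tilde\bfv=c_1c_2\sigma_k^\star+s_1s_2\,\mathbf{x}^\top\bfY_k^\star\mathbf{y}$, and $|\mathbf{x}^\top\bfY_k^\star\mathbf{y}|\le\sigma_2(\bfY_k^\star)$.
\item If $c_1c_2\le 0$, then $\sigma_{1_k,\ell}=\tilde\bfu^\top\bfY_{k,\ell}\tilde\bfv\le\sigma_2(\bfY_k^\star)+\norm{\bfDelta}_2$.
\item Weyl gives $\sigma_{1_k,\ell}\ge\sigma_k^\star-\norm{\bfDelta}_2$, so $\sigma_k^\star-\sigma_2(\bfY_k^\star)\le 2\norm{\bfDelta}_2$. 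This contradicts the hypothesis $\norm{\bfDelta}_2<\tfrac12\min_{j>k}|\sigma_k^\star-\sigma_j^\star|$.
\end{itemize}
Hence $c_1c_2>0$ automatically, and the identity bound applies with no case split.
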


\begin{proof}
The deflation mismatch $B_{k,\ell}$ measures the gap between the best rank-1 fit of the \emph{actual} target $\bfY_{k,\ell}$ and the ideal $k$-th component of the \emph{clean} target $\bfY_k^\star$.
We bound this gap by decomposing it into a singular-vector error and a singular-value error, then control each via the two perturbation results in \cref{app:background}: Wedin's theorem for the singular vectors and Weyl's inequality for the singular values.

\emph{Step 1 (Rank-1 SVD expressions).}
By the Eckart-Young-Mirsky theorem, ideal sequential deflation removes exactly the first $k-1$ singular components of $\bfY$, so $\bfY_k^\star = \sum_{j \geq k}\sigma_j^\star\bfu_j^\star\bfv_j^{\star\top}$ has leading singular triplet $(\sigma_k^\star, \bfu_k^\star, \bfv_k^\star)$ with gap $\calT_k^\star$~\citep[Lemma~1]{VandchaliOneRank2024}.
The hypothesis $\|\bfY_{k,\ell}-\bfY_k^\star\|_2 < \tfrac{1}{2}\calT_k^\star$ together with \cref{asump:gap} (distinct singular values of $\bfY_k^\star$) ensures that $\bfY_{k,\ell}$ has a unique top singular triplet $(\sigma_{1_k,\ell}, \bfu_{1_k,\ell}, \bfv_{1_k,\ell})$, so the ideal fit~\eqref{eq:ideal_fit} is well-defined.
By definition of the ideal fit and the ideal component,
\begin{equation}\label{eq:svd_expressions}
\bar{\bfb}_{k,\ell}\bar{\bfa}_{k,\ell}^\top\bfX = \sigma_{1_k,\ell}\bfu_{1_k,\ell}\bfv_{1_k,\ell}^\top, \qquad \bfb_k^\star\bfa_k^{\star\top}\bfX = \sigma_k^\star\bfu_k^\star\bfv_k^{\star\top}.
\end{equation}

%\textcolor{orange}{Jasper: the following decomposition of $B_{k,\ell}$ seems unclear. An explicit decomposition of $B_{k,\ell}$ is needed (why into the singular vector difference and the singular value difference.}

\emph{Step 2 (Decomposition into singular-vector and singular-value errors).}
Using~\eqref{eq:svd_expressions} and the definition of $B_{k,\ell}$ in~\eqref{eq:def_B}, we add and subtract $\sigma_k^\star\bfu_{1_k,\ell}\bfv_{1_k,\ell}^\top$ and apply the triangle inequality:
\begin{align}
B_{k,\ell}
&= \norm{\sigma_{1_k,\ell}\bfu_{1_k,\ell}\bfv_{1_k,\ell}^\top - \sigma_k^\star\bfu_k^\star\bfv_k^{\star\top}}_F \nonumber\\
&= \norm{(\sigma_{1_k,\ell} - \sigma_k^\star)\bfu_{1_k,\ell}\bfv_{1_k,\ell}^\top + \sigma_k^\star(\bfu_{1_k,\ell}\bfv_{1_k,\ell}^\top - \bfu_k^\star\bfv_k^{\star\top})}_F \nonumber\\
&\leq \underbrace{|\sigma_k^\star - \sigma_{1_k,\ell}|}_{\text{singular-value error}} + \sigma_k^\star \underbrace{\norm{\bfu_k^\star\bfv_k^{\star\top} - \bfu_{1_k,\ell}\bfv_{1_k,\ell}^\top}_F}_{\text{singular-vector error}}, \label{eq:B_decomp}
\end{align}
using $\norm{\bfu_{1_k,\ell}\bfv_{1_k,\ell}^\top}_F = 1$.
The decomposition is natural because the actual and ideal objects in~\eqref{eq:svd_expressions} are each a scalar times a unit-Frobenius rank-1 term, so the discrepancy decomposes cleanly into a scale mismatch and a direction mismatch.

\emph{Step 3 (Singular-value error via Weyl).}
By \cref{thm:weyl}, $|\sigma_k^\star - \sigma_{1_k,\ell}| \leq \norm{\bfY_k^\star - \bfY_{k,\ell}}_2 \leq \norm{\bfY_k^\star - \bfY_{k,\ell}}_F$.

\emph{Step 4 (Singular-vector error via Wedin).}
Insert $\pm\,\bfu_k^\star\bfv_{1_k,\ell}^\top$ and apply the triangle inequality:
\begin{equation}\label{eq:uv_split}
\norm{\bfu_k^\star\bfv_k^{\star\top} - \bfu_{1_k,\ell}\bfv_{1_k,\ell}^\top}_F
\leq \norm{\bfv_k^\star - \bfv_{1_k,\ell}}_2 + \norm{\bfu_k^\star - \bfu_{1_k,\ell}}_2
\leq \sqrt{2}\sqrt{\sin^2\alpha + \sin^2\beta},
\end{equation}
where $\alpha := \angle(\bfu_k^\star, \bfu_{1_k,\ell})$, $\beta := \angle(\bfv_k^\star, \bfv_{1_k,\ell})$, and we used $\norm{\bfu-\tilde{\bfu}}_2 \leq \sqrt{2}\sin\angle(\bfu,\tilde{\bfu})$ for unit vectors.
To apply Wedin's theorem (\cref{thm:wedin}) we first verify the gap condition: by Weyl's inequality, $|\sigma_j(\bfY_{k,\ell}) - \sigma_j(\bfY_k^\star)| \leq \norm{\bfY_{k,\ell} - \bfY_k^\star}_2$ for every $j$, which combined with the hypothesis $\norm{\bfY_{k,\ell} - \bfY_k^\star}_2 < \tfrac{1}{2}\min_{j>k}|\sigma_k^\star - \sigma_j^\star|$ yields
$\delta_k := \min\{\min_{j\geq 2}|\sigma_1(\bfY_{k,\ell}) - \sigma_j(\bfY_{k,\ell})|, \sigma_1(\bfY_{k,\ell})\} \geq \calT_k^\star/2 > 0$.
\cref{thm:wedin} then gives $\sin^2\alpha + \sin^2\beta \leq 2\norm{\bfY_k^\star - \bfY_{k,\ell}}_F^2/(\calT_k^\star)^2$.
Plugging back into~\eqref{eq:uv_split} and using $2\sqrt{2} < 3$,
\begin{equation}\label{eq:uv_bound}
\norm{\bfu_k^\star\bfv_k^{\star\top} - \bfu_{1_k,\ell}\bfv_{1_k,\ell}^\top}_F \leq \frac{2\sqrt{2}}{\calT_k^\star}\norm{\bfY_k^\star - \bfY_{k,\ell}}_F \leq \frac{3}{\calT_k^\star}\norm{\bfY_k^\star - \bfY_{k,\ell}}_F.
\end{equation}

\emph{Step 5 (Combine).}
Substituting Steps 3 and~\eqref{eq:uv_bound} into~\eqref{eq:B_decomp}:
\begin{equation}
B_{k,\ell} \leq \frac{3\sigma_k^\star}{\calT_k^\star}\norm{\bfY_k^\star - \bfY_{k,\ell}}_F + \norm{\bfY_k^\star - \bfY_{k,\ell}}_F = C_k\norm{\bfY_k^\star - \bfY_{k,\ell}}_F, \quad C_k := \tfrac{3\sigma_k^\star}{\calT_k^\star} + 1.
\end{equation}
\end{proof}

\subsection{Proof of Theorem~\ref{thm:parallel_convergence}}\label{app:proof_main}

For the reader's convenience, we restate the theorem before giving the proof.

\textbf{\cref{thm:parallel_convergence} (Restated).}
\emph{Under \cref{asump:contraction,asump:gap,asump:bounded_recovery}, define $m_1 = \calF_1$ and $m_k = \max\{\calF_k,\; \tfrac{1}{k} + \tfrac{k-1}{k}m_{k-1}\}$ for $k \geq 2$, and let $s_1 = 1$ with $s_{k+1}$ chosen via~\eqref{eq:sk_condition}. Then for all $k \in [r]$ and all $\ell \geq s_k - 1$,
$G_{k,\ell} \leq 3\mathcal{R}_k(\ell - s_k + 2)\,m_k^{\ell - s_k + 1}.$}

\begin{proof}
We proceed by strong induction on the component index $k$.
We first establish a coupled \emph{triple recursion} on the numerical error, deflation mismatch, and total error; then construct upper-bounding \emph{surrogate sequences} that admit closed-form decay; finally, verify the \emph{five warm-up conditions} on $s_k$ that make the surrogates valid. To start, let $s_k$ satisfy the property that $\norm{\bfY_{k,\ell} - \bfY_k^\star}_F \leq \sigma_1^\star$.

\paragraph{Base case ($k=1$).}\label{sec:proof_base}
$\bfY_{1,\ell} = \bfY$ for all $\ell$, so the first deflation target is exact and $B_{1,\ell} = 0$.
Then $G_{1,\ell} = D_{1,\ell} \leq \calF_1^{\ell}D_{1,0} \leq \mathcal{R}_1\calF_1^{\ell}$ by \hl{\cref{asump:contraction} and \cref{lem:bounded_recovery}}.
Since $m_1 = \calF_1$ and $s_1 = 1$, $G_{1,\ell} \leq \mathcal{R}_1 m_1^{\ell} \leq 3\mathcal{R}_1(\ell+1)m_1^{\ell}$, which satisfies~\eqref{eq:main_bound}.
The tighter factor $1$ for the base case is loosened to $3$ once deflation enters at $k\geq 2$, because the surrogate initialization $\hat{G}_{k,s_k-1} = D_{k,s_k-1} + \hat{B}_{k,s_k-1} + \hat{B}_{k,s_k-2} \leq 3\mathcal{R}_k$ via \hl{\cref{lem:bounded_recovery}}.

\paragraph{Inductive step: triple recursion.}
Fix $k\geq 2$ and assume~\eqref{eq:main_bound} holds for every $k'<k$.
We collect the three error recursions used below.

\emph{(i) Numerical error.}
\cref{lem:D_recurrence} gives, for $\ell \geq s_k$,
\begin{equation}\label{eq:triple_D}
D_{k,\ell} \leq \calF_k\bigl(D_{k,\ell-1} + B_{k,\ell} + B_{k,\ell-1}\bigr).
\end{equation}

\emph{(ii) Deflation mismatch.}
Combining \cref{lem:Y_mismatch} with \cref{lem:B_bound},
\begin{equation}\label{eq:triple_B}
B_{k,\ell} \leq C_k\norm{\bfY_{k,\ell}-\bfY_k^\star}_F \leq C_k\sum_{k'<k} G_{k',\ell-1}.
\end{equation}
The key structural observation is that $B_{k,\ell}$ depends only on the \emph{earlier} components' total errors $G_{k',\ell-1}$ with $k'<k$: predecessor convergence automatically forces $B_{k,\ell}\to 0$.

\emph{(iii) Total error.}
Unrolling~\eqref{eq:triple_D} from round $s_k-1$ up to round $\ell$ and using $G_{k,\ell} \leq D_{k,\ell} + B_{k,\ell}$,
\begin{equation}\label{eq:triple_G}
G_{k,\ell} \leq \calF_k^{\ell-s_k+1}D_{k,s_k-1} + \sum_{\ell'=s_k-1}^{\ell-1}\calF_k^{\ell-\ell'}\bigl(B_{k,\ell'}+B_{k,\ell'-1}\bigr) + B_{k,\ell}.
\end{equation}
The three recursions~\eqref{eq:triple_D}--\eqref{eq:triple_G} are the natural triple-recursion structure for parallel deflation, with the mismatch bound supplied by our Wedin-based \cref{lem:B_bound}.

\paragraph{Surrogate sequences.}
Because the right-hand sides of~\eqref{eq:triple_B}--\eqref{eq:triple_G} are coupled through $\{G_{k',\cdot}\}_{k'<k}$, we replace the actual sequences with explicit upper-bounding surrogates.
Let $\hat{s}_k$ denote the earliest round at which the Wedin gap condition in \cref{lem:B_bound} holds; $\hat{s}_k \leq s_k$.
Following \cref{lem:surrogate}, define the \emph{surrogate mismatch} and \emph{surrogate total error} piecewise:
\begin{equation}\label{eq:surrogate_piecewise}
\hat{B}_{k,\ell} = \begin{cases} \mathcal{R}_k & \text{if } \ell < \hat{s}_k, \\ \min\bigl\{\mathcal{R}_k,\; m_{k-1}^{\ell-\hat{s}_k}(\ell-\hat{s}_k+1)\,\hat{B}_{k,\hat{s}_k}\bigr\} & \text{if } \ell \geq \hat{s}_k, \end{cases}
\end{equation}
with boundary value $\hat{B}_{k,\hat{s}_k} := C_k\sum_{k'<k}\hat{G}_{k',\hat{s}_k-1}$, and
\begin{equation}\label{eq:surrogate_G_piecewise}
\hat{G}_{k,\ell} = \begin{cases} \hat{G}_{k,s_k-1} & \text{if } \ell < s_k, \\ m_k^{\ell-s_k+1}(\ell-s_k+2)\,\hat{G}_{k,s_k-1} & \text{if } \ell \geq s_k, \end{cases}
\end{equation}
with initialization $\hat{G}_{k,s_k-1} = D_{k,s_k-1} + \hat{B}_{k,s_k-1} + \hat{B}_{k,s_k-2}$.
Note $\hat{G}_{k,s_k-1} \leq 3\mathcal{R}_k$ by \hl{\cref{lem:bounded_recovery}} and the $\hat{B}_{k,\cdot} \leq \mathcal{R}_k$ ceiling in~\eqref{eq:surrogate_piecewise}.

By \cref{lem:surrogate}, applying the surrogate domination directly,
\[
G_{k,\ell} \leq \hat{G}_{k,\ell} \leq m_k^{\ell-s_k+1}(\ell-s_k+2)\,\hat{G}_{k,s_k-1} \leq 3\mathcal{R}_k(\ell-s_k+2)\,m_k^{\ell-s_k+1}
\quad \text{for all } \ell \geq s_k.
\]
It remains to exhibit the conditions on $s_{k+1}$ under which \cref{lem:surrogate} applies.

\paragraph{Conditions arising in the proof of \cref{lem:surrogate}.}
The proof of \cref{lem:surrogate} (given below) requires five inequalities; we collect them here so that the present theorem can secure them via the choice of $s_{k+1}$.
For each condition we identify the proof step at which it is invoked.
\begin{enumerate}[label=\textbf{(C\arabic*)},leftmargin=*,nosep]
\item\label{cond:fold} \textbf{Predecessor envelope folding (per-predecessor form).} For every $k' \leq k$,
\begin{equation*}
m_{k'}^{\hat{s}_{k+1} - s_{k'}}(\hat{s}_{k+1} - s_{k'} + 1) \;\leq\; \tfrac{1}{k}.
\end{equation*}
\emph{Origin:} used in the bound on $B_{k+1,\ell}$ in \cref{lem:surrogate} (case $\ell \geq \hat{s}_{k+1}$) to fold the polynomial factor $(\hat{s}_{k+1} - s_{k'} + 1)$ inherited from each predecessor surrogate $\hat{G}_{k',\cdot}$ into the target geometric envelope and then average the resulting sum of $k$ terms back to a single $\hat{B}_{k+1,\hat{s}_{k+1}}$ value.
The per-predecessor form (with $m_{k'}$ rather than the global $m_k$) is no looser than the corresponding global statement, since $m_k \geq m_{k'}$ for $k' \leq k$ (proved below).

\item\label{cond:gap} \textbf{Wedin-gap activation}: $\norm{\bfY_{k+1,\ell}-\bfY_{k+1}^\star}_2 < \tfrac{1}{2}\calT_{k+1}^\star$ for all $\ell \geq \hat{s}_{k+1}$, equivalently $\hat{s}_{k+1} \geq s_{k+1}^{\mathrm{gap}}$ where $s_{k+1}^{\mathrm{gap}}$ is the smallest round such that $\sum_{k'\leq k} G_{k',\ell-1} \leq \calT_{k+1}^\star/(2C_{k+1})$.
\emph{Origin:} the hypothesis of \cref{lem:B_bound}; without it, $B_{k+1,\ell}$ has no usable upper bound.

\item\label{cond:gamma} \textbf{Coupling compatibility}: $\gamma_k := \tfrac{1}{k+1} + \tfrac{k}{k+1}m_k \leq m_{k+1}$.
\emph{Origin:} structural; required when invoking \cref{lem:Bhat_geometric} to re-anchor $\hat B_{k+1,\cdot}$ to a pure-geometric envelope at rate $\gamma_k$ and then absorb that rate into $m_{k+1}$.
This holds automatically because $m_{k+1} = \max\{\calF_{k+1}, \gamma_k\}$ by~\eqref{eq:mk_recurrence}.

\item\label{cond:contraction} \textbf{Contraction compatibility}: $\calF_{k+1} \leq m_{k+1}$.
\emph{Origin:} used to absorb the factor $\calF_{k+1}$ from \cref{lem:D_recurrence} into the target rate $m_{k+1}$ when bounding $D_{k+1,\ell}$.
This holds automatically because $m_{k+1} = \max\{\calF_{k+1}, \gamma_k\}$ by~\eqref{eq:mk_recurrence}.

\item\label{cond:reanchor} \textbf{Re-anchor warm-up offset}: the offset between the Wedin-activation index $\hat{s}_{k+1}$ and the warm-up index $s_{k+1}$ satisfies both
\begin{equation}\label{eq:reanchor_offset}
s_{k+1} - \hat{s}_{k+1} - 2 \;\geq\; \frac{(k+1) m_k}{1 - m_k} \qquad\text{and}\qquad m_k^{s_{k+1}-\hat s_{k+1}-2}(s_{k+1}-\hat s_{k+1}-1) \;\leq\; 1.
\end{equation}
\emph{Origin:} the two warm-up requirements of \cref{lem:Bhat_geometric} (with the second instance $s = s_{k+1}-2$ giving the tightest case): together they convert the linear-times-geometric form $m_k^{\ell-\hat s_{k+1}}(\ell-\hat s_{k+1}+1)\hat B_{k+1,\hat s_{k+1}}$ into the pure-geometric form $\gamma_k^{\ell-s_{k+1}+1}\hat B_{k+1,s_{k+1}-1}$ used to majorize the forcing sum in the unrolled $D$-recurrence at rate $m_{k+1}$ without incurring an additional polynomial factor.
\end{enumerate}
Conditions \ref{cond:gamma} and \ref{cond:contraction} are automatic structural facts (no constraint on $s_{k+1}$); \ref{cond:fold}, \ref{cond:gap}, and \ref{cond:reanchor} translate, via \cref{lem:lambert}(ii) (for the first two) or directly~\eqref{eq:reanchor_offset} (for the third), into explicit per-predecessor lower bounds on $s_{k+1}$.
Before proceeding, record once that $m_k$ is non-decreasing in $k$: by~\eqref{eq:mk_recurrence}, $\tfrac{1}{k} + \tfrac{k-1}{k}m_{k-1}$ is a convex combination of $1$ and $m_{k-1}$, hence $\geq m_{k-1}$, so $m_k = \max\{\calF_k,\,\gamma_{k-1}\} \geq m_{k-1}$.

\paragraph{Deriving~\eqref{eq:sk_condition} from \ref{cond:fold} and \ref{cond:gap}.}
We compute the per-predecessor lower bound on $s_{k+1}$ that simultaneously secures the two non-trivial conditions.

\emph{From \ref{cond:fold}.}
For each $k' \leq k$, we seek the smallest $\hat{s}_{k+1}$ such that $m_{k'}^{\hat{s}_{k+1}-s_{k'}}(\hat{s}_{k+1}-s_{k'}+1) \leq 1/k$.
Applying \cref{lem:lambert}(ii) to $g(x) = m_{k'}^x(x+1)$ with $x = \hat{s}_{k+1} - s_{k'}$ and $\epsilon = 1/k$ yields $\hat s_{k+1} - s_{k'} \geq \tfrac{1}{\log m_{k'}}W_{-1}(\tfrac{m_{k'}\log m_{k'}}{k}) - 1$.
Using the wrapper $\What(a) := \max\{1, -W_{-1}(-a)\}$, the trailing $-1$ is absorbed (since $\What(a) \geq 1$ for $a \in (0, 1/e)$ and the wrapper bounds $-W_{-1}(-a) - 1 \leq \What(a) - 1 + 1 = \What(a)$ once we pass to the inverse-sign convention $\epsilon \mapsto -\epsilon m \log m$):
\begin{equation}\label{eq:fold_bound}
\hat{s}_{k+1} - s_{k'} \;\geq\; \frac{\What\!\left(\tfrac{m_{k'}|\log m_{k'}|}{k}\right)}{\log(1/m_{k'})}, \qquad \forall k' \leq k.
\end{equation}

\emph{From \ref{cond:gap}.}
By \cref{lem:Y_mismatch} and the inductive hypothesis $G_{k',\ell-1} \leq 3\mathcal{R}_{k'}(\ell-s_{k'}+1)m_{k'}^{\ell-s_{k'}}$,
\begin{equation}
\norm{\bfY_{k+1,\ell}-\bfY_{k+1}^\star}_F \leq \sum_{k'\leq k} G_{k',\ell-1} \leq 3\sum_{k'\leq k}\mathcal{R}_{k'}(\ell-s_{k'}+1)m_{k'}^{\ell-s_{k'}}.
\end{equation}
A sufficient condition for the Wedin gap is that each summand is at most $\calT_{k+1}^\star/(6kC_{k+1})$, i.e., for each $k' \leq k$,
$(\ell - s_{k'}+1)m_{k'}^{\ell-s_{k'}} \leq \tfrac{\calT_{k+1}^\star}{6k\mathcal{R}_{k'}C_{k+1}}$ (the constant $6 = 3\times 2$ comes from $\hat G_{k',\cdot} \leq 3\mathcal{R}_{k'}\cdot m_{k'}^{\cdot}(\cdot+1)$ post-warm-up and the factor-of-two slack in \cref{lem:B_bound}'s gap hypothesis).
Applying \cref{lem:lambert}(ii) with $\epsilon_{k'} := \tfrac{\calT_{k+1}^\star}{6k\mathcal{R}_{k'}C_{k+1}}$ and the same $\What$-absorption,
\begin{equation}\label{eq:gap_bound}
\ell - s_{k'} \;\geq\; \frac{\What\!\left(\epsilon_{k'}\, m_{k'}|\log m_{k'}|\right)}{\log(1/m_{k'})}, \qquad \forall k' \leq k.
\end{equation}

\emph{From \ref{cond:reanchor}.}
The first inequality in~\eqref{eq:reanchor_offset} is an offset between $s_{k+1}$ and $\hat s_{k+1}$ that does not depend on $k'$; it contributes the additive term $(k+1)m_k/(1-m_k) + 2$.
The second inequality $m_k^{s_{k+1}-\hat s_{k+1}-2}(s_{k+1}-\hat s_{k+1}-1) \le 1$ is a Lambert-$W$ condition of the same form as~\eqref{eq:fold_bound} (applied to $g(x) = m_k^x(x+1)$ with $\epsilon = 1$), so it contributes an additional offset $\What(m_k|\log m_k|)/\log(1/m_k)$ between $\hat s_{k+1}$ and $s_{k+1}-2$.

\emph{Combining.}
The bounds~\eqref{eq:fold_bound} and~\eqref{eq:gap_bound} are per-predecessor lower bounds on the gap from anchor $s_{k'}$, while~\eqref{eq:reanchor_offset} adds a uniform offset between $\hat s_{k+1}$ and $s_{k+1}$, so a sufficient choice of $s_{k+1}$ is
\begin{equation}\label{eq:sk_condition}
\begin{aligned}
s_{k+1} \;\geq\;& \max_{k'\leq k}\!\Biggl\{ s_{k'} \;+\; \underbrace{\frac{\What\!\left(\tfrac{m_{k'}|\log m_{k'}|}{k}\right)}{\log(1/m_{k'})}}_{\text{from \ref{cond:fold},~\eqref{eq:fold_bound}}} \;+\; \underbrace{\frac{\What\!\left(\tfrac{\calT_{k+1}^\star\,m_{k'}|\log m_{k'}|}{6k\mathcal{R}_{k'}C_{k+1}}\right)}{\log(1/m_{k'})}}_{\text{from \ref{cond:gap},~\eqref{eq:gap_bound}}} \Biggr\}\\
&\;+\; \underbrace{\frac{(k+1) m_k}{1 - m_k} + 2 \;+\; \frac{\What\!\bigl(m_k|\log m_k|\bigr)}{\log(1/m_k)}}_{\text{from \ref{cond:reanchor},~\eqref{eq:reanchor_offset}}}.
\end{aligned}
\end{equation}
By \cref{lem:lambert} the wrapper $\What(\cdot)$ is well-defined and $\What(a) = \calO(\max\{1,\log(1/a)\})$, and the re-anchor offset is bounded for any fixed $m_k < 1$, so~\eqref{eq:sk_condition} produces a finite, polynomial-in-$\log$-and-$1/(1-m_k)$ requirement.

\paragraph{Simplified bound on $s_{k+1}$.}
The three nested $\What$ terms in~\eqref{eq:sk_condition} can be folded into a single logarithmic expression by applying the standard Lambert-$W$ upper bound (sharpening the asymptotic of \cref{lem:lambert})
\begin{equation}\label{eq:lambert_log_bound}
\What(a) \;\leq\; \log(1/a) + \log\log(1/a) + 1 \qquad \text{for } a \in (0, 1/e),
\end{equation}
together with the elementary bound $\log(1/m) \geq 1 - m$ for $m \in (0,1)$ (so that $1/\log(1/m_{k'}) \leq 1/(1-m_{k'})$).
For each predecessor $k' \leq k$, applying~\eqref{eq:lambert_log_bound} to $a = m_{k'}|\log m_{k'}|/k$ gives
\(
\What\!\bigl(\tfrac{m_{k'}|\log m_{k'}|}{k}\bigr) \leq \log\!\bigl(\tfrac{k}{m_{k'}|\log m_{k'}|}\bigr) + \log\log\!\bigl(\tfrac{k}{m_{k'}|\log m_{k'}|}\bigr) + 1,
\)
and the same expansion applied to the \ref{cond:gap}-term $\What$ replaces $k$ by $6kC_{k+1}\mathcal{R}_{k'}/\calT_{k+1}^\star$.
Absorbing the $\log\log$ and $+1$ remainders into a generic $\widetilde\calO(\cdot)$ symbol, \eqref{eq:sk_condition} simplifies to:

\begin{corollary}[Simplified warm-up bound]\label{cor:sk_simplified}
There exist absolute constants $C_1, C_2, C_3 > 0$ such that the per-predecessor lower bound~\eqref{eq:sk_condition} on $s_{k+1}$ is implied by
\begin{equation}\label{eq:sk_simplified}
s_{k+1} \;\geq\; \max_{k'\leq k}\!\left\{ s_{k'} + \frac{C_1\,\log\!\bigl(k\,C_{k+1}\mathcal{R}_{k'}/\calT_{k+1}^\star\bigr) + C_2}{1 - m_{k'}} \right\} \;+\; \frac{(k+1) m_k}{1 - m_k} \;+\; \frac{C_3}{1 - m_k}.
\end{equation}
Equivalently, in $\widetilde\calO$-form (suppressing constants and $\log\log$ factors),
\begin{equation}\label{eq:sk_simplified_otilde}
s_{k+1} \;=\; \max_{k'\leq k}\, s_{k'} \;+\; \widetilde\calO\!\left(\frac{\log\!\bigl(k\,C_{k+1}\mathcal{R}_{k'}/\calT_{k+1}^\star\bigr)}{1 - m_k}\right) \;+\; \frac{(k+1) m_k}{1 - m_k}.
\end{equation}
\end{corollary}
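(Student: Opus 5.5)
The plan is to bound each term of~\eqref{eq:sk_condition} from above by the corresponding term of~\eqref{eq:sk_simplified}. Then any $s_{k+1}$ satisfying~\eqref{eq:sk_simplified} automatically satisfies~\eqref{eq:sk_condition}. The two inputs are~\eqref{eq:lambert_log_bound}, $\What(a)\le \log(1/a)+\log\log(1/a)+1$ for $a\in(0,1/e)$, and $1/\log(1/m)\le 1/(1-m)$ for $m\in(0,1)$. If $a\ge 1/e$, then $\What(a)=1$ directly, and this value is absorbed into the additive constants.

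\emph{Step 1 (the \ref{cond:fold} term).} Take $a=m_{k'}|\log m_{k'}|/k$. Then
\[
\log(1/a)=\log k+\log\bigl(1/(m_{k'}\log(1/m_{k'}))\bigr).
\]
The second summand is the awkward one, because it blows up as $m_{k'}\to 1$ (and as $m_{k'}\to 0$). Here I would use the fact that $m_{k'}\ge \calF_{k'}$, and that for $k'\ge 2$ we have $m_{k'}\ge 1/k'\ge 1/k$. Also $\log(1/m)\ge 1-m$. These give
\[
\log\bigl(1/(m\log(1/m))\bigr)\le \log k+\log\bigl(1/(1-m)\bigr).
\]
The $\log(1/(1-m))$ piece is a logarithmic factor in $1/(1-m)$, which is exactly what $\widetilde\calO$ is allowed to suppress. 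Bounding $\log\log(\cdot)\le \log(\cdot)$ and dividing by $\log(1/m_{k'})\ge 1-m_{k'}$ then yields a term of the form $\bigl(C\log k+C'\bigr)/(1-m_{k'})$, up to polylog factors.

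\emph{Step 2 (the \ref{cond:gap} term).} This term is identical to Step 1 with $k$ replaced by $6kC_{k+1}\mathcal{R}_{k'}/\calT_{k+1}^\star$. Note that this quantity is at least $6k$, since $C_{k+1}\ge 1$ and $\mathcal{R}_{k'}\ge Q\ge 2\sigma_1^\star\ge 2\calT_{k+1}^\star$. Its logarithm is therefore at least $\log k$, and it dominates the Step 1 contribution. Adding Steps 1 and 2 inside the max over $k'$ gives the first braced term of~\eqref{eq:sk_simplified}, with $C_1$ collecting the factor $2$ and $C_2$ collecting $\log 6$ and the $+1$'s.

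\emph{Step 3 (the \ref{cond:reanchor} offset).} The term $(k+1)m_k/(1-m_k)$ is kept verbatim. The remainder $2+\What(m_k|\log m_k|)/\log(1/m_k)$ is handled with the same expansion as in Step 1, now with $k$ replaced by $1$. Using $2\le 2/(1-m_k)$, it is at most $C_3/(1-m_k)$, again modulo a $\log(1/(1-m_k))$ factor.

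\emph{Step 4 (the $\widetilde\calO$ form).} To pass to~\eqref{eq:sk_simplified_otilde}, I would use the monotonicity $m_{k'}\le m_k$ established earlier, which gives $1/(1-m_{k'})\le 1/(1-m_k)$. This replaces the per-predecessor denominators by $1-m_k$, and I would also drop the constants.

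\textbf{Main obstacle.} The hard step is the $\log\bigl(1/(m|\log m|)\bigr)$ term in Step 1. It is not bounded by an absolute constant uniformly in $m$, so either truly absolute $C_i$ need the convention that $\widetilde\calO$ and the constants tolerate $\log(1/(1-m))$ factors, or one must invoke $m_{k'}\ge 1/k$ to control the small-$m$ end. For $k'=1$, where $m_1=\calF_1$ could be tiny, I would check separately that $\What(a)\le \log(1/a)+\log\log(1/a)+1$ still gives at most $\log k+\log(1/\calF_1)$ divided by $\log(1/\calF_1)$, which is $O(\log k+1)$. So the small-$m$ case is actually benign once we divide by $\log(1/m)$.
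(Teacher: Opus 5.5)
Your route is the paper's route: bound each $\What$ term of \eqref{eq:sk_condition} via $\What(a)\le\log(1/a)+\log\log(1/a)+1$, replace $\log(1/m)$ by $1-m$, and use $m_{k'}\le m_k$ to pass to \eqref{eq:sk_simplified_otilde}. Your Step~2 lower bound on the argument of the logarithm is what justifies folding $\log k$ and the additive constants into $C_1\log(\cdot)$; indeed $kC_{k+1}\mathcal{R}_{k'}/\calT^\star_{k+1}>8k$, since $C_{k+1}\ge 4$ and $\mathcal{R}_{k'}>2\calT^\star_{k+1}$. The paper leaves this point implicit.

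The obstacle you isolate, however, is not a matter of convention. \eqref{eq:sk_simplified} with absolute $C_1,C_2,C_3$ is false, so no argument can prove it. Take $k=1$, fix the spectrum and $Q$, and let $m:=m_1=\calF_1\uparrow 1$. The right-hand side of \eqref{eq:sk_simplified} is then $s_1+\calO\bigl(1/(1-m)\bigr)$, with the hidden constant depending only on the fixed data. On the other hand, $t=-W_{-1}(-a)$ solves $t-\log t=\log(1/a)$ with $t\ge 1$, so $\What(a)\ge\log(1/a)$. Hence the re-anchor summand of \eqref{eq:sk_condition} alone is at least $\log\frac{1}{m|\log m|}\big/\log\frac1m\sim\log\frac{1}{1-m}\big/(1-m)$, and the fold and gap summands grow the same way. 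For $m$ close to $1$, an $s_2$ at the threshold of \eqref{eq:sk_simplified} therefore violates \eqref{eq:sk_condition}. The paper's proof skips exactly this point: it asserts that the $\log\bigl(k/(m_{k'}|\log m_{k'}|)\bigr)$ piece is absorbed into $C_1\log(kC_{k+1}\mathcal{R}_{k'}/\calT^\star_{k+1})$, and that the re-anchor $\What$ contributes a constant $C_3$.

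What your steps do prove, uniformly in $m\in(0,1)$, is an amended statement. In it, the numerators of \eqref{eq:sk_simplified} carry an extra $C'\log\frac{1}{1-m_{k'}}$, and the re-anchor term carries $C'\log\frac{1}{1-m_k}$. To get uniformity, split at $m=1/e$:
\begin{itemize}
\item For $m\le 1/e$, divide by $\log(1/m)\ge 1$ as in your last paragraph. This also repairs Step~3, where the device $m\ge 1/k$ from Step~1 becomes vacuous once $k$ is replaced by $1$.
\item For $m>1/e$, use $\log\frac{1}{\log(1/m)}\le\log\frac{1}{1-m}$.
\end{itemize}
Correspondingly, \eqref{eq:sk_simplified_otilde} holds only if $\widetilde\calO$ hides polylogarithmic factors in $1/(1-m_k)$, not merely the $\log\log$ factors the statement mentions.

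So the ``either/or'' in your final paragraph should be resolved by amending the statement. The bound $m_{k'}\ge 1/k$ only handles the small-$m$ end, which is harmless; the failure is at $m\to1$. If you want constants free of $m$, you need an extra hypothesis such as $\calF_j\le\bar\calF<1$. Induction on \eqref{eq:mk_recurrence} then gives $1-m_k\ge\frac1k\min_{j\le k}(1-\calF_j)$, which turns $\log\frac{1}{1-m_k}$ into $\log k+\log\frac{1}{1-\bar\calF}$.
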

\begin{proof}
Each $\What$ term in~\eqref{eq:sk_condition} is bounded via~\eqref{eq:lambert_log_bound}. The denominator $\log(1/m_{k'})$ is replaced by its lower bound $1 - m_{k'}$, which is uniformly bounded above by $1 - m_k$ since $m_{k'} \leq m_k$ (\cref{eq:mk_recurrence}). The two $\What$ contributions inside the $\max$ each yield a $\log(k/(m_{k'}|\log m_{k'}|))$ piece and a $\log(C_{k+1}\mathcal{R}_{k'}/\calT_{k+1}^\star)$ piece (the latter only from the \ref{cond:gap}-term); both are absorbed into $C_1\log(k\,C_{k+1}\mathcal{R}_{k'}/\calT_{k+1}^\star)$. The remaining $\log\log$ and $+1$ terms are bounded by a constant $C_2$, while the third (re-anchor) $\What$ contributes a constant $C_3$ once $1/\log(1/m_k) \leq 1/(1-m_k)$ is applied. Combining yields~\eqref{eq:sk_simplified}; \eqref{eq:sk_simplified_otilde} is the same statement with constants and $\log\log$ factors hidden.
\end{proof}
The simplified form~\eqref{eq:sk_simplified} is the version referenced in the main-text statement of \cref{thm:parallel_convergence}; it makes explicit the three drivers of the warm-up cost: the predecessor-anchor offset $s_{k'}$, a logarithmic dependence on the spectral-gap-to-magnitude ratio $C_{k+1}\mathcal{R}_{k'}/\calT_{k+1}^\star$, and the geometric $1/(1-m_k)$ slowdown as the effective rate approaches one.

\paragraph{Conclusion.}
With $s_{k+1}$ chosen per~\eqref{eq:sk_condition}, conditions \ref{cond:fold}, \ref{cond:gap}, and \ref{cond:reanchor} are secured; \ref{cond:gamma} and \ref{cond:contraction} are automatic from~\eqref{eq:mk_recurrence}.
\cref{lem:surrogate} then yields $G_{k,\ell} \leq \hat{G}_{k,\ell}$ for all $\ell \geq s_k - 1$, and substituting~\eqref{eq:surrogate_G_piecewise} together with $\hat{G}_{k,s_k-1} \leq 3\mathcal{R}_k$ gives
\[ G_{k,\ell} \leq m_k^{\ell-s_k+1}(\ell-s_k+2)\,\hat{G}_{k,s_k-1} \leq 3\mathcal{R}_k(\ell - s_k + 2)\,m_k^{\ell-s_k+1}, \]
which is the bound~\eqref{eq:main_bound}.
This the proof.
\end{proof}

\subsection{Auxiliary Lemmas}

\begin{lemma}[Lambert-$W$ Bound]\label{lem:lambert}
Let $m \in (0,1)$ and $g(x) = m^x(x+1)$.
\begin{enumerate}[nosep,leftmargin=*]
\item[(i)] If $\epsilon \geq -(em\log m)^{-1}$, then $g(x) \leq \epsilon$ for all $x \geq 0$.
\item[(ii)] If $\epsilon < -(em\log m)^{-1}$, then $g(x) \leq \epsilon$ whenever $x \geq \tfrac{1}{\log m}W_{-1}(\epsilon m\log m) - 1$.
\end{enumerate}
Moreover, the wrapper $\What(a) := \max\{1, -W_{-1}(-a)\}$ satisfies $\What(a) = \calO(\max\{1,\log(1/a)\})$, and admits the explicit upper bound
\(
\What(a) \leq \log(1/a) + \log\log(1/a) + 1
\)
for $a \in (0, 1/e)$.
\end{lemma}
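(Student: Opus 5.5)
The statement is \cref{lem:lambert}, which concerns $g(x)=m^x(x+1)$ with $m\in(0,1)$. The plan is to study $g$ by elementary calculus and then invert the equation $g(x)=\epsilon$ through the Lambert function. Write $\lambda:=\log(1/m)>0$, so that $g(x)=(x+1)e^{-\lambda x}$. Then
\[
g'(x)=e^{-\lambda x}\bigl(1-\lambda(x+1)\bigr),
\]
so $g$ increases on $[0,x^\ast]$ and decreases on $[x^\ast,\infty)$, where $x^\ast=1/\lambda-1$. If $x^\ast\geq 0$, the maximum value is
\[
g(x^\ast)=\frac{1}{\lambda}e^{-1+\lambda}=\frac{1}{e\,m\,\lambda}=-(em\log m)^{-1}.
\]
In every case, $\sup_{x\ge0}g\le -(em\log m)^{-1}$: if $x^\ast<0$, the supremum over $x\ge 0$ is $g(0)=1\le g(x^\ast)$, because $x^\ast$ is the global maximizer on $(-1,\infty)$. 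Part (i) follows immediately from this.

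For (ii), put $y:=x+1$ and solve $y\,m^{y}=\epsilon m$. Setting $z:=y\log m=-\lambda y<0$ turns this into $z e^{z}=\epsilon m\log m$. The right-hand side lies in $(-1/e,0)$ precisely under the hypothesis of (ii), so two real branches exist. The larger root (larger $y$), which lies on the decreasing side of $g$, corresponds to the more negative $z$, i.e.\ to the branch $W_{-1}$. This gives the crossing point
\[
x_\epsilon=\frac{1}{\log m}\,W_{-1}(\epsilon m\log m)-1 .
\]
Because $g$ is decreasing on $[x^\ast,\infty)$ and $x_\epsilon\ge x^\ast$ (since $W_{-1}\le -1$), we get $g(x)\le\epsilon$ for all $x\ge x_\epsilon$. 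I would verify $x_\epsilon\ge x^\ast$ explicitly: the condition is $-W_{-1}/\lambda\ge 1/\lambda$, which is exactly $W_{-1}\le-1$. This branch-selection bookkeeping is the main point requiring care.

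For the wrapper $\What$, the key estimate is an upper bound for $t:=-W_{-1}(-a)$ when $a\in(0,1/e)$. Here $t\ge1$ satisfies $t e^{-t}=a$, i.e.\ $t=\log(1/a)+\log t$. Set $L:=\log(1/a)>1$. I would show $t\le L+\log L+1$ by monotonicity: the map $h(s)=s-\log s$ is increasing for $s\ge1$, and $h(t)=L$. So it suffices to check $h(L+\log L+1)\ge L$, which is equivalent to
\[
\log L+1\ge\log(L+\log L+1).
\]
Exponentiating, this reads $eL\ge L+\log L+1$, i.e.\ $(e-1)L\ge \log L+1$. This holds for $L\ge1$: at $L=1$ it reads $e-1\ge1$, and the derivative comparison $(e-1)\ge 1/L$ holds for $L\ge 1$.

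Since the right-hand side $L+\log L+1$ is at least $1$, the $\max\{1,\cdot\}$ in $\What$ does not spoil the explicit bound on $(0,1/e)$. For $a\ge1/e$, where $W_{-1}$ is undefined or equal to $-1$, I interpret $\What=1$. Combined with the explicit bound, this gives $\What(a)=\calO(\max\{1,\log(1/a)\})$. The only real obstacle is checking the monotone-inversion step, plus minor edge conventions near $a=1/e$ and the case $x^\ast<0$.
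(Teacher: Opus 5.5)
Your proof is correct. For (i) and (ii) it follows the paper's argument. You find the maximizer $x^\ast=1/\log(1/m)-1$ with $g(x^\ast)=-(em\log m)^{-1}$. You then invert $g(x)=\epsilon$ through $z=(x+1)\log m$, $ze^{z}=\epsilon m\log m$, on the $W_{-1}$ branch. You are more explicit than the paper in two places: you verify $x_\epsilon\ge x^\ast$ via $W_{-1}\le -1$, and you spell out the case $x^\ast<0$. You should also state $\epsilon>0$, which both versions need in order to have $\epsilon m\log m\in(-1/e,0)$.

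Where you genuinely differ is the explicit bound on $\hat W$. The paper does not prove it. It cites a ``standard'' inequality $-W_{-1}(-e^{-u-1})\le u+\log u+1$ for $u\ge 1$ and substitutes $u=\log(1/a)-1$. As written that inequality is false. The quantity $t=-W_{-1}(-e^{-u-1})$ satisfies $t=u+1+\log t$ with $t\ge u+1>u$, so in fact $t>u+\log u+1$. Moreover, the restriction $u\ge 1$ would only reach $a\le e^{-2}$.

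Your argument avoids both problems. You note that $t=-W_{-1}(-a)$ solves $t-\log t=L$ with $L=\log(1/a)>1$, that $s\mapsto s-\log s$ increases on $[1,\infty)$, and that $(e-1)L\ge \log L+1$ for $L\ge 1$. This is precisely a proof of the correctly indexed inequality $-W_{-1}(-e^{-u})\le u+\log u+1$ for $u\ge 1$, evaluated at $u=L$. It therefore covers all of $(0,1/e)$ and makes the lemma self-contained.

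It also gives the $\mathcal{O}(\max\{1,\log(1/a)\})$ claim non-asymptotically, for example via $L+\log L+1\le 3L$. The paper's appeal to the expansion $\log(1/a)+\log\log(1/a)+o(1)$ only controls the regime $a\to 0^+$.
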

\begin{proof}
Differentiating $g(x) = m^x(x+1)$ gives $g'(x) = m^x\bigl(1 + (x+1)\log m\bigr)$, so $g$ attains its maximum at $x^\star = (\log(1/m))^{-1} - 1$ with value $g(x^\star) = -(em\log m)^{-1}$.

\emph{Case (i).}
If $\epsilon \geq g(x^\star)$, then $g(x) \leq g(x^\star) \leq \epsilon$ for every $x \geq 0$, and any non-negative $x$ satisfies the bound.

\emph{Case (ii).}
Suppose $\epsilon < g(x^\star)$.
On $(x^\star,\infty)$, $g$ is strictly decreasing, so the equation $g(x) = \epsilon$ has a unique solution $x_\epsilon > x^\star$.
Taking logarithms and substituting $y = (x+1)\log m$ transforms $g(x) = \epsilon$ into $y e^y = \epsilon m\log m$, which is solved by $y = W_{-1}(\epsilon m\log m)$ on the decreasing branch (since $\epsilon m\log m \in (-1/e,0)$).
Hence $x_\epsilon = \tfrac{1}{\log m}W_{-1}(\epsilon m\log m) - 1$, and every $x \geq x_\epsilon$ satisfies $g(x) \leq \epsilon$.

\emph{Asymptotic statement.}
By the standard expansion $-W_{-1}(-a) = \log(1/a) + \log\log(1/a) + o(1)$ as $a \to 0^+$, we get $\What(a) = \calO(\max\{1,\log(1/a)\})$.
The explicit non-asymptotic bound $\What(a) \leq \log(1/a) + \log\log(1/a) + 1$ for $a \in (0,1/e)$ follows from the standard Lambert-$W$ inequality $-W_{-1}(-e^{-u-1}) \leq u + \log u + 1$ for $u \geq 1$ (substitute $u = \log(1/a) - 1$).
\end{proof}

\begin{lemma}[Geometric Re-anchoring of $\hat B$]\label{lem:Bhat_geometric}
Fix $k \geq 1$.
Suppose the surrogate sequence $\hat B_{k+1,\ell}$ is given by~\eqref{eq:surrogate_B} with $\hat B_{k+1,\hat s_{k+1}} \leq \mathcal{R}_{k+1}$ and rate $m_k \in (0,1)$.
If an integer $s$ satisfies $s \geq \hat s_{k+1}$, $m_k^{s-\hat s_{k+1}}(s-\hat s_{k+1}+1) \leq 1$, and
\begin{equation}\label{eq:reanchor_warmup}
s - \hat s_{k+1} \;\geq\; \frac{(k+1) m_k}{1 - m_k},
\end{equation}
then for every $\ell \geq s$,
\begin{equation}\label{eq:Bhat_geometric}
\hat B_{k+1,\ell} \;\leq\; \gamma_k^{\,\ell - s}\, \hat B_{k+1,s}, \qquad \gamma_k = \tfrac{1}{k+1} + \tfrac{k}{k+1}m_k.
\end{equation}
\end{lemma}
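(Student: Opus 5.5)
The plan is to reduce the claim to a one-step ratio bound on the unclipped envelope and then telescope. For $\ell \geq \hat s_{k+1}$ write $x := \ell - \hat s_{k+1}$ and $f(x) := m_k^{x}(x+1)$. By~\eqref{eq:surrogate_piecewise} (the form referenced as~\eqref{eq:surrogate_B}) we then have $\hat B_{k+1,\ell} = \min\{\mathcal{R}_{k+1},\, f(x)\,\hat B_{k+1,\hat s_{k+1}}\}$. The first task is to remove the clipping at the anchor $\ell = s$. With $x_0 := s - \hat s_{k+1} \geq 0$, the hypothesis $f(x_0) \leq 1$ and $\hat B_{k+1,\hat s_{k+1}} \leq \mathcal{R}_{k+1}$ give $f(x_0)\hat B_{k+1,\hat s_{k+1}} \leq \mathcal{R}_{k+1}$. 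Hence the minimum is attained by the envelope term, so $\hat B_{k+1,s} = f(x_0)\,\hat B_{k+1,\hat s_{k+1}}$ exactly.

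The key step is the one-step ratio of the envelope:
\[
\frac{f(x+1)}{f(x)} \;=\; m_k\Bigl(1 + \frac{1}{x+1}\Bigr).
\]
We want this to be at most $\gamma_k = \tfrac{1}{k+1} + \tfrac{k}{k+1}m_k$. Rearranging, this is equivalent to $\frac{m_k}{x+1} \leq \frac{1 - m_k}{k+1}$, that is, $x+1 \geq \frac{(k+1)m_k}{1-m_k}$. For every $x \geq x_0$ this follows from~\eqref{eq:reanchor_warmup}, with a spare $+1$. Therefore $f(x+1) \leq \gamma_k f(x)$ for all $x \geq x_0$. Since $\gamma_k < 1$, $f$ is also nonincreasing on $[x_0,\infty)$.

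Telescoping from $x_0$ to $x = \ell - \hat s_{k+1}$ gives $f(\ell - \hat s_{k+1}) \leq \gamma_k^{\ell - s} f(x_0)$. Combining this with the trivial bound $\min\{\mathcal{R}_{k+1}, y\} \leq y$, for every $\ell \geq s$ we obtain
\[
\hat B_{k+1,\ell} \;\leq\; f(\ell-\hat s_{k+1})\,\hat B_{k+1,\hat s_{k+1}} \;\leq\; \gamma_k^{\ell-s} f(x_0)\,\hat B_{k+1,\hat s_{k+1}} \;=\; \gamma_k^{\ell-s}\,\hat B_{k+1,s},
\]
which is~\eqref{eq:Bhat_geometric}.

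The main obstacle is the $\min$ with $\mathcal{R}_{k+1}$ at the anchor. If clipping were active at $\ell = s$ while the envelope was still large, the right-hand side $\gamma_k^{\ell-s}\hat B_{k+1,s}$ would be anchored at $\mathcal{R}_{k+1}$ rather than at the envelope. The chain above would then break, because the envelope could exceed $\mathcal{R}_{k+1}\gamma_k^{\ell-s}$. This is exactly why the hypothesis $m_k^{s-\hat s_{k+1}}(s-\hat s_{k+1}+1) \leq 1$ is needed. Once clipping is ruled out at $s$, the rest is the elementary ratio computation above.
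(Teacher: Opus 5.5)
Your proof is correct and follows the same route as the paper. Like the paper, you first use $m_k^{s-\hat s_{k+1}}(s-\hat s_{k+1}+1)\le 1$ together with $\hat B_{k+1,\hat s_{k+1}}\le\mathcal{R}_{k+1}$ to show the $\min$-ceiling is inactive at $s$, then write the envelope ratio $m_k^{\ell-s}\frac{\ell-\hat s_{k+1}+1}{s-\hat s_{k+1}+1}$ as a telescoping product of one-step factors, each bounded by $\gamma_k$ via~\eqref{eq:reanchor_warmup}. The only difference is cosmetic: you bound each step by its exact ratio $m_k\frac{x+2}{x+1}$, whereas the paper uses the slightly cruder uniform factor $m_k\frac{s-\hat s_{k+1}+1}{s-\hat s_{k+1}}$, which is where your spare $+1$ comes from.
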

\begin{proof}
At index $s$, the conditions $\hat B_{k+1,\hat s_{k+1}} \leq \mathcal{R}_{k+1}$ and $m_k^{s-\hat s_{k+1}}(s-\hat s_{k+1}+1) \leq 1$ together imply $m_k^{s-\hat s_{k+1}}(s-\hat s_{k+1}+1)\hat B_{k+1,\hat s_{k+1}} \leq \mathcal{R}_{k+1}$, so the $\min\{\cdot,\mathcal{R}_{k+1}\}$ ceiling is inactive at $s$ and
\[
\hat B_{k+1,s} = m_k^{s-\hat s_{k+1}}(s-\hat s_{k+1}+1)\hat B_{k+1,\hat s_{k+1}}, \quad\text{so}\quad \hat B_{k+1,\hat s_{k+1}} = \bigl[m_k^{s-\hat s_{k+1}}(s-\hat s_{k+1}+1)\bigr]^{-1}\hat B_{k+1,s}.
\]
For any $\ell \geq s$, the post-warm-up clause of~\eqref{eq:surrogate_B} gives
\begin{align*}
\hat B_{k+1,\ell}
&\leq m_k^{\ell-\hat s_{k+1}}(\ell-\hat s_{k+1}+1)\hat B_{k+1,\hat s_{k+1}} \\
&= m_k^{\ell-s}\cdot\frac{\ell-\hat s_{k+1}+1}{s-\hat s_{k+1}+1}\,\hat B_{k+1,s} \\
&= m_k^{\ell-s}\,\hat B_{k+1,s}\prod_{\ell'=s+1}^{\ell}\frac{\ell'-\hat s_{k+1}+1}{\ell'-\hat s_{k+1}}
\;\leq\; m_k^{\ell-s}\!\left(\frac{s-\hat s_{k+1}+1}{s-\hat s_{k+1}}\right)^{\ell-s}\hat B_{k+1,s},
\end{align*}
where the last inequality bounds each of the $\ell-s$ factors by its largest value at $\ell' = s+1$ (the ratio $(\ell'-\hat s_{k+1}+1)/(\ell'-\hat s_{k+1})$ is decreasing in $\ell'$).
The warm-up condition~\eqref{eq:reanchor_warmup}, equivalent to $\frac{s-\hat s_{k+1}+1}{s-\hat s_{k+1}} \leq 1 + \frac{1}{(k+1)m_k/(1-m_k)} = \frac{(k+1)m_k + 1 - m_k}{(k+1)m_k} = \frac{km_k + 1}{(k+1)m_k}$, then yields
\[
m_k\cdot\frac{s-\hat s_{k+1}+1}{s-\hat s_{k+1}} \;\leq\; \frac{km_k + 1}{k+1} \;=\; \tfrac{1}{k+1} + \tfrac{k}{k+1}m_k \;=\; \gamma_k,
\]
which combined with the displayed bound gives~\eqref{eq:Bhat_geometric}.
\end{proof}

\begin{lemma}[Surrogate Sequence Domination]\label{lem:surrogate}
Under \cref{asump:contraction,asump:gap,asump:bounded_recovery}, let $\hat{s}_k$ denote the earliest round at which the Wedin gap condition of \cref{lem:B_bound} holds for component $k$, and let $s_k$ be the warm-up round chosen via~\eqref{eq:sk_condition}.
The choice~\eqref{eq:sk_condition} ensures $\hat{s}_k \leq s_k$ for every $k \geq 2$ via the additive offset $(k+1)m_k/(1-m_k) + 2 + \What(m_k|\log m_k|)/\log(1/m_k)$ contributed by~\ref{cond:reanchor}; for indices $j < \hat{s}_k$ we adopt the convention $\hat{B}_{k,j} := \mathcal{R}_k$.
Define the surrogate sequences piecewise:
\begin{align}
\hat{B}_{k,\ell} &= \begin{cases} \mathcal{R}_k & \text{if } \ell < \hat{s}_k, \\ \min\bigl\{\mathcal{R}_k,\; m_{k-1}^{\ell-\hat{s}_k}(\ell-\hat{s}_k+1)\,\hat{B}_{k,\hat{s}_k}\bigr\} & \text{if } \ell \geq \hat{s}_k, \end{cases} \label{eq:surrogate_B}\\
\hat{G}_{k,\ell} &= \begin{cases} \hat{G}_{k,s_k-1} & \text{if } \ell < s_k, \\ m_k^{\ell-s_k+1}(\ell-s_k+2)\,\hat{G}_{k,s_k-1} & \text{if } \ell \geq s_k, \end{cases} \label{eq:surrogate_G}
\end{align}
with boundary values $\hat{B}_{k,\hat{s}_k} := C_k\sum_{k'<k}\hat{G}_{k',\hat{s}_k-1}$ and $\hat{G}_{k,s_k-1} := D_{k,s_k-1} + \hat{B}_{k,s_k-1} + \hat{B}_{k,s_k-2}$.
If $s_{k+1}$ is chosen via~\eqref{eq:sk_condition}, then $\hat{B}_{k,\ell} \geq B_{k,\ell}$ for every $k \in [r]$ and every $\ell \geq 0$, and $\hat{G}_{k,\ell} \geq G_{k,\ell}$ for every $\ell \geq s_k - 1$.
\end{lemma}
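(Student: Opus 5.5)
The plan is a strong induction on $k$; inside each $k$ there is a second induction on the round $\ell$. The inductive hypothesis at $k$ is that $B_{k',\ell}\le \hat B_{k',\ell}$ for all $\ell$ and $G_{k',\ell}\le \hat G_{k',\ell}$ for all $\ell\ge s_{k'}-1$, for every $k'<k$. By the closed form \eqref{eq:surrogate_G} this gives $G_{k',\ell}\le 3\mathcal{R}_{k'}(\ell-s_{k'}+2)m_{k'}^{\ell-s_{k'}+1}$.

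\emph{Base case.} For $k=1$ we have $B_{1,\ell}=0\le \hat B_{1,\ell}$. The bound $G_{1,\ell}=D_{1,\ell}\le \calF_1^{\ell}\mathcal{R}_1$ comes from \cref{asump:contraction} and \cref{lem:bounded_recovery}, and it is dominated by \eqref{eq:surrogate_G} with $s_1=1$.

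\emph{Domination of $B_{k,\cdot}$.} For $\ell<\hat s_k$ the surrogate equals $\mathcal{R}_k$, and \cref{lem:bounded_recovery} gives $B_{k,\ell}\le\mathcal{R}_k$. For $\ell\ge \hat s_k$, condition \ref{cond:gap} (secured by the choice of $s_k$) makes \cref{lem:B_bound} applicable. Combining it with \cref{lem:Y_mismatch} and the inductive hypothesis yields
\[
B_{k,\ell}\le C_k\sum_{k'<k}\hat G_{k',\ell-1}.
\]
I would then write each $\hat G_{k',\ell-1}$ as its value at $\hat s_k-1$ times the ratio $m_{k'}^{\ell-\hat s_k}\frac{\ell-s_{k'}+1}{\hat s_k-s_{k'}+1}$. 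Bounding $m_{k'}\le m_{k-1}$ by monotonicity of $m_k$, and using $\frac{\ell-s_{k'}+1}{\hat s_k-s_{k'}+1}\le \ell-\hat s_k+1$, gives
\[
\sum_{k'<k}\hat G_{k',\ell-1}\le m_{k-1}^{\ell-\hat s_k}(\ell-\hat s_k+1)\sum_{k'<k}\hat G_{k',\hat s_k-1},
\]
which is exactly the second branch of \eqref{eq:surrogate_B}. Taking the minimum with the trivial ceiling $\mathcal{R}_k$ finishes this part. Condition \ref{cond:fold} guarantees that the boundary value $\hat B_{k,\hat s_k}$ is at most $\mathcal{R}_k$, so the ceiling is consistent and \cref{lem:Bhat_geometric} is applicable later.

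\emph{Domination of $G_{k,\cdot}$.} At $\ell=s_k-1$ we have $G\le D+B\le D_{k,s_k-1}+\hat B_{k,s_k-1}\le \hat G_{k,s_k-1}$ by definition. For $\ell\ge s_k$, \cref{lem:D_recurrence} applies, because \ref{cond:gap} puts $\|\bfY_{k,\ell}-\bfY_k^\star\|$ below $\sigma_k^\star$. Unrolling gives \eqref{eq:triple_G} with each $B$ replaced by $\hat B$. I then invoke \cref{lem:Bhat_geometric} with $s=s_k-2$; its hypotheses are exactly \ref{cond:reanchor}. This converts $\hat B_{k,\ell'}$ into $\gamma_{k-1}^{\ell'-s_k+2}\hat B_{k,s_k-2}$, and $\gamma_{k-1}\le m_k$ by \ref{cond:gamma}. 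Together with $\calF_k\le m_k$ from \ref{cond:contraction}, every term in the unrolled sum is at most $m_k^{\ell-s_k+1}$ times a quantity bounded by the initial pieces $D_{k,s_k-1}$, $\hat B_{k,s_k-1}$, $\hat B_{k,s_k-2}$.

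\emph{Main obstacle.} The hard step is controlling the convolution sum $\sum_{\ell'}\calF_k^{\ell-\ell'}\gamma_{k-1}^{\ell'-s_k+2}$. I would bound it by $(\ell-s_k+1)m_k^{\ell-s_k+1}$: each of the $\ell-s_k+1$ terms is at most $m_k^{\ell-s_k+2}$ up to one factor, and the extra factor of $m_k$ is absorbed using $m_k<1$. This sum is what produces the linear prefactor $(\ell-s_k+2)$. Adding the leading $\calF_k^{\ell-s_k+1}D_{k,s_k-1}$ term and the current $\hat B_{k,\ell}$ term, the total fits inside $m_k^{\ell-s_k+1}(\ell-s_k+2)\hat G_{k,s_k-1}$. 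The delicate bookkeeping is making sure that the pair $\hat B_{k,\ell'}+\hat B_{k,\ell'-1}$ costs only one factor of $\hat G_{k,s_k-1}$ overall. I would secure this by grouping the $D$-term and the two $B$-pieces according to the definition $\hat G_{k,s_k-1}=D_{k,s_k-1}+\hat B_{k,s_k-1}+\hat B_{k,s_k-2}$. This closes the inner induction on $\ell$, and therefore the outer induction on $k$.
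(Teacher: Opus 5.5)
Your route is the paper's: strong induction on $k$, domination of $B_{k,\cdot}$ through \cref{lem:Y_mismatch}, \cref{lem:B_bound} and the inductive hypothesis, then domination of $G_{k,\cdot}$ by unrolling \cref{lem:D_recurrence}, re-anchoring with \cref{lem:Bhat_geometric}, and absorbing rates via \ref{cond:gamma} and \ref{cond:contraction}. Your $B$-step is in fact a sounder version of the paper's. You write $\hat G_{k',\ell-1}$ as $\hat G_{k',\hat s_k-1}$ times $m_{k'}^{\ell-\hat s_k}(\ell-s_{k'}+1)/(\hat s_k-s_{k'}+1)$ and use $a+b-1\le ab$, which gives exactly $m_{k-1}^{\ell-\hat s_k}(\ell-\hat s_k+1)\hat B_{k,\hat s_k}$ without \ref{cond:fold}. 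You still need $\hat s_k\ge s_{k'}$ for all $k'<k$, which the paper also asserts. The paper instead bounds both $B_{k+1,\ell}$ and $\hat B_{k+1,\hat s_{k+1}}$ above by multiples of the same quantity $\frac{C_{k+1}}{k}\sum_{k'}\hat G_{k',s_{k'}-1}$, and that does not compare the two.

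Your side claim in that step is wrong, however: \ref{cond:fold} does not give $\hat B_{k,\hat s_k}\le\mathcal{R}_k$. It only yields $\hat B_{k,\hat s_k}\le\frac{C_k}{k-1}\sum_{k'<k}\hat G_{k',s_{k'}-1}\le 3C_k\max_{k'<k}\mathcal{R}_{k'}$. Here $C_k\ge 4$ because $\calT_k^\star\le\sigma_k^\star$, and $\mathcal{R}_{k'}\ge\mathcal{R}_k$. The bound must come from the gap requirement in the surrogate form in which \eqref{eq:gap_bound} enforces it, namely on the envelopes $3\mathcal{R}_{k'}(\ell-s_{k'}+1)m_{k'}^{\ell-s_{k'}}\ge\hat G_{k',\ell-1}$. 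That form gives $\hat B_{k,\hat s_k}\le\calT_k^\star/2<\mathcal{R}_k$. This matters, because it is the hypothesis that keeps the $\min$-ceiling inactive at the re-anchor point in \cref{lem:Bhat_geometric}.

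In the $G$-step, a single application of \cref{lem:Bhat_geometric} at $s=s_k-2$ charges both members of every pair $\hat B_{k,\ell'}+\hat B_{k,\ell'-1}$, and the final $\hat B_{k,\ell}$, to $\hat B_{k,s_k-2}$. That yields roughly $2(\ell-s_k+1)$ multiples of $m_k^{\ell-s_k+1}\hat B_{k,s_k-2}$, up to powers of $m_k$. But $\hat G_{k,s_k-1}$ supplies only $\ell-s_k+2$ copies of $\hat B_{k,s_k-2}$, and $\hat B_{k,s_k-1}$ can be smaller than $\hat B_{k,s_k-2}$ by a factor of order $m_{k-1}$. So the grouping you describe does not follow in general. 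You also need the second application at $s=s_k-1$, as the paper does. It is legitimate because the offset is one larger and $m_{k-1}^x(x+1)$ is already past its peak. Then $\hat B_{k,\ell'}$ and $\hat B_{k,\ell}$ are charged to $\hat B_{k,s_k-1}$, and $\hat B_{k,\ell'-1}$ to $\hat B_{k,s_k-2}$.

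With that fix your bookkeeping closes. The correct unrolling $\sum_{\ell'=s_k}^{\ell}\calF_k^{\ell-\ell'+1}(\hat B_{k,\ell'}+\hat B_{k,\ell'-1})$ has $\ell-s_k+1$ terms, each at most $m_k^{\ell-s_k+1}(\hat B_{k,s_k-1}+\hat B_{k,s_k-2})$. That leaves exactly one slot for $\hat B_{k,\ell}\le m_k^{\ell-s_k+1}\hat B_{k,s_k-1}$ inside $(\ell-s_k+2)\,m_k^{\ell-s_k+1}\hat G_{k,s_k-1}$. This is tighter than the paper, whose re-indexed sum already uses all $\ell-s_{k+1}+2$ slots before $B_{k+1,\ell}$ is added on top.
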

\begin{proof}
We proceed by strong induction on $k$.
At each inequality below we identify which of the conditions \ref{cond:fold}--\ref{cond:contraction} (collected in the proof of \cref{thm:parallel_convergence}) is being invoked.

\emph{Base case ($k=1$).}
Since $\bfY_{1,\ell} = \bfY$, the first target is exact, so $B_{1,\ell}=0$ by~\eqref{eq:def_B}; combined with $B_{1,\ell} \leq \mathcal{R}_1$ from \hl{\cref{lem:bounded_recovery}}, the surrogate $\hat{B}_{1,\ell} = \mathcal{R}_1$ dominates trivially.
For the total error, \cref{lem:D_recurrence} unrolls to $G_{1,\ell} = D_{1,\ell} \leq \calF_1^{\ell-s_1+1}D_{1,s_1-1}$, which is upper-bounded by $m_1^{\ell-s_1+1}(\ell-s_1+2)\hat{G}_{1,s_1-1} = \hat{G}_{1,\ell}$ since $m_1 = \calF_1$ and $(\ell-s_1+2) \geq 1$.

\emph{Inductive step.}
Fix $k \geq 1$ and assume the inductive hypothesis: for every $k' \leq k$, $\hat{B}_{k',\ell} \geq B_{k',\ell}$ for every $\ell \geq 0$ and $\hat{G}_{k',\ell} \geq G_{k',\ell}$ for every $\ell \geq s_{k'}-1$.
We show $\hat{B}_{k+1,\ell} \geq B_{k+1,\ell}$ for every $\ell \geq 0$, and then $\hat{G}_{k+1,\ell} \geq G_{k+1,\ell}$ for every $\ell \geq s_{k+1}-1$.

We bound $B_{k+1,\ell}$ first.
If $\ell < \hat{s}_{k+1}$, then by \hl{\cref{lem:bounded_recovery}}, $B_{k+1,\ell} \leq \mathcal{R}_{k+1} = \hat{B}_{k+1,\ell}$.
If $\ell \geq \hat{s}_{k+1}$, then the Wedin-gap hypothesis of \cref{lem:B_bound} is active by condition \ref{cond:gap}, and by definition of $\hat{s}_{k+1}$, $\hat{s}_{k+1}-1 \geq s_{k'}$ for every $k' \leq k$, so each predecessor surrogate lies on the post-warm-up branch of~\eqref{eq:surrogate_G}.
Combining \cref{lem:Y_mismatch} and \cref{lem:B_bound} with the inductive hypothesis $G_{k',\ell-1} \leq \hat{G}_{k',\ell-1}$,
\begin{align*}
B_{k+1,\ell}
&\leq C_{k+1}\sum_{k'\leq k} \hat{G}_{k',\ell-1} \\
&= C_{k+1}\sum_{k'\leq k} m_{k'}^{\ell-s_{k'}}(\ell-s_{k'}+1)\,\hat{G}_{k',s_{k'}-1} \\
&\leq C_{k+1}\sum_{k'\leq k} m_{k'}^{\ell-\hat{s}_{k+1}}\,m_{k'}^{\hat{s}_{k+1}-s_{k'}}(\ell-\hat{s}_{k+1}+1)(\hat{s}_{k+1}-s_{k'}+1)\,\hat{G}_{k',s_{k'}-1} \\
&\leq C_{k+1}\,m_k^{\ell-\hat{s}_{k+1}}(\ell-\hat{s}_{k+1}+1)\cdot\tfrac{1}{k}\sum_{k'\leq k}\hat{G}_{k',s_{k'}-1},
\end{align*}
where the third inequality applies $a+b-1 \leq ab$ for $a, b \geq 1$ (with $a = \ell-\hat{s}_{k+1}+1$ and $b = \hat{s}_{k+1}-s_{k'}+1$) to split the polynomial factor along $\hat{s}_{k+1}$, and the fourth uses condition~\ref{cond:fold} ($m_{k'}^{\hat{s}_{k+1}-s_{k'}}(\hat{s}_{k+1}-s_{k'}+1) \le 1/k$) together with the monotonicity $m_{k'} \leq m_k$ for $k' \leq k$ (from~\eqref{eq:mk_recurrence}).
By~\ref{cond:fold} again, $\hat{G}_{k',\hat{s}_{k+1}-1} = m_{k'}^{\hat{s}_{k+1}-s_{k'}}(\hat{s}_{k+1}-s_{k'}+1)\hat{G}_{k',s_{k'}-1} \leq (1/k)\hat{G}_{k',s_{k'}-1}$, so $\hat{B}_{k+1,\hat{s}_{k+1}} := C_{k+1}\sum_{k'\leq k}\hat{G}_{k',\hat{s}_{k+1}-1} \leq (C_{k+1}/k)\sum_{k'\leq k}\hat{G}_{k',s_{k'}-1}$, and combining with $B_{k+1,\ell} \leq \mathcal{R}_{k+1}$ from \hl{\cref{lem:bounded_recovery}},
\begin{equation}\label{eq:Bk+1_surrogate}
B_{k+1,\ell} \;\leq\; \min\bigl\{\mathcal{R}_{k+1},\; m_k^{\ell-\hat{s}_{k+1}}(\ell-\hat{s}_{k+1}+1)\,\hat{B}_{k+1,\hat{s}_{k+1}}\bigr\} \;=\; \hat{B}_{k+1,\ell},
\end{equation}
which is the post-warm-up branch of~\eqref{eq:surrogate_B}.
The $\min$-ceiling enforces $\hat B_{k+1,\hat s_{k+1}}\leq \mathcal{R}_{k+1}$, which we use when invoking \cref{lem:Bhat_geometric}.
This establishes $\hat{B}_{k+1,\ell} \geq B_{k+1,\ell}$ for every $\ell \geq 0$.

We now bound $G_{k+1,\ell}$, starting with $\ell \geq s_{k+1}$.
Iterating \cref{lem:D_recurrence} from round $s_{k+1}-1$ up to round $\ell$ and using~\eqref{eq:Bk+1_surrogate},
\begin{equation}\label{eq:D_unroll}
D_{k+1,\ell} \;\leq\; \calF_{k+1}^{\ell-s_{k+1}+1}D_{k+1,s_{k+1}-1} + \sum_{\ell'=s_{k+1}-1}^{\ell}\calF_{k+1}^{\ell-\ell'}\bigl(\hat B_{k+1,\ell'} + \hat B_{k+1,\ell'-1}\bigr),
\end{equation}
where we re-indexed the sum to start at $s_{k+1}-1$ (the dropped earlier summand $B_{k+1,s_{k+1}-2}$ is non-negative).
To collapse this sum into a clean envelope at rate $m_{k+1}$, we re-anchor $\hat B_{k+1,\cdot}$ from $\hat s_{k+1}$ to $s_{k+1}-1$ via \cref{lem:Bhat_geometric}.
Apply that lemma twice, at $s = s_{k+1}-1$ and at $s = s_{k+1}-2$:
\begin{align}
\hat B_{k+1,\ell'} &\;\leq\; \gamma_k^{\,\ell' - (s_{k+1}-1)}\,\hat B_{k+1,s_{k+1}-1}, \qquad \forall\,\ell' \geq s_{k+1}-1, \label{eq:reanchor_apply}\\
\hat B_{k+1,\ell'-1} &\;\leq\; \gamma_k^{\,\ell' - (s_{k+1}-1)}\,\hat B_{k+1,s_{k+1}-2}, \qquad \forall\,\ell' \geq s_{k+1}-1, \label{eq:reanchor_apply_shifted}
\end{align}
where the second display has matching exponent $(\ell'-1)-(s_{k+1}-2) = \ell'-s_{k+1}+1$.
The two preconditions of \cref{lem:Bhat_geometric} at $s = s_{k+1}-2$, namely $m_k^{s_{k+1}-\hat s_{k+1}-2}(s_{k+1}-\hat s_{k+1}-1) \leq 1$ and $s_{k+1}-\hat s_{k+1}-2 \geq (k+1)m_k/(1-m_k)$, are exactly the two clauses of \ref{cond:reanchor} secured by~\eqref{eq:sk_condition}; at $s = s_{k+1}-1$ the offset is one larger and so no looser, and $g(x) = m_k^x(x+1)$ is decreasing past its peak at $x = 1/\log(1/m_k) - 1$, which lies below \ref{cond:reanchor}'s offset (using $1/\log(1/m_k) \leq 1/(1-m_k)$).

Substituting~\eqref{eq:reanchor_apply} and~\eqref{eq:reanchor_apply_shifted} into~\eqref{eq:D_unroll},
\begin{align*}
D_{k+1,\ell}
&\leq m_{k+1}^{\ell-s_{k+1}+1}D_{k+1,s_{k+1}-1} \;+\; \sum_{\ell'=s_{k+1}-1}^{\ell} m_{k+1}^{\ell-\ell'}\,\gamma_k^{\,\ell'-s_{k+1}+1}\bigl(\hat B_{k+1,s_{k+1}-1}+\hat B_{k+1,s_{k+1}-2}\bigr)\\
&\leq m_{k+1}^{\ell-s_{k+1}+1}D_{k+1,s_{k+1}-1} \;+\; m_{k+1}^{\ell-s_{k+1}+1}(\ell-s_{k+1}+2)\bigl(\hat B_{k+1,s_{k+1}-1}+\hat B_{k+1,s_{k+1}-2}\bigr) \\
&\leq m_{k+1}^{\ell-s_{k+1}+1}(\ell-s_{k+1}+2)\bigl(D_{k+1,s_{k+1}-1} + \hat B_{k+1,s_{k+1}-1} + \hat B_{k+1,s_{k+1}-2}\bigr),
\end{align*}
where the first inequality uses \ref{cond:contraction} ($\calF_{k+1}\leq m_{k+1}$), and the second bounds the inner sum by $(\ell-s_{k+1}+2)$ copies of $m_{k+1}^{\ell-s_{k+1}+1}$ via \ref{cond:gamma} ($\gamma_k \leq m_{k+1}$) over $\ell-s_{k+1}+2$ consecutive indices.
By \eqref{eq:Bk+1_surrogate}, \eqref{eq:reanchor_apply}, and $\gamma_k \leq m_{k+1}$, we also have $B_{k+1,\ell} \leq \hat B_{k+1,\ell} \leq m_{k+1}^{\ell-s_{k+1}+1}(\ell-s_{k+1}+2)\,\hat B_{k+1,s_{k+1}-1}$.
Combining via $G_{k+1,\ell} \leq D_{k+1,\ell} + B_{k+1,\ell}$,
\begin{align*}
G_{k+1,\ell}
&\leq m_{k+1}^{\ell-s_{k+1}+1}(\ell-s_{k+1}+2)\bigl(D_{k+1,s_{k+1}-1} + \hat{B}_{k+1,s_{k+1}-1} + \hat{B}_{k+1,s_{k+1}-2}\bigr) \\
&= m_{k+1}^{\ell-s_{k+1}+1}(\ell-s_{k+1}+2)\,\hat{G}_{k+1,s_{k+1}-1} \;=\; \hat{G}_{k+1,\ell}.
\end{align*}

For the boundary case $\ell = s_{k+1}-1$, the surrogate is held constant: $\hat{G}_{k+1,s_{k+1}-1} = D_{k+1,s_{k+1}-1} + \hat{B}_{k+1,s_{k+1}-1} + \hat{B}_{k+1,s_{k+1}-2}$, so by the just-established $\hat B_{k+1,j} \geq B_{k+1,j}$ and $\hat B_{k+1,s_{k+1}-2} \geq 0$,
\[
\hat{G}_{k+1,s_{k+1}-1} \;\geq\; D_{k+1,s_{k+1}-1} + B_{k+1,s_{k+1}-1} \;\geq\; G_{k+1,s_{k+1}-1},
\]
where the last step is $G_{k+1,\cdot} \leq D_{k+1,\cdot} + B_{k+1,\cdot}$.
The size bound $\hat G_{k+1,s_{k+1}-1} \leq 3\mathcal{R}_{k+1}$ used in~\eqref{eq:main_bound} follows from \hl{\cref{lem:bounded_recovery}} applied to each of the three summands ($D_{k+1,s_{k+1}-1} \leq \mathcal{R}_{k+1}$ and $\hat B_{k+1,j} \leq \mathcal{R}_{k+1}$ by the $\min$-ceiling in~\eqref{eq:surrogate_B}).
This completes the inductive step.
\end{proof}

\subsection{Proof of Proposition~\ref{prop:self_correction}}\label{app:proof_self_correction}

\begin{proof}
By \cref{lem:Y_mismatch}: $\norm{\bfY_{k,\ell} - \bfY_k^\star}_F \leq \sum_{k'<k} G_{k',\ell-1}$.
Substituting \cref{thm:parallel_convergence}: each term is $3\mathcal{R}_{k'}(\ell - s_{k'} + 1)\,m_{k'}^{\ell-s_{k'}} \to 0$ since $m_{k'} < 1$.
\end{proof}

\subsection{Proof of Theorem~\ref{thm:gen_noiseless_parallel}}\label{app:proof_gen_noiseless}

\begin{proof}
Since $\svmin(\bfX) > 0$, for any $\bfW$ we have $\|\bfW\|_F \leq \svmin(\bfX)^{-1}\|\bfW\bfX\|_F$.
Applying this with $\bfW = \bfW^\star - \sum_k \bfb_{k,L}\bfa_{k,L}^\top$ and using the triangle inequality:
$\norm{\bfW^\star\bfX - \sum_k \bfb_{k,L}\bfa_{k,L}^\top\bfX}_F
\leq \norm{\bfY - \sum_{k=1}^r \bfb_k^\star\bfa_k^{\star\top}\bfX}_F + \sum_{k=1}^r \norm{\bfb_k^\star\bfa_k^{\star\top}\bfX - \bfb_{k,L}\bfa_{k,L}^\top\bfX}_F
= \sum_{k=r+1}^{r^\star}\sigma_k^\star + \sum_{k=1}^r G_{k,L}$,
where the first equality uses Eckart-Young-Mirsky: the best rank-$r$ approximation of $\bfY$ in Frobenius norm leaves residual $\sum_{k>r}\sigma_k^\star$.
Dividing by $\svmin(\bfX)$ yields the stated bound.
\end{proof}

\subsection{Proof of Theorem~\ref{thm:gen_noisy_parallel}}\label{app:proof_gen_noisy}

\begin{proof}
Decompose into $T_1$ (truncation, bounded by Eckart-Young-Mirsky), $T_2$ (noise perturbation, bounded by Weyl/Wedin and Gaussian concentration following~\citep[Lemma~C.1]{VandchaliOneRank2024}), and $T_3$ (algorithmic error $\sum_k G_{k,L}$).
The algorithmic term inherits self-correction; the statistical term matches the sequential baseline.
Dividing by $\svmin(\bfX)$ yields the stated bound.

The key observation is that parallelization does not affect the noise term $T_2$.
The noise $\calE$ enters through the data model $\bfY = \bfW^\star\bfX + \calE$, which is identical for both sequential and parallel algorithms.
The $T_2$ bound depends only on the spectral perturbation $\|\bfY - \bfW^\star\bfX\|_2 = \|\calE\|_2$, controlled by standard Gaussian concentration (e.g., $\|\calE\|_2 \leq \varepsilon(\sqrt{m} + \sqrt{n})$ with high probability).
This perturbation is a property of the data, not the algorithm, so the parallel and sequential statistical terms are the same.
The only difference is in $T_3$: for the parallel algorithm, $\sum_k G_{k,L} \to 0$ by \cref{thm:parallel_convergence}, while for the sequential baseline, the corresponding sum involves permanently fixed numerical errors.
\end{proof}

\subsection{Verification that $m_k < 1$}\label{app:mk_below_one}

We show by induction on $k$ that the effective rate $m_k$ defined in~\eqref{eq:mk_recurrence} satisfies $m_k \in (0,1)$ for every $k \in [r]$.

\emph{Base case ($k=1$).}
By definition, $m_1 = \calF_1$, and \cref{asump:contraction} requires $\calF_k \in (0,1)$ for every $k$.
Hence $m_1 \in (0,1)$.

\emph{Inductive step ($k \geq 2$).}
Assume $m_{k-1} \in (0,1)$.
The auxiliary quantity $\gamma_{k-1} = \tfrac{1}{k} + \tfrac{k-1}{k}m_{k-1}$ is a convex combination of $1$ and $m_{k-1}$, so $\gamma_{k-1} \in (m_{k-1},\,1) \subset (0,1)$ whenever $m_{k-1} < 1$.
By \cref{asump:contraction}, $\calF_k \in (0,1)$.
Therefore $m_k = \max\{\calF_k,\,\gamma_{k-1}\} \in (0,1)$, completing the induction.

\section{Game-Theoretic Interpretation}\label{app:proof_nash}

\begin{proposition}[Nash equilibrium]\label{prop:nash}
Under \cref{asump:gap}, define player $k$'s utility as
$\mathcal{V}_k(\bfa, \bfb \mid \{(\bfa_{k'}, \bfb_{k'})\}_{k'<k}) = -\frac{1}{2}\|\bfY - \sum_{k'<k}\bfb_{k'}\bfa_{k'}^\top\bfX - \bfb\bfa^\top\bfX\|_F^2$.
Then $\{(\bfa_k^\star, \bfb_k^\star)\}_{k=1}^r$ is the unique strict Nash equilibrium up to sign.
\end{proposition}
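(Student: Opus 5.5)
The proof is a forward induction on the player index. It uses the triangular structure of the game: player $k$'s utility $\mathcal{V}_k$ depends only on its own action and on the actions of players $k'<k$. Throughout, I identify a strategy $(\bfa,\bfb)$ with the matrix it induces, $\bfb\bfa^\top\bfX$. The bilinear parametrization is invariant under $(\bfa,\bfb)\mapsto(c\,\bfa,\,c^{-1}\bfb)$, so ``unique up to sign'' can only mean uniqueness of this product, equivalently of the singular triplet up to the sign ambiguity $(\bfu,\bfv)\mapsto(-\bfu,-\bfv)$. If the factors themselves are to be recovered, I would add $\svmin(\bfX)>0$ as in \cref{thm:gen_noiseless_parallel}, so that $\bfb\bfa^\top\bfX$ determines $\bfb\bfa^\top$.

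\emph{Step 1 (best response).} Fix predecessors $\{(\bfa_{k'},\bfb_{k'})\}_{k'<k}$ and set $\bfY_k := \bfY-\sum_{k'<k}\bfb_{k'}\bfa_{k'}^\top\bfX$. Maximizing $\mathcal{V}_k$ over $(\bfa,\bfb)$ is exactly the ideal-fit problem~\eqref{eq:ideal_fit} for $\bfY_k$. Its attainable set $\{\bfb\bfa^\top\bfX\}$ is the set of rank-$\le 1$ matrices whose row space lies in $\mathrm{row}(\bfX)$. This contains the top singular component of $\bfY_k$, because $\bfY_k$'s rows lie in $\mathrm{row}(\bfX)$ in the noiseless model. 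Eckart--Young--Mirsky then says the maximizers are exactly the rank-1 matrices $\sigma_1\bfu_1\bfv_1^\top$ built from top singular triplets of $\bfY_k$. When $\sigma_1(\bfY_k)>\sigma_2(\bfY_k)$, this maximizer is unique, and every other rank-$\le1$ matrix gives a strictly smaller utility. For the strictness claim I would make this quantitative: for any rank-1 $\bfM$,
\[
\|\bfY_k-\bfM\|_F^2-\|\bfY_k-\sigma_1\bfu_1\bfv_1^\top\|_F^2>0
\]
whenever $\bfM\neq\sigma_1\bfu_1\bfv_1^\top$. This follows from the strict form of Eckart--Young under a gap.

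\emph{Step 2 (the ideal profile is a strict equilibrium).} Plug in predecessors equal to $(\bfa_{k'}^\star,\bfb_{k'}^\star)$. Then $\bfY_k=\bfY_k^\star=\sum_{j\ge k}\sigma_j^\star\bfu_j^\star\bfv_j^{\star\top}$, as in Step 1 of the proof of \cref{lem:B_bound}. By \cref{asump:gap} this matrix has a strict gap $\sigma_k^\star>\sigma_{k+1}^\star$, so Step 1 shows that $(\bfa_k^\star,\bfb_k^\star)$ is the unique best response, up to the equivalence above. No player can deviate without strictly losing utility, so the profile is a strict Nash equilibrium.

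\emph{Step 3 (uniqueness).} Let $\{(\bfa_k,\bfb_k)\}$ be any Nash equilibrium, and induct on $k$. Player 1's utility involves no other player, and its target is $\bfY=\bfY_1^\star$. So its equilibrium action must be the unique best response $\sigma_1^\star\bfu_1^\star\bfv_1^{\star\top}$. Suppose players $1,\dots,k-1$ play the ideal products. Then player $k$ faces exactly $\bfY_k^\star$, and equilibrium forces it to play the ideal $k$-th component, again by Step 1 and the gap. Induction over $k\le r$ gives uniqueness.

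The main obstacle is purely one of formulation. The literal factor pair is unique only modulo the scaling $c\bfa,\,c^{-1}\bfb$, not merely modulo sign, and strictness has to be stated for the induced product $\bfb\bfa^\top\bfX$. I would state this equivalence explicitly at the start. The singular-value gap of \cref{asump:gap}, together with $\sigma_r^\star>0$, supplies the strict inequalities, including for the last player, where the gap is $\sigma_r^\star$ against the zero singular values.
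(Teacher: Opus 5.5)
Your proof is correct and follows the same route as the paper's. The paper also inducts on $k$: player $k$'s best response to ideal predecessors is the best rank-1 approximation of $\bfY_k^\star$, and Eckart--Young--Mirsky together with the gap in \cref{asump:gap} makes it unique. Your version is more careful than the paper's one-line argument in three ways: it separates strictness from uniqueness; it checks that the top component of $\bfY_k$ is attainable as $\bfb\bfa^\top\bfX$ because the rows of $\bfY_k$ lie in $\mathrm{row}(\bfX)$; and it correctly notes that uniqueness and strictness can only hold for the product $\bfb\bfa^\top\bfX$, i.e.\ modulo the rescaling $(c\,\bfa, c^{-1}\bfb)$ and not merely up to sign.
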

\begin{proof}
By induction: given optimal predecessors, player $k$ maximizes utility by finding the best rank-1 approximation of $\bfY_k^\star$, which is unique under the strict spectral gap (\cref{asump:gap}).
\end{proof}

\section{Theory-Validation Experiments}\label{app:theory_validation}

We evaluate the predictions of \cref{thm:parallel_convergence,prop:self_correction,thm:gen_noisy_parallel} on the bilinear regression model $\bfY = \bfW^\star \bfX + \calE$ from~\eqref{eq:main_problem}.
The synthetic instances use a ground-truth weight $\bfW^\star \in \R^{m \times d}$ of rank $r^\star$, an input matrix $\bfX \in \R^{d \times n}$ with i.i.d.\ standard Gaussian entries, and Gaussian noise $\calE_{ij} \sim \mathcal{N}(0, \varepsilon^2)$ entrywise, with $\varepsilon = 0$ in the noiseless experiments.
Three singular-value profiles for $\bfW^\star$ are considered, each normalised so that $\sigma_1^\star = 1$: \emph{exponential} $\sigma_k^\star = 2^{r^\star - k}$ (geometric decay, large gaps), \emph{power-law} $\sigma_k^\star = k^{-1.5}$ (algebraic decay), and \emph{uniform} $\sigma_k^\star = 1$ for all $k$ (no spectral gap, so \cref{asump:gap} is degenerate).
Each setting is repeated over five random seeds; figures plot the per-round median across seeds with shaded bands giving the seed-to-seed minimum-to-maximum envelope.

\subsection{Synthetic Validation}\label{app:exp_synthetic}

We first verify the qualitative predictions of \cref{thm:parallel_convergence,prop:self_correction,thm:gen_noisy_parallel}: the per-round convergence of total reconstruction error, the self-correction of deflation targets, and the matching of the parallel and sequential noise floors.

For the convergence comparison of \cref{fig:appendix-convergence} we set $m = 100$, $d = 200$, $n = 500$, $r^\star = 10$, and run each method with $T = 10$ inner iterations per round and $L = 10$ communication rounds across the three spectral profiles.
Parallel ALS matches sequential ALS on the exponential spectrum, with both reaching a relative weight error of about $2.1 \times 10^{-3}$.
On the power-law spectrum parallel reaches $3.6 \times 10^{-3}$ against $2.1 \times 10^{-3}$ for sequential, so the two stay within a small constant factor; on the uniform spectrum both methods degrade together because the spectral gap vanishes, consistent with \cref{asump:gap}.
For reference we also include two factored gradient-descent variants (parallel and sequential, using the rank-1 solver of \cref{app:subroutines} in place of ALS) and a joint gradient-descent baseline that updates all $r^\star$ rank-1 factor pairs simultaneously without deflation.

To isolate self-correction (\cref{fig:appendix-self-correction}) we shrink the problem to $m = 50$, $d = 80$, $n = 200$, $r^\star = 5$ on the exponential spectrum, with $T = 10$ and $L = 30$, so that workers $k = 2, \ldots, 5$ activate round by round under the staggered schedule $s_k = k$ of \cref{thm:parallel_convergence}.
The left panel shows the normalised deflation mismatch $\norm{\bfY_{k,\ell} - \bfY_k^\star}_F / \norm{\bfY_k^\star}_F$ for each worker, where $\bfY_{k,\ell}$ is the worker's current target and $\bfY_k^\star = \bfY - \sum_{k' < k} \bfb_{k'}^\star \bfa_{k'}^{\star\top} \bfX$ is the clean target built from the ideal rank-1 components $(\bfa_{k'}^\star, \bfb_{k'}^\star)$ of \cref{sec:error_decomp}.
The mismatch decreases by roughly $5\times$ for workers~2 to 4 between activation and convergence.
The right panel decomposes the error of worker~3 into the three quantities $G_{3,\ell}$, $D_{3,\ell}$, and $B_{3,\ell}$ defined in~\eqref{eq:def_D},~\eqref{eq:def_B},~and~\eqref{eq:def_G}.
Both $D_{3,\ell}$ and $B_{3,\ell}$ decay to a small common floor, exactly the behaviour predicted by \cref{prop:self_correction}.

The noise-robustness experiment of \cref{fig:appendix-noise} keeps the same problem dimensions and exponential profile, sets $T = 10$ and $L = 15$, and adds entrywise Gaussian noise $\calE_{ij} \sim \mathcal{N}(0, \varepsilon^2)$ at $\varepsilon \in \{0.01, 0.1, 0.5, 1.0\}$.
Across all four noise levels parallel and sequential deflation converge to the same noise floor, and the per-seed final-round parallel-to-sequential ratio stays within $[1.000, 1.014]$ across all four noise settings, so parallelization incurs no statistical penalty relative to sequential deflation, as predicted by \cref{thm:gen_noisy_parallel}.
The reference floor in each panel is the theoretical scale $\varepsilon \sqrt{r^\star d / n}$ from the same theorem.

\begin{figure}[t]
\centering
\includegraphics[width=\textwidth]{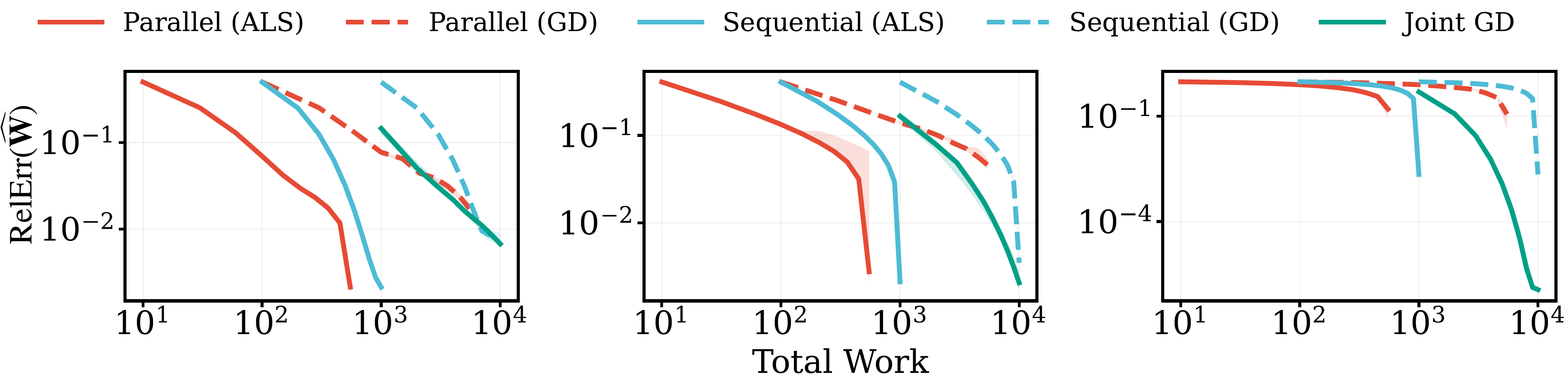}
\caption{\textbf{Convergence under equal total work.}
Relative weight error vs.\ total work for exponential, power-law, and uniform spectra.
Parallel ALS matches sequential on the well-separated exponential spectrum and degrades gracefully as spectral gaps shrink.}\label{fig:appendix-convergence}
\end{figure}

\begin{figure}[t]
\centering
\includegraphics[width=\textwidth]{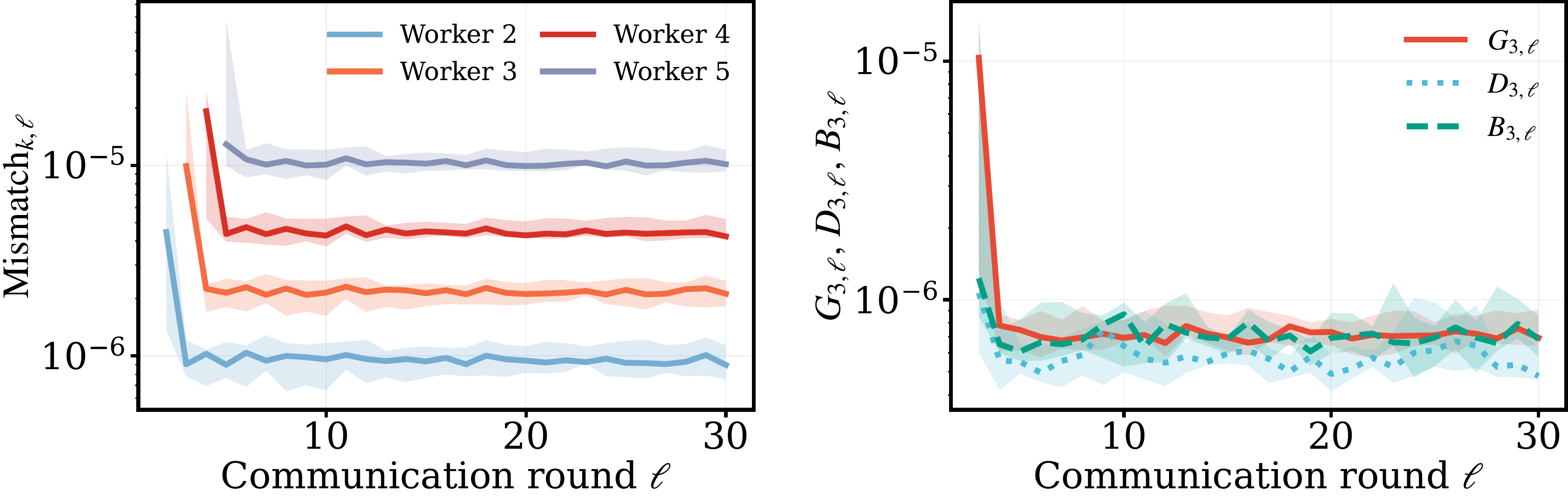}
\caption{\textbf{Self-correction.}
Left: normalized deflation mismatch per worker, decreasing over rounds.
Right: error decomposition for worker~3 ($G$, $D$, $B$ all decay).}\label{fig:appendix-self-correction}
\end{figure}

\begin{figure}[t]
\centering
\includegraphics[width=\textwidth]{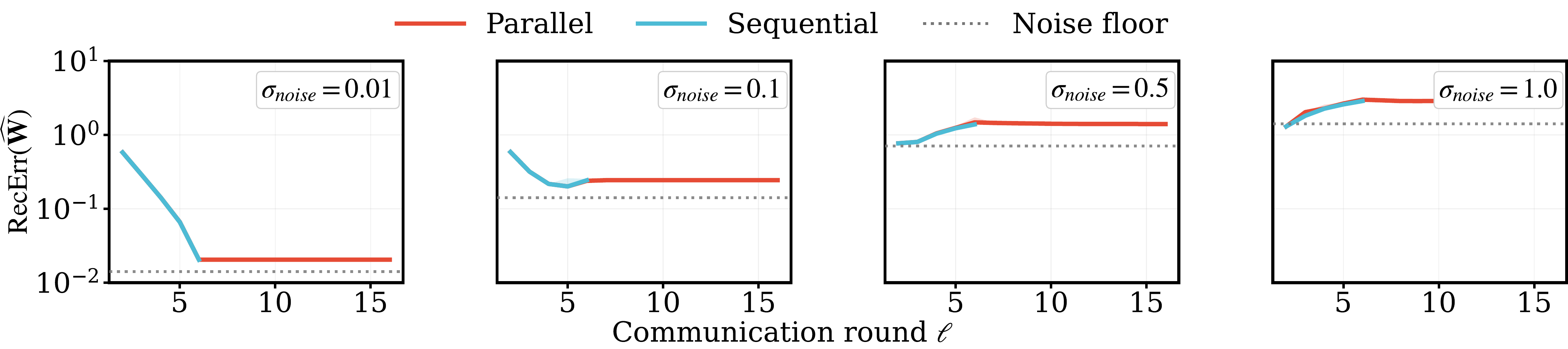}
\caption{\textbf{Noise robustness.}
Reconstruction error vs.\ rounds at four noise levels $\varepsilon$.
Parallel and sequential converge to the same noise floor.}\label{fig:appendix-noise}
\end{figure}

\subsection{Theoretical-Bound Validation}\label{app:exp_bounds}

This experiment tests the quantitative form of \cref{thm:parallel_convergence}: each worker's total error $G_{k,\ell}$ (defined in~\eqref{eq:def_G}) decays geometrically once the warm-up round $s_k$ from \cref{thm:parallel_convergence} is reached, and the empirical per-worker contraction rate matches the recurrence~\eqref{eq:mk_recurrence} for $m_k$.
We fix $m = 50$, $d = 80$, $n = 200$, $r^\star = 5$ on the exponential spectrum and run \cref{alg:parallel_rank1} with $T = 10$ ALS iterations per round and $L = 40$ communication rounds.
For every seed and every worker $k$ we record the empirical trajectory $G_{k,\ell}$ and fit $G_{k,\ell} \approx \widehat{C}_k\, \widehat{m}_k^{\,\ell}$ by least-squares regression on $\log G_{k,\ell}$, restricted to rounds at which $G_{k,\ell}$ has fallen below half of its value at worker activation.
Here $m_k$ denotes the per-worker contraction factor of the recurrence~\eqref{eq:mk_recurrence}, and $\calF_k$ denotes the per-component contraction factor of the rank-1 subroutine from \cref{asump:contraction}.
The empirical warm-up $\widehat{s}_k$ is taken as the first round after which $G_{k,\ell}$ decays monotonically over the next five rounds, and $\widehat{m}_k$ is the fitted slope; we use $\widehat{s}_k$ and $\widehat{m}_k$ as the empirical estimates of the theoretical $s_k$ and $m_k$.

\Cref{fig:appendix-bounds} compares these estimates to the theoretical predictions.
Panel~(A) overlays the fitted exponentials on the empirical $G_{k,\ell}$ curves and shows that every worker enters geometric decay after a brief warm-up.
Panel~(B) plots $\widehat{s}_k$ together with the staggered-activation prediction $s_k = k$ and the Lambert-$W$ upper bound from~\eqref{eq:sk_condition}: the observed warm-up tracks $s_k = k$, while the Lambert-$W$ bound is conservative on this well-separated spectrum, as expected of a worst-case bound (for visual clarity, the displayed curve uses the looser global-$m_k$ form of~\eqref{eq:sk_condition}, while the proof states a tighter per-predecessor maximum over $k' \leq k$; both are valid upper bounds on $s_k$).
Panel~(C) places the fitted $\widehat{m}_k$ next to the recurrence $m_k = \max\{\calF_k, 1/k + (k-1) m_{k-1}/k\}$ from~\eqref{eq:mk_recurrence}, and the two agree within seed-to-seed noise.
Together the three panels confirm that the convergence rate predicted by \cref{thm:parallel_convergence} is attained empirically and that the fitted contraction rate $\widehat{m}_k$ matches the recurrence~\eqref{eq:mk_recurrence} for $m_k$ within seed-to-seed variability.

\begin{figure}[t]
\centering
\includegraphics[width=\textwidth]{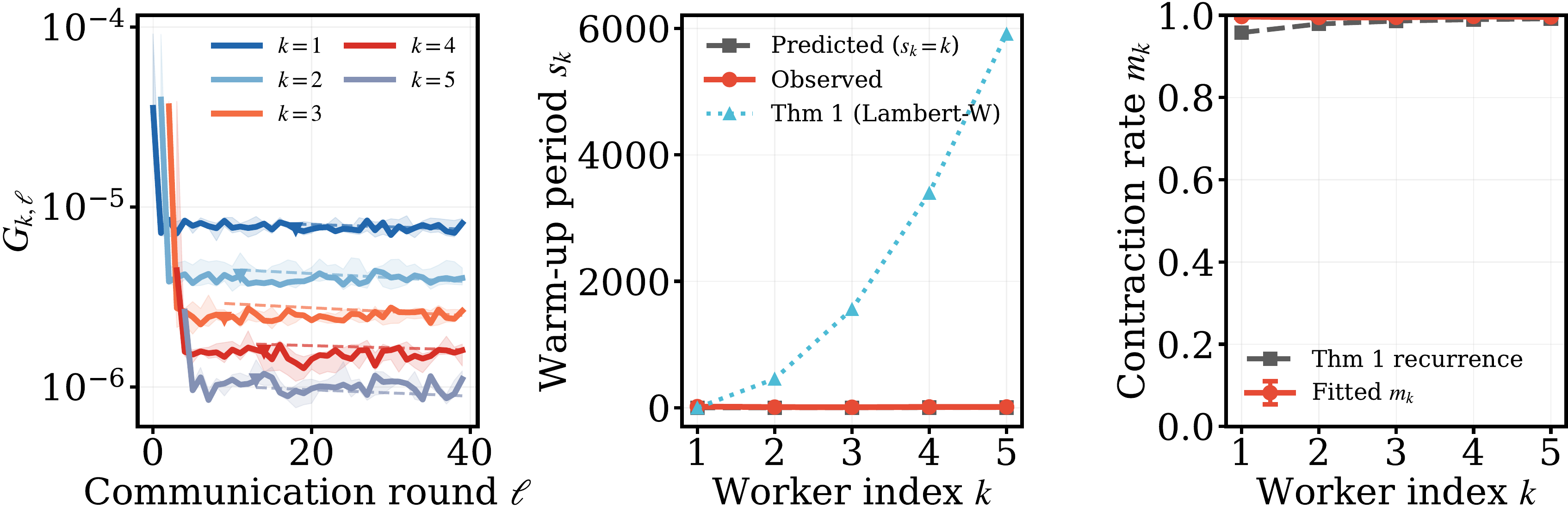}
\caption{\textbf{Theoretical-bound validation for \cref{thm:parallel_convergence}.}
(A) Empirical $G_{k,\ell}$ per worker (solid) with fitted exponential decay (dashed) and observed warm-up marker.
(B) Predicted ($s_k = k$) vs.\ observed warm-up period, with the Lambert-$W$ bound from~\eqref{eq:sk_condition} shown for reference.
(C) Fitted contraction rate $\widehat{m}_k$ vs.\ the recurrence~\eqref{eq:mk_recurrence}.}\label{fig:appendix-bounds}
\end{figure}

\subsection{Spectral-Gap Sensitivity}\label{app:exp_spectral_gap}

The third experiment probes \cref{asump:gap} directly by sweeping the spectral gap and tracking how convergence degrades as it closes.
We use a linear profile $\sigma_k^\star = 1 - (k-1) g$ for $k = 1, \ldots, r^\star$, clamped at a floor of $0.01$ so the problem stays well-posed when $g$ is small.
Here $g$ is the spectral-gap step parameter for this experiment, distinct from the growth interval $\Delta$ used in the main text (\cref{tab:app_hparams}).
The remaining settings match \cref{app:exp_bounds}: $m = 50$, $d = 80$, $n = 200$, $r^\star = 5$, $T = 10$, $L = 30$.
On this profile the per-worker gap from \cref{asump:gap}, $\calT_k^\star = \min\bigl(\min_{j>k}|\sigma_k^\star - \sigma_j^\star|, \sigma_k^\star\bigr)$, equals $\min(g, \sigma_k^\star)$ for $k < r^\star$, so over the swept range the ratio $g/\sigma_1^\star$ directly controls $\min_k \calT_k^\star / \sigma_1^\star$.
We sweep $g/\sigma_1^\star \in \{0.01, 0.05, 0.1, 0.25, 0.5\}$.

\Cref{fig:appendix-gap} reports the outcome.
Panel~(a) shows that all five gap settings converge and that the per-round rate degrades smoothly as the gap shrinks: at $g/\sigma_1^\star = 0.5$ the reconstruction error reaches its floor within roughly five communication rounds $\ell$, while at $g/\sigma_1^\star = 0.01$ it takes about twenty-five rounds.
Panel~(b) plots the across-seed median of the final-round error against the gap ratio on a log scale; even at $g/\sigma_1^\star = 0.01$ the error drops below $10^{-4}$, so the algorithm degrades gracefully rather than failing as the gap closes.
A positive spectral gap is therefore sufficient for convergence on this profile, and shrinking the gap inflates the warm-up period $s_k$ (\cref{thm:parallel_convergence}) smoothly rather than triggering a sharp phase transition, consistent with the role $\calT_k^\star$ plays in \cref{asump:gap}.

\begin{figure}[t]
\centering
\includegraphics[width=\textwidth]{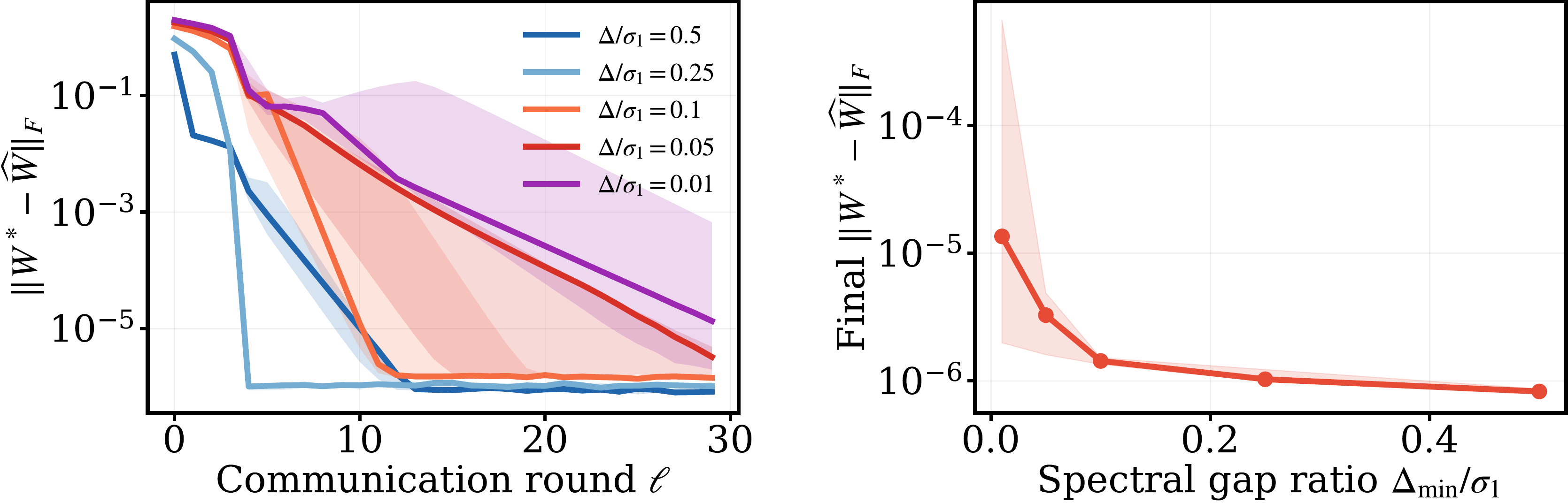}
\caption{\textbf{Spectral-gap sensitivity (validates \cref{asump:gap}).}
(a) Convergence curves for five gap ratios $g/\sigma_1^\star$; smaller gaps require more rounds but still converge.
(b) Final reconstruction error vs.\ gap ratio decays to a floor below $10^{-4}$ even at $g/\sigma_1^\star = 0.01$.}\label{fig:appendix-gap}
\end{figure}

\section{Additional Experimental Details}\label{app:experiments}

\subsection{GLUE Training Details}
\paragraph{Model and tokenizer.}
All GLUE experiments use DeBERTaV3-base~\citep{HeDeBERTaV3_2023} (\texttt{microsoft/deberta-v3-base}), a 12-layer, 768-hidden, 12-head transformer with ${\sim}184$M parameters.
We freeze all pretrained weights and train only the rank-1 adapter parameters.
Inputs are tokenized with the DeBERTaV3 SentencePiece tokenizer at maximum sequence length~128.

\paragraph{Adapter configuration.}
We attach parallel rank-1 adapters to all 72 encoder linear layers (six per layer $\times$ 12 layers): $W_q$, $W_k$, $W_v$, $W_o$ (attention) and $W_{f1}$, $W_{f2}$ (FFN).
Each adapter has $r_{\max} = 4$ components and both $\bfa_k$ and $\bfb_k$ vectors are initialized from $\mathcal{N}(0, 0.02)$.
We use the standard LoRA scaling $\alpha$~\citep{HuLoRA2022}, with the per-task values reported in \cref{tab:app_hparams}.

\iffalse % commented per Barbara: hyperparameters are enough; do not duplicate dataset stats
\paragraph{Reproducibility.}
All GLUE experiments use a single random seed (seed=43) and a single development-set evaluation per task. We do not use checkpoint averaging, exponential moving averages, or ensembling.

\paragraph{GLUE dataset statistics.}
\Cref{tab:glue_stats} summarizes the eight GLUE tasks~\citep{Wang2019GLUE}.
All are single-sentence or sentence-pair classification, except STS-B which is regression.

\begin{table}[h]
\centering
\caption{GLUE dataset statistics.}\label{tab:glue_stats}
\small
\begin{tabular}{llrrll}
\toprule
Corpus & Task & Train & Dev & Metrics & Domain \\
\midrule
CoLA & acceptability & 8.5k & 1k & Matthews corr. & misc. \\
RTE & NLI & 2.5k & 276 & Acc. & news, Wikipedia \\
MRPC & paraphrase & 3.7k & 408 & Acc. & news \\
STS-B & sentence similarity & 7k & 1.5k & Pearson/Spearman & misc. \\
SST-2 & sentiment & 67k & 872 & Acc. & movie reviews \\
QNLI & QA/NLI & 105k & 5.4k & Acc. & Wikipedia \\
QQP & paraphrase & 364k & 40k & Acc. & social QA \\
MNLI & NLI & 393k & 9.8k & Acc. & misc. \\
\bottomrule
\end{tabular}
\end{table}
\fi

\paragraph{Per-task hyperparameters.}
\Cref{tab:app_hparams} lists the hyperparameters used for each task on the GLUE benchmark~\citep{Wang2019GLUE}; LoRA's scaling convention follows~\citet{HuLoRA2022}.

\begin{table}[h]
\centering
\caption{Per-task hyperparameters for \adapad{} on GLUE (seed=43).}\label{tab:app_hparams}
\footnotesize
\setlength{\tabcolsep}{3pt}
\resizebox{\textwidth}{!}{%
\begin{tabular}{lcccccccc}
\toprule
 & CoLA & RTE & MRPC & STS-B & SST-2 & QNLI & QQP & MNLI \\
\midrule
Epochs & 10 & 50 & 30 & 25 & 24 & 5 & 7 & 9 \\
Batch size & 32 & 32 & 32 & 32 & 64 & 64 & 128 & 256 \\
Learning rate & $8\!\times\!10^{-4}$ & $1.2\!\times\!10^{-3}$ & $1\!\times\!10^{-3}$ & $1.97\!\times\!10^{-3}$ & $1\!\times\!10^{-3}$ & $1\!\times\!10^{-3}$ & $7\!\times\!10^{-4}$ & $1.5\!\times\!10^{-3}$ \\
$\alpha$ & 32 & 24 & 24 & 32 & 8 & 32 & 48 & 24 \\
Weight decay & 0.01 & 0.01 & 0.01 & 0.01 & 0.01 & 0.01 & 0.01 & 0.01 \\
Warmup steps & 100 & 100 & 100 & 100 & 1000 & 500 & 500 & 500 \\
Grad clip & 1.0 & 1.0 & 1.0 & 1.0 & 1.0 & 1.0 & 1.0 & 1.0 \\
Top-$h$ & 3 & 3 & 6 & 1 & 5 & 3 & 5 & 2 \\
$\Delta$ (growth interval) & 100 & 200 & 100 & 100 & 100 & 50 & 300 & 100 \\
Orth reg coef $\lambda_{\text{orth}}$ & 0.3 & 0.3 & 0 & 0.5 & 0.2 & 0.2 & 0.15 & 0.1 \\
Adapter dropout & 0.0 & 0.0 & 0.3 & 0.0 & 0.1 & 0.0 & 0.1 & 0.2 \\
Cls dropout & 0.0 & 0.0 & 0.0 & 0.2 & 0.0 & 0.0 & 0.15 & 0.15 \\
Adapter weight decay & 0.0 & 0.01 & 0.01 & 0.0 & 0.0 & 0.0 & 0.0 & 0.0 \\
LR schedule & linear & linear & linear & linear & linear & linear & linear & cosine \\
Rank budget $B$ & 144 & 144 & 144 & 144 & 144 & 144 & 144 & 144 \\
\bottomrule
\end{tabular}%
}
\end{table}

% \paragraph{Parameter budget.}
% Each rank-1 component adds $d_{\text{in}} + d_{\text{out}} + 1$ parameters per module: vectors $\mathbf{a} \in \mathbb{R}^{d_{\text{in}}}$, $\mathbf{b} \in \mathbb{R}^{d_{\text{out}}}$, and scalar $\lambda$.
% With $r_{\max} = 4$ and 72 modules, \adapad{} allocates $0.66$M adapter parameters during training.
% At convergence the average rank is ${\approx}2$, so only ${\sim}0.33$M parameters remain active in the forward pass, matching IncreLoRA's parameter count at target rank~2.
% The ``\#Params'' column in \cref{tab:glue_main} reports the nominal trainable-parameter budget per method: LoRA at $r{=}2$ uniformly, \adapad{} and IncreLoRA at target average rank~2.

\paragraph{Orthogonal regularization.}
We optionally include the orthogonal regularization term from IncreLoRA~\citep{ZhangIncreLoRA2023}, $\lambda_{\text{orth}} \bigl(\|\bfA^\top \bfA - \bfI\|_F + \|\bfB \bfB^\top - \bfI\|_F\bigr)/(2 n_{\text{layers}})$, which pushes each module's component vectors toward an orthonormal frame.
We adopt it on GLUE with the per-task values in \cref{tab:app_hparams} and disable it on SQuAD.

\paragraph{Baseline reproduction.}
We reproduce IncreLoRA as the primary comparison baseline because it is the closest method to \adapad{}: both grow rank incrementally from rank-1 components via importance-based allocation with EMA scoring.
We use the authors' official loralib (SVDLinear + RankAllocator) and their published hyperparameters, ported to our software environment (Python~3.11, PyTorch~2.6, Transformers~5.2, CUDA~12.9).
For the remaining baselines (AdaLoRA, SoRA, dEBORA) we cite published results from their respective papers~\citep{ZhangAdaLoRA2023,DingSoRA2023,dEBORA2025}.

\subsection{SQuAD/SQuAD~v2 Training Details}\label{app:squad_experiments}

\paragraph{Model and tokenizer.}
We fine-tune Qwen/Qwen3-0.6B~\citep{Qwen3Report2025} with the standard \texttt{AutoModelForQuestionAnswering} span-prediction head.
The backbone has 28 transformer blocks with hidden size $1024$, FFN dim $3072$, and grouped-query attention with $16$ query heads and $8$ KV heads.
Inputs are tokenized at sequence length $384$ with doc-stride $128$, following the HuggingFace canonical preprocessing for both SQuAD~v1.1~\citep{Rajpurkar2016SQuAD} and SQuAD~v2~\citep{Rajpurkar2018SQuADv2}.

\paragraph{Adapter configuration.}
Both \adapad{} and the fixed-rank LoRA~\citep{HuLoRA2022} baseline adapt all seven projection matrices in every transformer block: $W_q$, $W_k$, $W_v$, $W_o$, $W_{\text{gate}}$, $W_{\text{up}}$, $W_{\text{down}}$, for a total of $7\times28{=}196$ adapted modules.
\adapad{}'s rank-1 components initialise both $\bfa_k$ and $\bfb_k$ vectors from $\mathcal{N}(0, 0.02)$.
We set $r_{\max}{=}8$ and allocate the remaining budget by EMA importance scoring with the top-$10$ modules per growth round.

\paragraph{Per-task hyperparameters.}
\Cref{tab:app_squad_hparams} lists the single \adapad{} recipe used across all four (task, $\bar r$) cells. Only the rank budget changes between cells, driven by the average-rank target; all other hyperparameters are shared.

\begin{table}[h]
\centering
\caption{\adapad{} hyperparameters on SQuAD/SQuAD~v2 (seed=43).
A single recipe is used for all four cells; only the rank budget $B$
changes between $\bar r{=}4$ and $\bar r{=}6$.}\label{tab:app_squad_hparams}
\small
\begin{tabular}{lcc}
\toprule
 & SQuAD~v1.1 & SQuAD~v2 \\
\midrule
Epochs                                & 3 & 3 \\
Batch size                            & 16 & 16 \\
Max sequence length                   & 384 & 384 \\
Doc-stride                            & 128 & 128 \\
Total optimiser steps                 & 16{,}650 & 24{,}777 \\
Learning rate                         & $1\!\times\!10^{-3}$ & $1\!\times\!10^{-3}$ \\
$\alpha$                              & 16 & 16 \\
Weight decay                          & 0.0 & 0.0 \\
Warmup steps                          & 1{,}000 & 1{,}000 \\
Grad clip                             & 1.0 & 1.0 \\
Top-$h$                               & 10 & 10 \\
$\Delta$ (batches per growth round)   & 400 & 400 \\
Orth reg coef                         & 0.0 & 0.0 \\
Adapter dropout                       & 0.0 & 0.0 \\
LR schedule                           & linear & linear \\
$r_{\max}$                            & 8 & 8 \\
Rank budget $B$ ($\bar r{=}4$ / $\bar r{=}6$)  & 784 / 1{,}176 & 784 / 1{,}176 \\
Optimiser                             & AdamW (bf16) & AdamW (bf16) \\
\bottomrule
\end{tabular}
\end{table}

\iffalse % commented per Barbara: parameter counts are already implicit in \cref{tab:causal_lm_main}
\paragraph{LoRA baseline.}
LoRA~\citep{HuLoRA2022} uses the standard PEFT recipe at uniform rank $r\in\{4,6\}$ on the same seven projection matrices: batch size $16$, sequence length $384$, learning rate $1\!\times\!10^{-3}$, $\alpha{=}r{\times}2$, AdamW (bf16), and a linear schedule with $1{,}000$ warmup steps over 3 epochs.
Trainable adapter parameter counts are $2{,}525{,}186$ at $r{=}4$ and $3{,}786{,}754$ at $r{=}6$.
\fi

% \paragraph{Reporting convention for \cref{tab:causal_lm_main}.}
% For each cell we record the per-module rank at every evaluation checkpoint and compute the deployed adapter parameter count as the sum of $d_{\text{in}}+d_{\text{out}}+1$ over committed components ($2{,}049$ for $W_k, W_v$; $3{,}073$ for $W_q, W_o$; $4{,}097$ for the FFN projections in Qwen3-0.6B).
% \adapad{}'s reported row is the earliest checkpoint at which (i) the deployed adapter has fewer parameters than the corresponding LoRA baseline and (ii) F1 matches or exceeds LoRA's. EM at the same step is reported as-is.

\subsection{Hardware and Software}

Unless otherwise stated, all experiments run on $4 \times$ NVIDIA H200 80\,GB GPUs connected by NVLink on a single node.
Software: Python~3.11, PyTorch~2.6, Transformers~5.2, CUDA~12.9.

\end{document}